\documentclass{article}

\PassOptionsToPackage{numbers, compress}{natbib}



     \usepackage[final]{neurips_2019}


\usepackage[utf8]{inputenc} 
\usepackage[T1]{fontenc}    
\usepackage{hyperref}       
\usepackage{url}            
\usepackage{booktabs}       
\usepackage{amsfonts}       
\usepackage{nicefrac}       
\usepackage{microtype}      
\usepackage{amsmath}
\usepackage{algorithm}
\usepackage{algorithmic}
\usepackage{amsthm}
\newtheorem{definition}{Definition}
\newtheorem{statement}{Statement}
\usepackage{mathtools}

\usepackage{amsmath}
\usepackage{amsfonts}
\usepackage{amssymb}
\usepackage{wrapfig}
\usepackage{subcaption}
\usepackage{multirow}
 \usepackage{mathtools} 

\usepackage{verbatim}

\usepackage{anyfontsize}

\usepackage{color}
\usepackage{tikz}
\usetikzlibrary{arrows,shapes,snakes,automata,backgrounds,fit,petri}
\usepackage{adjustbox}

\newcommand{\RN}[1]{%
	\textup{\lowercase\expandafter{\it \romannumeral#1}}%
}

\makeatletter
\newcommand{\distas}[1]{\mathbin{\overset{#1}{\kern\z@\sim}}}%

\usepackage{enumitem}

\usepackage{algorithm}
\usepackage{algorithmic}



\newcommand{\beq}{\vspace{0mm}\begin{equation}}
\newcommand{\eeq}{\vspace{0mm}\end{equation}}
\newcommand{\beqs}{\vspace{0mm}\begin{eqnarray}}
\newcommand{\eeqs}{\vspace{0mm}\end{eqnarray}}
\newcommand{\barr}{\begin{array}}
\newcommand{\earr}{\end{array}}

\newcommand{\Z}{\mathbb{Z}}

\newcommand{\Ncal}{\mathcal{N}}

\newtheorem{theorem}{Theorem} 
\newtheorem{lemma}{Lemma}

\DeclareMathOperator{\RR}{\mathbb{R}} 

\ifx\assumption\undefined

\fi

\ifx\definition\undefined

\fi

\ifx\remark\undefined

\fi
\newcommand{\norm}[1]{\left\| #1 \right\|}

\title{Poisson-Minibatching for Gibbs Sampling with Convergence Rate Guarantees}

\newcommand{\Exv}[1]{\mathbf{E}\left[#1\right]}
\newcommand{\Abs}[1]{\left|#1\right|}

%

\author{%
  Ruqi Zhang \\
  Cornell University\\
  \texttt{rz297@cornell.edu} \\
   \And
   Christopher De Sa\\
   Cornell University \\
   \texttt{cdesa@cs.cornell.edu} \\
}

\begin{document}

\maketitle

\begin{abstract}
Gibbs sampling is a Markov chain Monte Carlo method that is often used for learning and inference on graphical models.
Minibatching, in which a small random subset of the graph is used at each iteration, can help make Gibbs sampling scale to large graphical models by reducing its computational cost.
In this paper, we propose a new auxiliary-variable minibatched Gibbs sampling method, {\it Poisson-minibatching Gibbs}, which both produces unbiased samples and has a theoretical guarantee on its convergence rate. 
In comparison to previous minibatched Gibbs algorithms, Poisson-minibatching Gibbs supports fast sampling from continuous state spaces and avoids the need for a Metropolis-Hastings correction on discrete state spaces.
We demonstrate the effectiveness of our method on multiple applications and in comparison with both plain Gibbs and previous minibatched methods.
\end{abstract}

\section{Introduction}
Gibbs sampling is a Markov chain Monte Carlo (MCMC) method which is widely used for inference on graphical models~\cite{koller2009probabilistic}. 
Gibbs sampling works by iteratively resampling a variable from its conditional distribution with the remaining variables fixed.
Although Gibbs sampling is a powerful method, its utility can be limited by its computational cost when the model is large. One way to address this is to use \emph{stochastic methods}, which use a subsample of the dataset or model---called a minibatch---to approximate the dataset or model used in an MCMC algorithm.
Minibatched variants of many classical MCMC algorithms have been explored \cite{welling2011bayesian,maclaurin2014firefly,de2018minibatch,li2017mini}, including the MIN-Gibbs algorithm for Gibbs sampling~\cite{de2018minibatch}.

In this paper, we propose a new minibatched variant of Gibbs sampling on factor graphs called {\it Poisson-minibatching Gibbs} (Poisson-Gibbs).
Like other minibatched MCMC methods, Poisson-minibatching Gibbs improves Gibbs sampling by reducing its computational cost.
In comparison to prior work, our method improves upon MIN-Gibbs in two ways.
First, it eliminates the need for a potentially expensive Metropolis-Hastings (M-H) acceptance step, giving it a better asymptotic per-iteration time complexity than MIN-Gibbs.
Poisson-minibatching Gibbs is able to do this by choosing a minibatch in a way that depends on the current state of the variables, rather than choosing one that is independent of the current state as is usually done in stochastic algorithms.
We show that such state-dependent minibatches can still be sampled quickly, and that an appropriately chosen state-dependent minibatch can result in a reversible Markov chain with the correct stationary distribution even without a Metropolis-Hastings correction step.

The second way that our method improves upon previous work is that it supports sampling over continuous state spaces, which are common in machine learning applications (in comparison, the previous work only supported sampling over discrete state spaces).
The main difficulty here for Gibbs sampling is that resampling a continuous-valued variable from its conditional distribution requires sampling from a continuous distribution, and this is a nontrivial task (as compared with a discrete random variable, which can be sampled from by explicitly computing its probability mass function).
Our approach is based on fast inverse transform sampling method, which works by approximating the probability density function (PDF) of a distribution with a polynomial~\cite{olver2013fast}.

In addition to these two new capabilities, we prove bounds on the convergence rate of Poisson-minibatching Gibbs in comparison to plain (i.e. not minibatched) Gibbs sampling.
These bounds can provide a recipe for how to set the minibatch size in order to come close to the convergence rate of plain Gibbs sampling.
If we set the minibatch size in this way, we can derive expressions for the per-iteration computational cost of our method compared with others; these bounds are summarized in Table~\ref{tab:cost}.
In summary, the contributions of this paper are as follows:
\begin{itemize}
    \item We introduce Poisson-minibatching Gibbs, a variant of Gibbs sampling which can reduce computational cost without adding bias or needing a Metropolis-Hastings correction step.
    \item We extend our method to sample from continuous-valued distributions.
    \item We prove bounds on the convergence rate of our algorithm, as measured by the spectral gap, on both discrete and continuous state spaces.
    \item We evaluate Poisson-minibatching Gibbs empirically, and show that its performance can match that of plain Gibbs sampling while using less computation at each iteration. 
\end{itemize}

\begin{table}[t!]
\begin{center}
\begin{tabular}{l l l}
\toprule
{\bf State Space} &{\bf Algorithm} &{\bf Computational Cost/Iter} \\
\midrule 
Discrete & Gibbs sampling & $O(D\Delta)$\\
& MIN-Gibbs \cite{de2018minibatch} & $O(D\Psi^2)$\\
& MGPMH \cite{de2018minibatch} & $O(DL^2 + \Delta)$ \\
& DoubleMIN-Gibbs \cite{de2018minibatch} & $O(DL^2 + \Psi^2)$ \\
& Poisson-Gibbs & $O(DL^2)$\\
\midrule
 Continuous& Gibbs with rejection sampling & $O(N\Delta)$\\
& PGITS: Poisson-Gibbs with ITS & $O(L^3)$\\
& PGDA: Poisson-Gibbs with double approximation & $O(L^2\log L)$\\
 \bottomrule
\end{tabular}
\end{center}
\caption{Computational complexity cost for a single-iteration of Gibbs sampling.
Here, $N$ is the required number of steps in rejection sampling to accept a sample, and the rest of the parameters are defined in Section~\ref{sec:pre}.}
\label{tab:cost}
\end{table}

\subsection{Preliminaries and Definitions}\label{sec:pre}
In this section, we present some background about Gibbs sampling and graphical models and give the definitions which will be used throughout the paper.
In this paper, we consider Gibbs sampling on a factor graph~\cite{koller2009probabilistic}, a type of graphical model that defines a probability distribution in terms of its \emph{factors}.
Explicitly, a factor graph consists of a set of variables $\mathcal{V}$ (each of which can take on values in some set $\mathcal{X}$) and a set of factors $\Phi$, and it defines a probability distribution $\pi$ over a state space $\Omega = \mathcal{X}^{\mathcal{V}}$, where the probability of some $x \in \Omega$ is
\[
\textstyle
  \pi(x) = \frac{1}{Z} \cdot \exp\left( \sum_{\phi\in\Phi} \phi(x) \right) = \frac{1}{Z} \cdot \prod_{\phi\in\Phi} \exp\left( \phi(x) \right).
\]
Here, $Z$ denotes the scalar factor necessary for $\pi$ to be a distribution.
Equivalently, we can think of this as the \emph{Gibbs measure} with energy function
\[
\textstyle
  U(x) = \sum_{\phi\in\Phi} \phi(x),
  \hspace{2em}\text{where}\hspace{2em}
  \pi(x) \propto \exp(U(x));
\]
this formulation will prove to be useful in many of the derivations later in the paper.
(Here, the $\propto$ notation denotes that the expression on the left is a distribution that is proportional to the expression on the right with the appropriate constant of proportionality to make it a distribution.)
In a factor graph, the factors $\phi$ typically only depend on a subset of the variables; we can represent this as a bipartite graph where the nodesets are $\mathcal{V}$ and $\Phi$ and where we draw an edge between a variable $i \in \mathcal{V}$ and a factor $\phi \in \Phi$ if $\phi$ depends on $i$.
For simplicity, in this paper we assume that the variables are indexed with natural numbers $\mathcal{V} = \{1, \ldots, n\}$.
We denote the set of factors that depend on the $i$th variable, as
\[
A[i] = \{\phi | \phi \text{ depends on variable } i, \ \phi\in \Phi\}
\]

\begin{wrapfigure}{R}{0.45\textwidth}
\centering
\begin{minipage}{0.45\textwidth}
\vspace{-4mm}
\begin{algorithm}[H]
   \caption{Gibbs Sampling}
   \label{alg:gibbs}
\begin{algorithmic}
   \STATE {\bfseries Input:} initial point $x$
   \LOOP
   \STATE \textbf{sample} variable $i \sim \operatorname{Unif}\{1, \ldots, n\}$
   \FORALL{$v \in \mathcal{X}$}
      \STATE $x(i) \leftarrow v$
      \STATE $U_v \leftarrow \sum_{\phi\in A[i]} \phi(x)$
   \ENDFOR
   \STATE \textbf{construct distribution} $\rho$ where \[ \textstyle \rho(v) \propto \exp(U_v) \]
   \STATE \textbf{sample} $v$ from $\rho$
   \STATE \textbf{update} $x(i) \leftarrow v$
   \STATE \textbf{output sample} $x$
   \ENDLOOP
\end{algorithmic}
\end{algorithm}
\end{minipage}
\end{wrapfigure}
An important property of a factor graph is that the conditional distribution of a variable can be computed using only the factors that depend on that variable.
This lends to a particularly efficient implementation of Gibbs sampling, in which only these adjacent factors are used at each iteration (rather than needing to evaluate the whole energy function $U$): this is illustrated in Algorithm~\ref{alg:gibbs}.

The performance of our algorithm will depend on several parameters of the graphical model, which we will now restate, from previous work on MIN-Gibbs~\cite{de2018minibatch}.
If the variables take on discrete values, we let $D = | \mathcal{X} |$ denote the number of values each can take on.
We let $\Delta = \max_i |A[i]|$ denote the maximum degree of the graph.
We assume that the magnitudes of the factor functions are all bounded, and for any $\phi$ we let $M_{\phi}$ denote this bound
\[
    \textstyle
    M_{\phi} = \left(\sup_{x \in \Omega} \phi(x) \right) - \left(\inf_{x \in \Omega} \phi(x) \right).
\]
Without loss of generality (and as was done in previous works~\cite{de2018minibatch}), we will assume that $0 \le \phi(x) \le M_{\phi}$ because we can always add a constant to any factor $\phi$ without changing the distribution $\pi$.
We define the \emph{local maximum energy} $L$ and \emph{total maximum energy} $\Psi$ of the graph as bounds on the sum of $M_{\phi}$ over the set of the factors associated with a single variable $i$ and the whole graph, respectively,
\[
\textstyle
L = \max_{i\in\{1, 2, \dots, N\}}\sum_{\phi \in A[i]} M_{\phi}
\hspace{2em}\text{and}\hspace{2em}
\Psi = \sum_{\phi \in \Phi} M_{\phi}.
\]
If the graph is very large and has many low-energy factors, the maximum energy of a graph can be much smaller than the maximum degree of the graph.
All runtime analyses in this paper assume that evaluating a factor $\phi$ and sampling from a small discrete distribution can be done in constant time.

\section{Poisson-Minibatching Gibbs Sampling}
In this section, we will introduce the idea of Poisson-minibatching under the setting in which we assume we can sample from the conditional distribution of $x(i)$ exactly.
One such example is when the state space of $x$ is discrete.
We will consider how to sample from the conditional distribution when exact sampling is impossible in the next section.

In plain Gibbs sampling, we have to compute the sum over all the factors in $A[i]$ to get the energy in every step.
When the graph is large, the computation of getting the energy can be expensive; for example, in the discrete case this cost is proportional to $D \Delta$.
The main idea of Poisson-minibatching is to augment a desired distribution with extra Poisson random variables, which control how and whether a factor is used in the minibatch for a particular iteration. \citet{maclaurin2014firefly} used a similar idea to control whether a data point will be included in the minibatch or not with augmented Bernoulli variables.
However, this method has been shown to be very inefficient when only updating a small fraction of Bernoulli variables in each iteration \cite{quiroz2016block}.
Our method does not suffer from the same issue due to the usage of Poisson variables which we will explain further later in this section. 

We define the conditional distribution of additional variable $s_{\phi}$ for each factor $\phi$ as 
\[
    \textstyle
  s_{\phi}|x \sim \text{Poisson}\left(\frac{\lambda M_{\phi}}{L} + \phi(x)\right)
\]
where $\lambda>0$ is a hyperparameter that controls the minibatch size.
Then the joint distribution of variables $x$ and $s$, where $s$ is a variable vector including all $s_{\phi}$, is $\pi(x, s) = \pi(x) \cdot \mathbf{P}(s|x)$ and so
\begin{equation}\label{eq:pi}
  \pi(x, s)
  \propto
  \exp\left(
  \sum_{\phi\in\Phi} \left(s_{\phi} \log\left( 1 + \frac{L}{\lambda M_{\phi}}\phi(x) \right)
   + s_{\phi}\log\left(\frac{\lambda M_{\phi}}{L}\right) - \log\left(  s_{\phi}! \right)\right)\right)
\end{equation}
Using (\ref{eq:pi}) allows us to compute conditional distributions (of the variables $x_i$) using only a subset of the factors.
This is because the factor $\phi$ will not contribute to the energy unless $s_{\phi}$ is greater than zero. If many $s_{\phi}$ are zero, then we only need to compute the energy over a small set of factors. Since
\[
\textstyle
\mathbf{E}\left[ \left| \{ \phi \in A[i] \mid s_{\phi} > 0 \} \right| \right]
\le
\mathbf{E}\left[\sum_{\phi\in A[i]} s_{\phi} \right] = \sum_{\phi\in A[i]} \left(\frac{\lambda M_{\phi}}{L} + \phi(x)\right)\leq \lambda + L,
\]
this implies that $\lambda + L$ is an upper bound of the expected number of non-zero $s_{\phi}$. When the graph is very large and has many low-energy factors, $\lambda + L$ can be much smaller than the factor set size, in which case only a small set of factors will contribute to the energy while most factor terms will disappear because $s_{\phi}$ is zero. 

Using Poisson auxiliary variables has two benefits. 
First, compared with the Bernoulli auxiliary variables as described in FlyMC~\cite{maclaurin2014firefly}, there is a simple method for sampling $n$ Poisson random variables in total expected time proportional to the sum of their parameters, which can be much smaller than $n$ \cite{de2018minibatch}.
This means that sampling $n$ Poisson variables can be much more efficient than sampling $n$ Bernoulli variables, which allows our method to avoid any inefficiencies caused by sampling Bernoulli variables as in FlyMC.
Second, compared with a fixed-minibatch-size method such as the one used in \cite{welling2011bayesian}, Poisson-minibatching has the important property that the variables $s_{\phi}$ are independent. Whether a factor will be contained in the minibatch is independent to each other. This property is necessary for proving convergence rate theorems in the paper. 

In Poisson-Gibbs, we will sample from the joint distribution alternately.
At each iteration we can (1) first re-sample all the $s_{\phi}$, then (2) choose a variable index $i$ and re-sample $x(i)$.
Here, we can reduce the state back to only $x$, since the future distribution never depends on the current value of $s$. Essentially, we only bother to re-sample  the $s_{\phi}$ on which our eventual re-sampling of $x(i)$ depends: statistically, this is equivalent to re-sampling all $s_{\phi}$.
Doing this corresponds to Algorithm~\ref{alg:poisson-gibbs}.

However, minibatching by itself does not mean that the method must be more effective than plain Gibbs sampling.
It is possible that the convergence rate of the minibatched chain becomes much slower than the original rate, such that the total cost of the minibatch method is larger than that of the baseline method even if the cost of each step is smaller.
To rule out this undesirable situation, we prove that the convergence speed of our chain is not slowed down, or at least not too much, after applying minibatching.
To do this, we bound the convergence rate of our algorithm, as measured by the \emph{spectral gap}~\cite{levin2017markov}, which is the gap between the largest and second-largest eigenvalues of the chain's transition operator. 
This gap has been used previously to measure the convergence rate of minibatched MCMC~\cite{de2018minibatch}.

\begin{theorem}\label{thm:discrete}
Poisson-Gibbs (Algorithm~\ref{alg:poisson-gibbs}) is reversible and has a stationary distribution $\pi$.
Let $\bar{\gamma}$ denote its spectral gap, and let $\gamma$ denote the spectral gap of plain Gibbs sampling. If we use a minibatch size parameter $\lambda \ge 2 L$, then
\[
\bar{\gamma}
\ge
\exp\left( - \frac{4L^2}{\lambda} \right) \cdot\gamma.
\]
\end{theorem}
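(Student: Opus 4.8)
The plan is to prove the two assertions in turn: reversibility with stationary law $\pi$ via a detailed-balance identity on the marginal chain in $x$, and the spectral-gap bound via a Dirichlet-form comparison against plain Gibbs.

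For reversibility, I would view the Poisson-Gibbs update as a block step on the joint law $\pi(x,s)$ of~\eqref{eq:pi}: resample $\{s_\phi\}_{\phi\in A[i]}$ from $\mathbf{P}(s\mid x)$, then resample $x(i)$ from its conditional. Writing $x^{(v)}$ for $x$ with coordinate $i$ overwritten by $v$, $w=y(i)$, and $\rho_s(v)\propto\exp(a_v(s))$ with $a_v(s)=\sum_{\phi\in A[i]}s_\phi\log(1+\tfrac{L}{\lambda M_\phi}\phi(x^{(v)}))$, the induced marginal kernel is $\bar P(x,y)=\tfrac1n\,\mathbf{E}_{s\mid x}[\rho_s(w)]$. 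The key structural fact from~\eqref{eq:pi} is that $\pi(x,s)=\tfrac1{Z'}\exp(a_{x(i)}(s)+R(x_{-i},s))$, where $R$ and the normalizer $\sum_v e^{a_v(s)}$ of $\rho_s$ depend on the configuration only through $(x_{-i},s)$. Hence $\pi(x)\bar P(x,y)=\tfrac1n\sum_s\pi(x,s)\rho_s(w)$ equals $\tfrac1{nZ'}\sum_s e^{a_{x(i)}(s)+a_w(s)+R}/\sum_v e^{a_v(s)}$, which is manifestly symmetric under exchanging $x(i)\leftrightarrow w$; since that exchange turns the summand into $\pi(y,s)\rho_s(x(i))$, we obtain $\pi(x)\bar P(x,y)=\pi(y)\bar P(y,x)$, and stationarity of $\pi$ follows.

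For the gap, I would use the variational form $\bar\gamma=\min_f \bar{\mathcal E}(f,f)/\mathrm{Var}_\pi(f)$ with $\bar{\mathcal E}(f,f)=\tfrac12\sum_{x,y}\pi(x)\bar P(x,y)(f(x)-f(y))^2$, and the analogue for plain Gibbs. As both chains are reversible with respect to the same $\pi$, update a single coordinate per step (so $\bar P(x,y)=P(x,y)=0$ whenever $x,y$ differ in more than one coordinate) and share the $\tfrac1n$ index-selection factor, it suffices to prove the pointwise bound $\bar P(x,y)\ge\exp(-4L^2/\lambda)\,P(x,y)$ for all $x\neq y$; multiplying by the nonnegative weights $\pi(x)(f(x)-f(y))^2$ and summing gives $\bar{\mathcal E}\ge\exp(-4L^2/\lambda)\,\mathcal E$, hence the claim. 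Since $n\,P(x,y)=\rho(w)$ with $\rho(v)\propto\exp(U_i(v))$ and $U_i(v)=\sum_{\phi\in A[i]}\phi(x^{(v)})$, this reduces to showing $\mathbf{E}_{s\mid x}[\rho_s(w)]\ge\exp(-4L^2/\lambda)\,\rho(w)$.

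This last inequality is the crux and the step I expect to be hardest, since $\rho_s$ depends nonlinearly on the random minibatch $s$. I would write $\rho_s(w)=1/\sum_v e^{a_v(s)-a_w(s)}$ and apply Jensen to the convex map $t\mapsto1/t$, giving $\mathbf{E}_{s\mid x}[\rho_s(w)]\ge 1/\sum_v\mathbf{E}_{s\mid x}[e^{a_v(s)-a_w(s)}]$. Each expectation is then closed-form: by independence of the $s_\phi$ and the Poisson MGF $\mathbf{E}[e^{ts_\phi}]=\exp(\mu_\phi(e^t-1))$ with $\mu_\phi=\tfrac{\lambda M_\phi}{L}+\phi(x)$, the exponent collapses to $\sum_{\phi\in A[i]}(1+\tfrac{L}{\lambda M_\phi}\phi(x))\tfrac{\phi(x^{(v)})-\phi(x^{(w)})}{1+\frac{L}{\lambda M_\phi}\phi(x^{(w)})}$. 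Subtracting the plain-Gibbs exponent $\sum_\phi(\phi(x^{(v)})-\phi(x^{(w)}))$ leaves $\sum_\phi(\phi(x^{(v)})-\phi(x^{(w)}))\tfrac{\alpha_\phi(\phi(x)-\phi(x^{(w)}))}{1+\alpha_\phi\phi(x^{(w)})}$ with $\alpha_\phi=\tfrac{L}{\lambda M_\phi}$, and bounding the three factors by $M_\phi$, by $\alpha_\phi M_\phi=\tfrac{L}{\lambda}$, and by $1$, then summing with $\sum_{\phi\in A[i]}M_\phi\le L$, controls the whole discrepancy by $O(L^2/\lambda)$. This yields $\mathbf{E}_{s\mid x}[e^{a_v(s)-a_w(s)}]\le e^{O(L^2/\lambda)}e^{U_i(v)-U_i(w)}$, hence the desired multiplicative factor; the hypothesis $\lambda\ge 2L$ enters to keep $\tfrac{L}{\lambda M_\phi}\phi\le\tfrac12$ so the denominators and MGF arguments stay bounded, and the constant $4$ merely absorbs the slack in these elementary estimates.
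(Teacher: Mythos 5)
Your proposal is correct, and its skeleton is the same as the paper's: reversibility from the symmetry of the edge measure $\pi(x)\bar P(x,y)$, a Dirichlet-form comparison that reduces the spectral-gap bound to the pointwise domination $\bar P(x,y)\ge e^{-4L^2/\lambda}P(x,y)$ on single-coordinate moves, Jensen's inequality for $t\mapsto 1/t$, and a Poisson moment-generating-function computation. The genuine difference is in how the MGF step is parametrized. The paper first changes measure from the conditional law $s_\phi\mid x\sim\mathrm{Poisson}\bigl(\tfrac{\lambda M_\phi}{L}+\phi(x)\bigr)$ to state-independent variables $r_\phi\sim\mathrm{Poisson}\bigl(\tfrac{\lambda M_\phi}{L}\bigr)$, so its MGF is evaluated at the three-term argument $\log(1+\bar\phi(z_u))-\log(1+\bar\phi(x))-\log(1+\bar\phi(y))$, and controlling $e^t-1$ then requires the hypothesis $\lambda\ge 2L$ (to ensure $\bar\phi\le\tfrac12$) and costs the factor $4L^2/\lambda$. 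You instead keep the conditional law and evaluate its MGF at the two-term argument $\log\bigl(\tfrac{1+\alpha_\phi\phi(x^{(v)})}{1+\alpha_\phi\phi(x^{(w)})}\bigr)$; the identity $\bigl(\tfrac{\lambda M_\phi}{L}+\phi(x)\bigr)\alpha_\phi=1+\alpha_\phi\phi(x)$ makes the exponent collapse exactly, and the per-factor discrepancy from the plain-Gibbs exponent is bounded by $M_\phi\cdot\tfrac{L}{\lambda}\cdot 1$. This buys you a strictly sharper conclusion, $\bar\gamma\ge e^{-L^2/\lambda}\gamma$, and---contrary to your own closing remark---it nowhere uses $\lambda\ge 2L$: all three of your elementary bounds ($|\phi(x^{(v)})-\phi(x^{(w)})|\le M_\phi$, $\alpha_\phi M_\phi=\tfrac{L}{\lambda}$, and $1+\alpha_\phi\phi(x^{(w)})\ge 1$) hold for every $\lambda>0$. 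What the paper's looser bookkeeping buys in exchange is reuse: its three-term bound on $e^t-1$ is invoked verbatim inside the continuous-state proofs for PGITS and PGDA, where decoupling the Poisson randomness from the current state is convenient; your version is the cleaner and tighter proof of this particular theorem.
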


This theorem guarantees that the convergence rate of Poisson-Gibbs will not be slowed down by more than a factor of $\exp( - 4L^2 / \lambda )$. If we set $\lambda = \Theta(L^2)$, then this factor becomes $O(1)$, which is independent of the size of the problem. We proved Theorem~\ref{thm:discrete} and the other theorems in this paper using the technique of Dirichlet forms, which is a standard way of comparing the spectral gaps of two chains by comparing their transition probabilities (more details are in the supplemental material).

Next, we derive expressions for the overall computational cost of Algorithm~\ref{alg:poisson-gibbs}, supposing that we set $\lambda = \Theta(L^2)$ as suggested by Theorem~\ref{thm:discrete}.
First, we need to evaluate the cost of sampling all the Poisson-distributed $s_{\phi}$.
While a na\"ive approach to sample this would take $O(\Delta)$ time, we can do it substantially faster.
For brevity, and because much of the technique is already described in the previous work~\cite{de2018minibatch}, we defer an explicit analysis to the supplementary material, and just state the following.
\begin{statement}\label{stmt:samplingphi}
Sampling all the auxiliary variables $s_{\phi}$ for $\phi \in A[i]$ can be done in average time $O(\lambda + L)$, resulting in a sparse vector $s_{\phi}$.
\end{statement}
Now, to get an overall cost when assuming exact sampling from the conditional distribution, we consider discrete state spaces, in which we can sample from the conditional distribution of $x(i)$ exactly.
In this case, the cost of a single iteration of Poisson-Gibbs will be dominated by the loop over $v$. 
This loop will run $D$ times, and each iteration will take $O(|S|)$ time to run.
On average, this gives us an overall runtime $O((\lambda + L) \cdot D) = O(L^2 D)$ for Poisson-Gibbs.
Note that due to the fast way we sample Poisson variables, the cost of sampling Poisson variables is negligible compared to other costs. 

In comparison, the cost of the previous algorithms MIN-Gibbs, MGPMH and DoubleMIN-Gibbs~\cite{de2018minibatch} are all larger in big-$O$ than that of Poisson-Gibbs, as showed in Table~\ref{tab:cost}.
MGPMH and DoubleMIN-Gibbs need to conduct an M-H correction, which adds to the cost, and the cost of MIN-Gibbs and DoubleMIN-Gibbs depend on $\Psi$ which is a global statistic.
By contrast, our method does not need additional M-H step and is not dependent on global statistics. 
Thus the total cost of Gibbs sampling can be reduced more by Poisson-minibatching compared to the previous methods. 

\paragraph{Application of Poisson-Minibatching to Metropolis-Hastings.}
Poisson-minibatching method can be applied to other MCMC methods, not just Gibbs sampling. To illustrate the general applicability of Poisson-minibatching method, we applied Poisson-minibatching to Metropolis-Hasting sampling and call it \emph{Poisson-MH} (details of this algorithm and a demonstration on a mixture of Gaussians are given in the supplemental material).
We get the following convergence rate bound.
\begin{theorem}\label{thm:mh}
  Poisson-MH is reversible and has a stationary distribution $\pi$.
  If we let $\bar{\gamma}$ denote its spectral gap, and let $\bar{\gamma}$ denote the spectral gap of plain M-H sampling with the same proposal and target distributions, then
  \[
    \textstyle
    \bar{\gamma}
    \ge
    \frac{1}{2}\exp\left( - \frac{L^2}{\lambda + L}\right)\cdot\gamma.
  \]
\end{theorem}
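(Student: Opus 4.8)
The plan is to treat Poisson-MH as the composition of two moves that each preserve the augmented target $\pi(x,s)$ of Equation~(\ref{eq:pi}): first resample every $s_\phi \sim \text{Poisson}(\lambda M_\phi/L + \phi(x))$ from its exact conditional, then propose $x'$ from the base proposal $q(x\to x')$ and accept it with the Metropolis--Hastings ratio $\min(1, \tfrac{\pi(x',s)\,q(x'\to x)}{\pi(x,s)\,q(x\to x')})$ evaluated in the augmented space. Because the $x$--$s$ coupling in (\ref{eq:pi}) is $\sum_\phi s_\phi\log(1+\tfrac{L}{\lambda M_\phi}\phi(x))$, this ratio depends only on factors with $s_\phi>0$ and is thus a genuine minibatch quantity. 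To establish reversibility and the stationary distribution, I would marginalize out $s$. Using $\pi(x)\mathbf{P}(s|x)=\pi(x,s)$, the off-diagonal kernel on $x$ alone becomes
\[
  \pi(x)\bar P(x,x') = \sum_s \min\big(\pi(x,s)\,q(x\to x'),\ \pi(x',s)\,q(x'\to x)\big),
\]
which is manifestly symmetric in $(x,x')$; hence the reduced chain satisfies detailed balance with respect to the $x$-marginal $\pi$, giving both claims at once.

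For the rate bound I would use the standard Dirichlet-form comparison: since Poisson-MH and plain M-H share the stationary distribution $\pi$, it suffices to prove the pointwise transition inequality $\pi(x)\bar P(x,x') \ge c\,\pi(x)P(x,x')$ for all $x\ne x'$ with $c=\tfrac12\exp(-L^2/(\lambda+L))$, because this implies $\bar{\mathcal E}(f,f)\ge c\,\mathcal E(f,f)$ for every $f$ and therefore $\bar\gamma\ge c\gamma$. Here plain M-H has $\pi(x)P(x,x')=\min(\pi(x)\,q(x\to x'),\ \pi(x')\,q(x'\to x))$, so writing $\pi(x,s)=\pi(x)\mathbf{P}(s|x)$ and $\pi(x',s)=\pi(x')\mathbf{P}(s|x')$ reduces the goal to lower-bounding $\sum_s\min(\pi(x)\mathbf{P}(s|x)\,a,\ \pi(x')\mathbf{P}(s|x')\,b)$, with $a=q(x\to x')$ and $b=q(x'\to x)$, by $c\,\min(\pi(x)a,\pi(x')b)$.

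The key step is to separate the $x$-dependence from the $s$-dependence. I would apply the elementary inequality $\min(AC,BD)\ge\min(A,B)\min(C,D)$ for nonnegative reals to pull out $\min(\pi(x)a,\pi(x')b)$, leaving the \emph{overlap} $\sum_s\min(\mathbf{P}(s|x),\mathbf{P}(s|x'))$ of the two conditional product-Poisson laws. Both pieces of $c$ come from bounding this overlap. Cauchy--Schwarz gives $\alpha^2 = (\sum_s\sqrt{\mathbf{P}(s|x)\mathbf{P}(s|x')})^2 \le (\sum_s\min(\mathbf{P}(s|x),\mathbf{P}(s|x')))(\sum_s\max(\mathbf{P}(s|x),\mathbf{P}(s|x'))) \le 2\sum_s\min(\mathbf{P}(s|x),\mathbf{P}(s|x'))$, so the overlap is at least $\tfrac12\alpha^2$ where $\alpha$ is the Hellinger affinity---this is the source of the factor $\tfrac12$. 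Because the $s_\phi$ are independent, $\alpha$ factorizes over $\phi$, and the affinity of two Poissons with means $\mu,\mu'$ is $\exp(-\tfrac12(\sqrt\mu-\sqrt{\mu'})^2)$, so $\alpha^2=\exp(-\sum_\phi(\sqrt{\mu_\phi}-\sqrt{\mu'_\phi})^2)$ with $\mu_\phi=\lambda M_\phi/L+\phi(x)$ and $\mu'_\phi=\lambda M_\phi/L+\phi(x')$.

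It remains to bound the exponent, which is the main obstacle and the step that produces $L^2/(\lambda+L)$. Using $(\sqrt{\mu_\phi}-\sqrt{\mu'_\phi})^2\le(\phi(x)-\phi(x'))^2/(\mu_\phi+\mu'_\phi)$ together with $\mu_\phi+\mu'_\phi\ge \lambda M_\phi/L+\max(\phi(x),\phi(x'))\ge \lambda M_\phi/L + |\phi(x)-\phi(x')|$, each term is at most $d^2/(\lambda M_\phi/L + d)$ with $d=|\phi(x)-\phi(x')|\le M_\phi$; since this is increasing in $d$ it is maximized at $d=M_\phi$, giving the per-factor bound $M_\phi L/(\lambda+L)$. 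Summing over the factors affected by the move and invoking $\sum_{\phi\in A[i]}M_\phi\le L$ yields $\sum_\phi(\sqrt{\mu_\phi}-\sqrt{\mu'_\phi})^2\le L^2/(\lambda+L)$, hence $\alpha^2\ge\exp(-L^2/(\lambda+L))$ and $c=\tfrac12\exp(-L^2/(\lambda+L))$. The delicate points to verify are (i) that the per-factor maximization is attained at $|\phi(x)-\phi(x')|=M_\phi$, so the bound $M_\phi L/(\lambda+L)$ is valid, and (ii) that only the factors local to the proposed update contribute, so their summed bounded energy is controlled by the local maximum energy $L$ rather than a global quantity; the reversibility argument, by contrast, is essentially forced once the symmetric $\min$-form above is written down.
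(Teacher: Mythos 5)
Your proof is correct and, while your reversibility argument and the closing Dirichlet-form comparison coincide with the paper's, the core quantitative step takes a genuinely different route. The paper bounds the ratio of kernels directly: it applies Jensen's inequality to $1/x$ to move the expectation over the Poisson auxiliaries inside, splits the resulting bound into per-factor Poisson moment-generating functions evaluated at $t=-\log(1+\tfrac{L}{\lambda M_\phi}\phi(x))$, uses $\max(A,B)\le A+B$ to produce the factor $2$, and maximizes each exponent over $\phi(x)\in[0,M_\phi]$, arriving at the same per-factor quantity $\tfrac{L M_\phi}{\lambda+L}$ that you obtain. You instead factor the minimum via $\min(AC,BD)\ge\min(A,B)\min(C,D)$, which isolates the total-variation overlap $\sum_s\min(\mathbf{P}(s|x),\mathbf{P}(s|x'))$ of the two conditional product-Poisson laws, lower bound that overlap by half the squared Hellinger affinity via Cauchy--Schwarz, and evaluate the product-Poisson affinity in closed form. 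Your route buys two things. First, modularity: it exhibits the cost of state-dependent minibatching as a statistical overlap between the auxiliary-variable laws at the two states, a lemma reusable for any auxiliary-variable M-H scheme. Second, it produces the single factor $\tfrac12$ cleanly; the paper's derivation as literally written applies $\max(A,B)\le A+B$ per factor inside a product over $\Phi$, which would compound to $2^{-|\Phi|}$ and needs its steps reordered (apply $\max(A,B)\le A+B$ once at the global level, then split the two expectations across factors by independence) to recover $\tfrac12$, whereas your Cauchy--Schwarz step has no such issue. The caveat you flag in point (ii) is real but is shared with the paper: the exponent is $\tfrac{L}{\lambda+L}\sum_\phi M_\phi$ summed over the factors whose value changes under the proposal, and writing this as $\tfrac{L^2}{\lambda+L}$ uses the convention that this sum is at most $L$; since a generic M-H proposal moves the whole state, this amounts to the paper's implicit identification of $L$ with the total energy bound $\sum_{\phi\in\Phi}M_\phi$ in the M-H setting, so your bound is exactly as strong as theirs under the same reading.
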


\section{Poisson-Gibbs on Continuous State Spaces}
In this section, we consider how to sample from a continuous conditional distribution, i.e. when $\mathcal{X} = [a,b] \subset \mathbb{R}$, without sacrificing the benefits of Poisson-minibatching. 
The main difficulty is that sampling from an arbitrary continuous conditional distribution is not trivial in the same way as sampling from an arbitrary discrete conditional distribution is.
Some additional sampling method is required. In principle, we can combine any sampling method with Poisson-minibatching, such as rejection sampling which is commonly used in Gibbs sampling.
However, rejection sampling needs to evaluate the energy multiple times per sample, so even if we reduce the cost of evaluating the energy by minibatching, the total cost can still be large, besides which there is no good guarantee on the convergence rate of rejection sampling. 

In order to sample from the conditional distribution efficiently, we propose a new sampling method based on inverse transform sampling (ITS) method. The main idea is to approximate the continuous distribution with a polynomial; this requires only a number of energy function evaluations proportional to the degree of the polynomial. We provide overall cost and theoretical analysis of convergence rate for our method.
\begin{figure}[t]
\begin{minipage}[t]{6.4cm}
\begin{algorithm}[H]
  \caption{Poisson-Gibbs}
  \begin{algorithmic}
    \label{alg:poisson-gibbs}
    \STATE \textbf{given:} initial state $x \in \Omega$
    \LOOP
      \STATE \textbf{sample} variable $i \sim \operatorname{Unif}\{1, \ldots, n\}$.
      \FORALL{$\phi$ \textbf{in} $A[i]$}
        \STATE \textbf{sample}  $s_{\phi} \sim \text{Poisson}\left( \frac{\lambda M_{\phi}}{L} + \phi(x) \right)$
      \ENDFOR
      \STATE $S \leftarrow \{ \phi | s_{\phi} > 0 \}$
   \FORALL{$v \in \mathcal{X}$}
      \STATE $x(i) \leftarrow v$
      \STATE $U_v \leftarrow \sum_{\phi \in S} s_{\phi} \log\left( 1 + \frac{L}{\lambda M_{\phi}}\phi(x) \right)$
   \ENDFOR
      \STATE \textbf{construct distribution} $\rho$ where
      \[
        \rho(v) \propto \exp(U_v)
      \]
      \STATE \textbf{sample} $v$ from $\rho$
      \STATE \textbf{update} $x(i) \leftarrow v$
      \STATE \textbf{output sample} $x$
    \ENDLOOP
  \end{algorithmic}
\end{algorithm}
\end{minipage}
\hfill
\begin{minipage}[t]{7.3cm}
\begin{algorithm}[H]
  \caption{PGDA: Poisson-Gibbs Double Chebyshev Approximation}
  \begin{algorithmic}
    \label{alg:PGDA}
    \STATE \textbf{given:} state $x \in \Omega$, degree $m$ and $k$, domain $[a,b]$
    \LOOP
      \STATE \textbf{set} $i$, $s_{\phi}$, $S$, and $U$ as in Algorithm~\ref{alg:poisson-gibbs}.
      \STATE \textbf{construct} degree-$m$ Chebyshev polynomial approximation of energy $U_v$ on $[a,b]$: $\tilde{U}_v$
      \STATE \textbf{construct} degree-$k$ Chebyshev polynomial approximation:$\tilde{f}(v) \approx \exp(\tilde U_v)$
      \STATE \textbf{compute} the CDF polynomial
      \[
        \tilde{F}(v) = \left( \int_a^b \tilde f(y) \; dy \right)^{-1} \int_a^v \tilde f(y) \; dy 
      \]
      \STATE \textbf{sample} $u \sim \operatorname{Unif}[0,1]$.
      \STATE \textbf{solve} root-finding problem for $v$: $\tilde{F}(v) = u$
       \STATE $\triangleright$ Metropolis-Hastings correction:
    \STATE $p \leftarrow \frac{\exp(U_{v})\tilde{f}(x(i))}{\exp(U_{x(i)})\tilde{f}(v)}$
    \STATE \textbf{with probability} $\min(1,p)$, set $x(i) \leftarrow v$ \STATE \textbf{output sample} $x$
    \ENDLOOP
  \end{algorithmic}
\end{algorithm}
\end{minipage}
\end{figure}

\paragraph{Poisson-Gibbs with Double Chebyshev Approximation.}
Inverse transform sampling is a classical method that generates samples from a uniform distribution and then transforms them by the inverse of cumulative distribution function (CDF) of the desired distribution. 
Since the CDF is often intractable in practice, Fast Inverse Transform Sampling (FITS)~\cite{olver2013fast} uses a Chebyshev polynomial approximation to estimate the PDF fast and then get the CDF by computing an integral of a polynomial. Inspired by FITS, we propose Poisson-Gibbs with double Chebyshev approximation (PGDA). 

The main idea of double Chebyshev approximation is to approximate the energy function first and then the PDF by using Chebyshev approximation \emph{twice}. Specifically, we first get a polynomial approximation to the energy function $U$ on $[a,b]$, denoted by $\tilde{U}$, the \emph{Chebyshev interpolant}~\cite{trefethen2013approximation}
\begin{align}\label{eq:energy}
\tilde{U}(x) = \sum_{k=0}^m \alpha_k T_k\left(\frac{2(x - a)}{b-a} - 1\right),\ \alpha_k\in \RR,\ x\in[a, b],
\end{align}
where $T_k (x) = \cos(k \cos^{-1} x)$ is the degree-$k$ Chebyshev polynomial.
Although the domain is continuous, we only need to evaluate $U$ on $m+1$ Chebyshev nodes to construct the interpolant, and the expansion coefficients $\alpha_k$ can be computed stably in $O(m \log m)$ time. 
The following theorem shows that the error of a Chebyshev approximation can be made arbitrarily small with large $m$.
(Although stated for the case of $[a,b] = [-1,1]$, it easily generalizes to arbitrary $[a,b]$.)

\begin{theorem}[Theorem 8.2 from~\citet{trefethen2013approximation}]
\label{thm:cheby}
Assume $U$ is analytic in the
open Bernstein ellipse $B([-1, 1], \rho)$, where the Bernstein ellipse is a region in the complex plane bounded by an ellipse with foci at $\pm 1$ and semimajor-plus-semiminor axis length $\rho > 1$. If for all $x \in B([-1, 1], \rho)$, $|U(x)| \leq V$ for some
constant $V > 0$, the error of the Chebyshev interpolant on $[-1,1]$ is bounded by
\begin{align*}
| \tilde{U}(x) - U(x) | \leq \delta_m
\hspace{2em}\text{where}\hspace{2em}
\delta_m = \frac{4V\rho^{-m}}{\rho - 1}.
\end{align*}
\end{theorem}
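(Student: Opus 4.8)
The plan is to follow the classical route of transplanting the problem to the unit circle via the Joukowski map, where the Chebyshev machinery becomes ordinary Laurent/Fourier analysis, and then to control the interpolation error through the decay of the Chebyshev coefficients together with an aliasing identity.

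First I would introduce the substitution $x = \tfrac{1}{2}(z + z^{-1})$, under which $T_k(x) = \tfrac12(z^k + z^{-k})$ and the circle $|z| = r$ maps onto the Bernstein ellipse whose semimajor-plus-semiminor axis length equals $r$. Hence the hypothesis that $U$ is analytic and bounded by $V$ on $B([-1,1],\rho)$ transplants to the statement that $g(z) := U\!\left(\tfrac12(z+z^{-1})\right)$ is analytic on the annulus $\rho^{-1} < |z| < \rho$ and satisfies $|g(z)| \le V$ there. Writing the Chebyshev expansion $U(x) = \sum_{k=0}^\infty a_k T_k(x)$ as a symmetric Laurent series in $z$, the coefficient $a_k$ is, up to the factor $2$ coming from $T_k = \tfrac12(z^k+z^{-k})$, a Laurent coefficient of $g$. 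Representing it as a contour integral over $|z| = r$ with $1 < r < \rho$ and bounding by $|g| \le V$ gives $|a_k| \le 2V r^{-k}$; letting $r \uparrow \rho$ yields the coefficient-decay bound $|a_k| \le 2V \rho^{-k}$.

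Next I would pass from coefficient decay to interpolation error using the \emph{aliasing} phenomenon: evaluated on the $m+1$ Chebyshev nodes, each $T_j$ with $j > m$ is indistinguishable from some $T_k$ with $0 \le k \le m$, so the degree-$m$ interpolant $\tilde U$ has coefficients obtained by folding the tail $\{a_k\}_{k>m}$ onto the low-order coefficients. Subtracting, every tail term contributes to $U - \tilde U$ at most $2|a_k|$ in magnitude because $|T_k(x)| \le 1$ on $[-1,1]$, which yields the uniform bound
\[
|\tilde U(x) - U(x)| \;\le\; 2\sum_{k=m+1}^{\infty} |a_k|.
\]
Finally, inserting $|a_k| \le 2V\rho^{-k}$ and summing the geometric series,
\[
2\sum_{k=m+1}^{\infty} 2V\rho^{-k} \;=\; 4V\,\frac{\rho^{-(m+1)}}{1-\rho^{-1}} \;=\; \frac{4V\rho^{-m}}{\rho-1} \;=\; \delta_m,
\]
which is the claimed estimate.

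I expect the main obstacle to be making the passage from coefficient decay to interpolation error fully rigorous — that is, justifying the aliasing identity and the resulting tail bound $2\sum_{k>m}|a_k|$, rather than settling for the weaker truncation bound one gets for the Chebyshev \emph{projection}. A secondary technical point is the contour deformation: since analyticity is only assumed on the open ellipse, the bound $|a_k| \le 2V\rho^{-k}$ must be obtained by integrating on $|z|=r$ for $r < \rho$ and taking the limit $r \uparrow \rho$, rather than integrating directly on the boundary.
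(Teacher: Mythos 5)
Your proposal is correct and follows essentially the same route as the paper: the paper's appendix proof of its ellipse-generalized version (Lemma~\ref{thm:chebyext}, stated there to be ``essentially identical'' to Trefethen's proof of Theorem 8.2) uses exactly your three ingredients --- the coefficient-decay bound $|a_k| \le 2V\rho^{-k}$ (cited as Trefethen's Theorem 8.1, which you instead derive directly via the Joukowski map and a contour integral), the aliasing identity $U - \tilde{U}_m = \sum_{k>m} a_k\bigl(T_k - T_{l(k,m)}\bigr)$ (Trefethen's formula (4.9)), and summation of the resulting geometric series. The only difference is cosmetic: you prove the coefficient bound from scratch rather than citing it, and you work on $[-1,1]$ (where $|T_k|\le 1$) instead of the sub-ellipse (where $|T_k|\le \sigma^k$), which is precisely the specialization the stated theorem requires.
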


After getting the approximation of the energy, we can get the PDF by $\exp(\tilde{U})$. However, it is generally hard to get the CDF now since the integral of $\exp(\tilde{U})$ for polynomial $\tilde U$ is usually intractable.
So, we use \emph{another} Chebyshev approximation $\tilde{f}$ to estimate $\exp(\tilde{U})$.
Constructing the second Chebyshev approximation requires no additional evaluations of energy functions; its total computational cost is $\tilde O(mk)$ because we need to evaluate a degree-$m$ polynomial $k$ times to compute the coefficients.
After doing this, we are able to compute the CDF directly since it is the integral of a polynomial. With the CDF $\tilde{F}(x)$ in hand, inverse transform sampling is used to generate samples. First, a pseudo-random sample $u$ is generated from the uniform distribution on $[0,1]$, and then we solve the following root-finding problem for $x$: $\tilde{F}(x) = u$.
Since $\tilde{F}(x)$ is a polynomial, this root-finding problem can be solved by many standard methods. We use bisection method to ensure the robustness of the algorithm~\cite{olver2013fast}. 

Importantly, the sample we get here is actually from an \emph{approximation} of the CDF.
To correct the error introduced by the polynomial approximation, we add a M-H correction as the final step to make sure the samples come from the target distribution. Our algorithm is given in Algorithm~\ref{alg:PGDA}. As before, we prove a bound on PGDA in terms of the spectral gap, given the additional assumption that the factors $\phi$ are analytic.

\begin{theorem}\label{thm:doupoly}
  PGDA (Algorithm~\ref{alg:PGDA}) is reversible and has a stationary distribution $\pi$.
  Let $\bar{\gamma}$ denote its spectral gap, and let $\gamma$ denote the spectral gap of plain Gibbs sampling. Assume $\rho > 1$ is some constant such that every factor function $\phi$, treated as a function of any single variable $x_i$, must
   be analytically continuable to the Bernstein ellipse with radius parameter $\rho$ shifted-and-scaled so that its foci are at $a$ and $b$, such that it satisfies $| \phi(z) | \le M_{\phi}$ anywhere in that ellipse.
  Then, if $\lambda \log(2) \ge 4 L$, and if $m$ is set large enough that $4 \rho^{-m/2} \le \sqrt{\rho} - 1$, then it will hold that
  \begin{align*}
     \bar{\gamma}
    &\geq
    \left(1 - 4\sqrt{\digamma}\right)\exp\left(\frac{-4L^2}{\lambda}\right)\cdot \gamma,
    \hspace{1em}\text{where}\hspace{1em}
    \digamma
    =
    \frac{4\cdot \exp\left(8 L \right)\cdot\rho^{-\frac{k}{2}}}{\sqrt{\rho} - 1}
  + \exp\left( \frac{16 L \cdot \rho^{-\frac{m}{2}}}{\sqrt{\rho} - 1} \right) - 1.
    \end{align*}
\end{theorem}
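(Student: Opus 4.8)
The plan is to establish the two assertions in turn---reversibility with stationary distribution $\pi$, then the spectral-gap bound by Dirichlet-form comparison. For the first, I would read PGDA as a two-block \emph{Metropolis-within-Gibbs} sampler on the augmented space $(x,s)$ whose target is the joint law $\pi(x,s)$ of~(\ref{eq:pi}). Resampling $s_\phi$ for $\phi\in A[i]$ given $x$ is an exact conditional update of the $s$-block and preserves $\pi(x,s)$; the update of $x(i)$ is an M--H step that proposes from the density proportional to $\tilde f$ but whose acceptance ratio $p$ is exactly the one making it reversible with respect to the \emph{true} minibatched conditional $\pi(x(i)\mid s,x_{-i})\propto\exp(U_{x(i)})$ from Algorithm~\ref{alg:poisson-gibbs}. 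Composing the exact $s$-update with this reversible M--H kernel preserves $\pi(x,s)$; marginalizing out $s$---valid because the future never depends on the current $s$, as already argued for Algorithm~\ref{alg:poisson-gibbs}---gives reversibility of the $x$-chain with stationary distribution $\pi$.

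For the spectral gap, since PGDA and plain Gibbs share the stationary distribution $\pi$, it suffices to lower-bound the off-diagonal kernel, $\pi(x)\,\bar P(x,dy)\ge c\,\pi(x)\,P_{\mathrm{Gibbs}}(x,dy)$ for $y$ differing from $x$ in a single coordinate, which transfers to the Dirichlet forms and hence gives $\bar\gamma\ge c\,\gamma$. I would obtain $c$ as a product of two factors by inserting the \emph{exact} Poisson-Gibbs chain as an intermediate, proving $\bar P\ge(1-4\sqrt{\digamma})\,P_{\mathrm{PG}}$ and $P_{\mathrm{PG}}\ge\exp(-4L^2/\lambda)\,P_{\mathrm{Gibbs}}$. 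The second inequality is the continuous analogue of Theorem~\ref{thm:discrete}, driven by the Poisson moment generating function: writing $\ell_\phi(v)=\log(1+\tfrac{L}{\lambda M_\phi}\phi)$ at $x(i)=v$, one has $\exp(\ell_\phi(v))-1=\tfrac{L}{\lambda M_\phi}\phi$, so $\mathbf E_{s\mid x}[\exp(U_v^{(s)})]=\exp(\sum_{\phi\in A[i]}\phi)\cdot\exp(\epsilon_v)$ with correction $0\le\epsilon_v=\sum_{\phi\in A[i]}\tfrac{L\phi(x)\phi}{\lambda M_\phi}\le L^2/\lambda$; the factor $4$ and the lower bound on $\lambda$ come from combining this numerator correction with the matching control of the random normalizer $Z_s=\int\exp(U_w^{(s)})\,dw$.

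The first inequality, carrying the factor $1-4\sqrt{\digamma}$, is the heart of the argument: I would lower-bound $\tilde q(v)\min(1,p(v))$ against the exact conditional by controlling a \emph{relative} (multiplicative) error of $\tilde f(v)$ versus $\exp(U_v)$. This I would get from Theorem~\ref{thm:cheby} applied twice---first to bound $|\tilde U_v-U_v|$ (the $\rho^{-m}$ contribution), using that $|\phi(z)|\le M_\phi$ in the ellipse together with $\tfrac{L}{\lambda}$ being small enough (guaranteed by $\lambda\log 2\ge 4L$) that $1+\tfrac{L}{\lambda M_\phi}\phi(z)$ stays in the domain of $\log$, which makes the minibatched energy analytic and bounded by an $O(L)$ constant (here one uses that the total minibatch weight $\sum_\phi s_\phi$ has expectation at most $\lambda+L$, yielding the explicit $8L,16L$); and second to bound $|\tilde f-\exp(\tilde U_v)|$ (the $\rho^{-k}$ contribution), working in the shrunken Bernstein ellipse of radius $\sqrt\rho$ so that $|\exp(\tilde U_v(z))|$ stays bounded. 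This shrinkage is what produces the $\rho^{-m/2}$, $\rho^{-k/2}$ and $\sqrt\rho-1$ in $\digamma$, and the hypothesis $4\rho^{-m/2}\le\sqrt\rho-1$ is exactly what keeps the first interpolant accurate enough to feed the exponential in the second. The outer square root in $1-4\sqrt{\digamma}$ I expect to come from the Dirichlet-form comparison itself, where a Cauchy--Schwarz step absorbs the state-dependence of the per-transition error against the variance of the test function.

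The step I expect to be the main obstacle is the coupling inside $\mathbf E_{s\mid x}[\exp(U_v^{(s)})/Z_s]$: because numerator and normalizer share the same random $s$, the clean MGF identity does not apply to the ratio, so I would instead argue on the augmented chain and control numerator and denominator jointly to recover $\exp(-4L^2/\lambda)$. The secondary difficulty is converting the two \emph{uniform} Chebyshev bounds into a genuine lower bound on $\tilde q(v)\min(1,p(v))$ valid for \emph{every} proposed $v$: since the relative error of $\exp(\tilde U)$ against $\exp(U)$ amplifies sup-norm energy errors exponentially, it is precisely the hypotheses forcing $\tfrac{L}{\lambda}$ and $\rho^{-m/2}$ to be small that keep this amplification, and hence $\digamma$, under control.
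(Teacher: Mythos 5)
Your overall skeleton matches the paper's proof: reversibility follows because the M--H correction makes the $x(i)$-update exact with respect to the minibatched conditional, and the spectral-gap bound comes from a pointwise comparison $\pi(x)T(x,y)\ge c\,\pi(x)G(x,y)$ transferred through Dirichlet forms, with the Poisson/multinomial MGF computation of Theorem~\ref{thm:discrete} supplying $\exp(-4L^2/\lambda)$ and a double Chebyshev estimate on a $\sqrt{\rho}$-ellipse supplying $\digamma$. However, the two mechanisms you propose at the crux are not the paper's, and neither is developed enough to close the argument. First, the paper does not factor through the exact Poisson--Gibbs kernel, and the square root in $1-4\sqrt{\digamma}$ does not come from a Cauchy--Schwarz step in the Dirichlet-form comparison (that step is the plain pointwise comparison). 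The correlation you correctly flag as the main obstacle---the random relative error $E$ of $\tilde f$ versus $\exp(U_v)$ and the transition ratio both depend on the same minibatch, through $B=\sum_{\phi\in A[i]}s_\phi$---is resolved in the paper by a conditioning device: restrict the outer expectation to the event $\{E\le\delta\}$ at cost $\frac{1-\delta}{1+\delta}\,\mathbf{P}(E\le\delta)$, apply Jensen to the reciprocal, then remove the conditioning from the inner MGF expectation using $\mathbf{E}[X\mid A]\le\mathbf{E}[X]/\mathbf{P}(A)$, which costs another factor of $\mathbf{P}(E\le\delta)$ and leaves exactly the MGF bound already computed for Theorem~\ref{thm:discrete}. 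Then Markov's inequality gives $\mathbf{P}(E>\delta)\le\mathbf{E}_B[E]/\delta\le\digamma/\delta$ (this is precisely where $\lambda\log 2\ge 4L$ and $4\rho^{-m/2}\le\sqrt{\rho}-1$ are used, to bound $\mathbf{E}_B[E]\le\digamma$ via the MGF of $B$), and the choice $\delta=\sqrt{\digamma}$ turns $1-2\delta-2\digamma/\delta$ into $1-4\sqrt{\digamma}$. Your alternative---proving $\bar P\ge(1-4\sqrt{\digamma})P_{\mathrm{PG}}$ and invoking Cauchy--Schwarz against the test function---would require showing $\mathbf{E}[R\,\mathbf{1}_{E>\delta}]$ is small relative to $\mathbf{E}[R]$, where $R$ is the exact minibatched ratio; Cauchy--Schwarz pays $\sqrt{\mathbf{E}[R^2]}$, which is only controllable up to $e^{\Theta(L)}$ factors, so as sketched it neither recovers the stated constant nor resolves the coupling you yourself identify as unresolved.

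Second, ``Theorem~\ref{thm:cheby} applied twice'' is not sufficient. Theorem~\ref{thm:cheby} bounds the interpolation error only on the real interval, but to run the second interpolation you must bound $\left|\exp(\tilde U_v(z))\right|$ for $z$ in the $\sqrt{\rho}$-Bernstein ellipse, and hence you need $\left|\tilde U_v(z)-U_v(z)\right|$ \emph{on that ellipse}, not just on $[a,b]$. The paper proves a separate lemma (an extension of Trefethen's Theorem 8.2) showing that the degree-$m$ interpolant of a function analytic and bounded by $V$ on the $\rho$-ellipse has error at most $\frac{4V}{\rho/\sigma-1}\left(\frac{\sigma}{\rho}\right)^{m}$ on the smaller $\sigma$-ellipse; taking $\sigma=\sqrt{\rho}$ is what produces the halved exponents $\rho^{-m/2}$, $\rho^{-k/2}$ and the $\sqrt{\rho}-1$ denominators in $\digamma$. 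Without this lemma (or an equivalent coefficient-decay argument), your chain of estimates breaks exactly at the hand-off between the two interpolations.
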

Similar to Theorem~\ref{thm:discrete}, this theorem implies that the convergence rate of PGDA can be slowed down by at most a constant factor relative to plain Gibbs. If we set $m = \Theta(\log L)$, $k = \Theta(L)$ and $\lambda = \Theta(L^2)$, then the ratio of the spectral gaps will also be $O(1)$, which is independent of the problem parameters.
Note that it is possible to combine FITS with Poisson-Gibbs directly (i.e. use only one polynomial approximation to estimate the PDF directly), and we call this method {\it Poisson-Gibbs with fast inverse transform sampling} (PGITS). It turns out that PGDA is more efficient than PGITS since PGDA requires fewer evaluations of $U$ to achieve the same convergence rate.
If we set the parameters as above, the total computational cost of PGDA is $O(m\cdot (\lambda + L) + m\cdot k) = O(\log L \cdot (L^2 + L)) = O(\log L\cdot L^2)$. On the other hand, the cost of PGITS to achieve the same constant-factor spectral gap ratio is $O(L^3)$. A derivation of this is given in the supplemental material.

\section{Experiments}
We demonstrate our methods on three tasks including Potts models, continuous spin models and truncated Gaussian mixture in comparison with plain Gibbs sampling and previous minibatched Gibbs sampling. We release the code at \url{https://github.com/ruqizhang/poisson-gibbs}.
\subsection{Potts Models}\label{sec:ising}
\begin{figure*}[t!]
    \centering
    \begin{tabular}{cccc}		
    	\includegraphics[width=4.2cm]{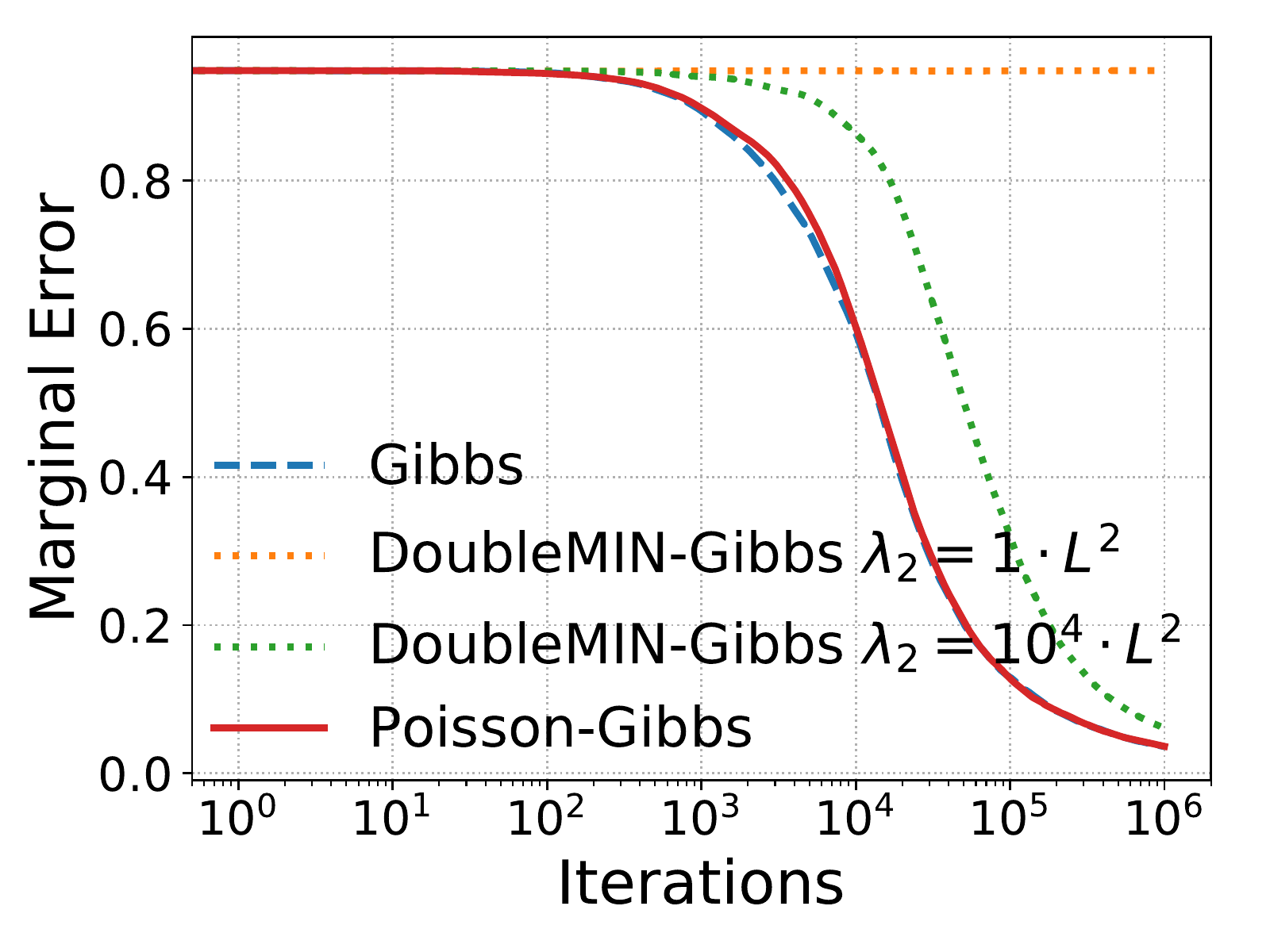}  &
    	\includegraphics[width=4.2cm]{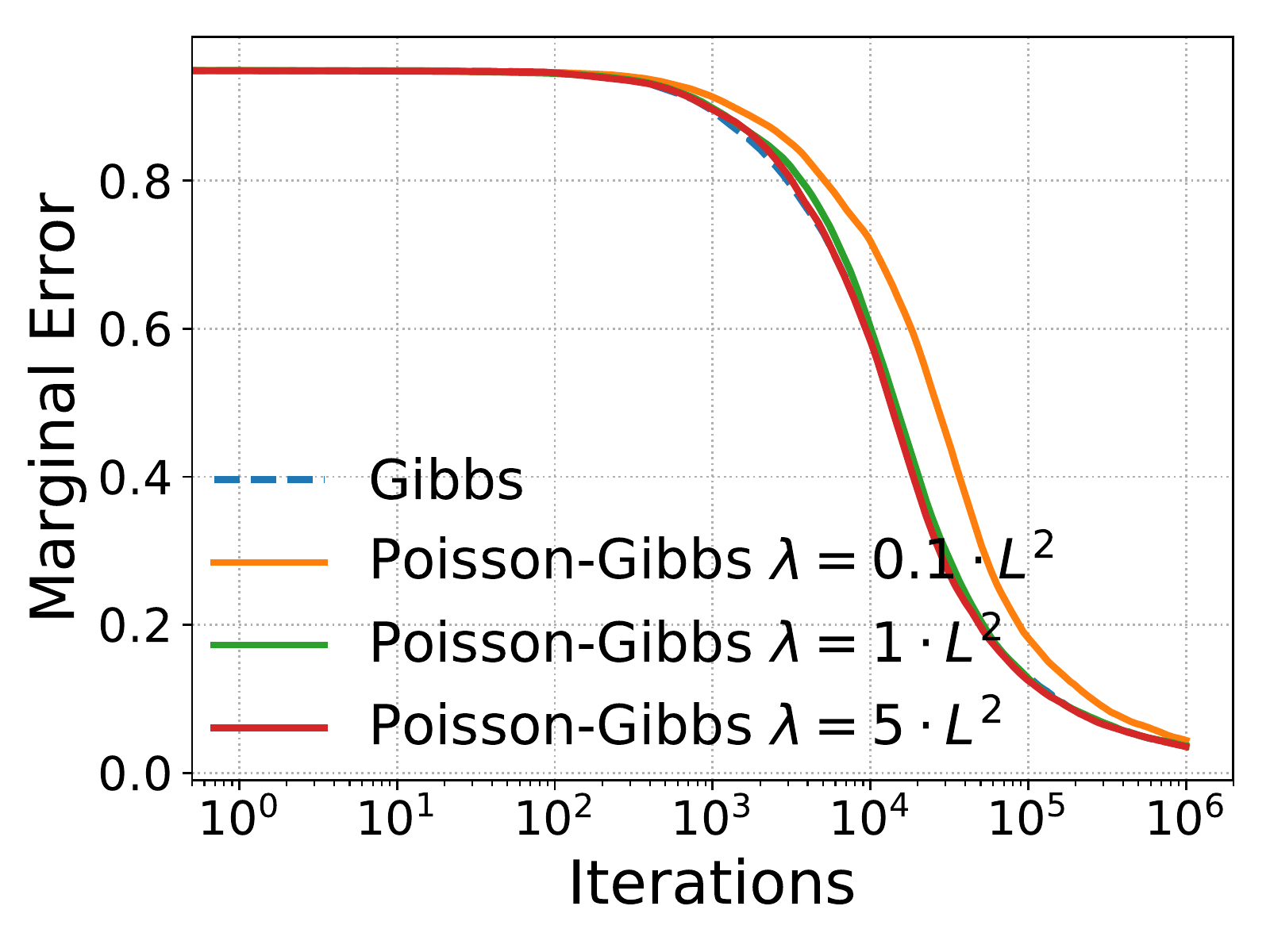} &
    	\includegraphics[width=4.2cm]{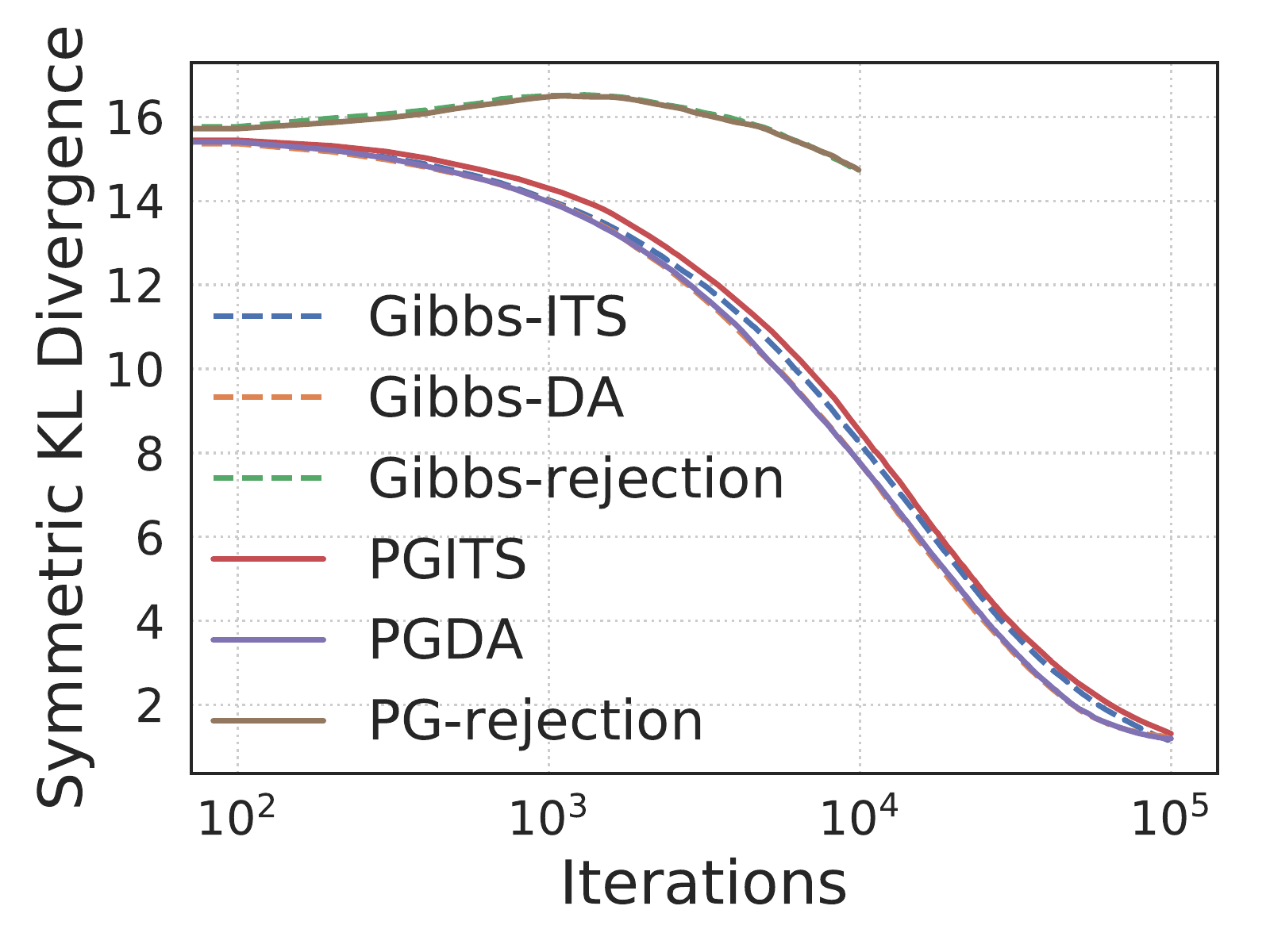} 
    	\\		
    	(a) &
    	(b) &
    	(c)
    	\hspace{-0mm}\\		
    \end{tabular}
    \caption{(a) Marginal error comparison among Poisson-Gibbs and previous methods on a Potts model. (b) Marginal error of Poisson-Gibbs on varying values of $\lambda$ on a Potts model. (c) Symmetric KL divergence comparison among PGITS, PGDA and previous methods on a continuous spin model.}
    \label{fig:ising}
\end{figure*}
\begin{figure*}[t!]
    \centering
    \begin{tabular}{cccc}		
    	\includegraphics[width=4.2cm]{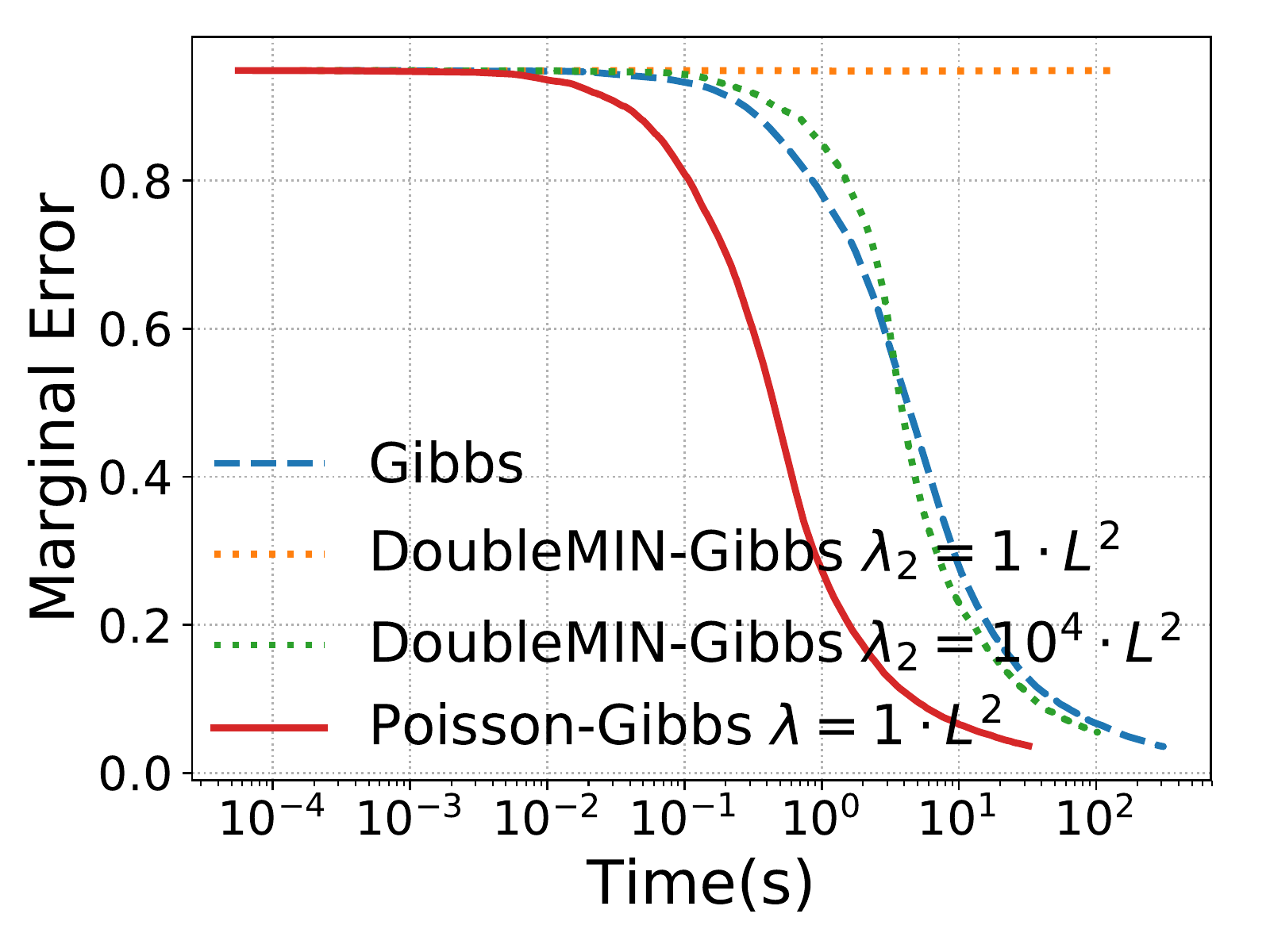}  &
    	\includegraphics[width=4.2cm]{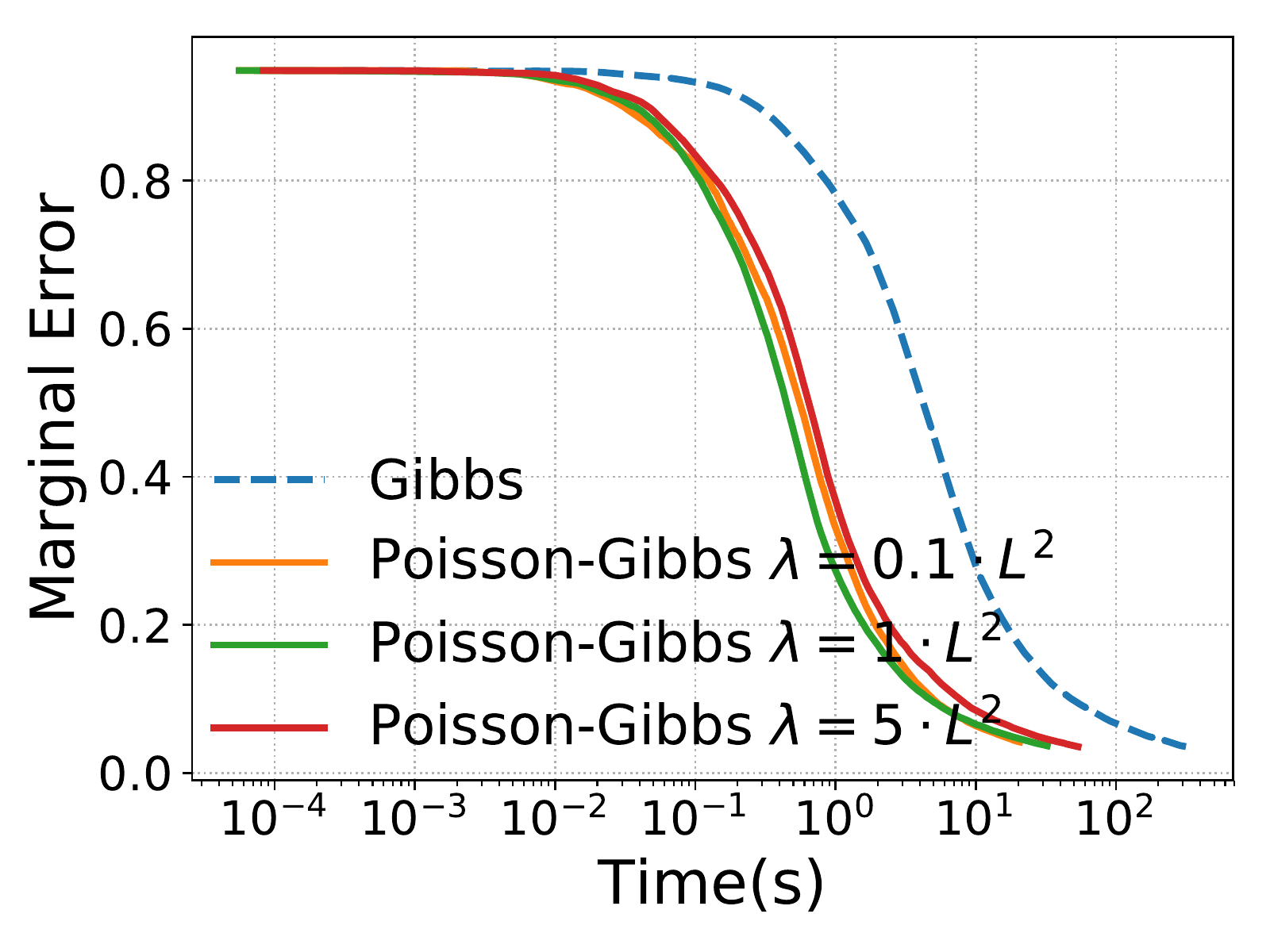} &
    	\includegraphics[width=4.2cm]{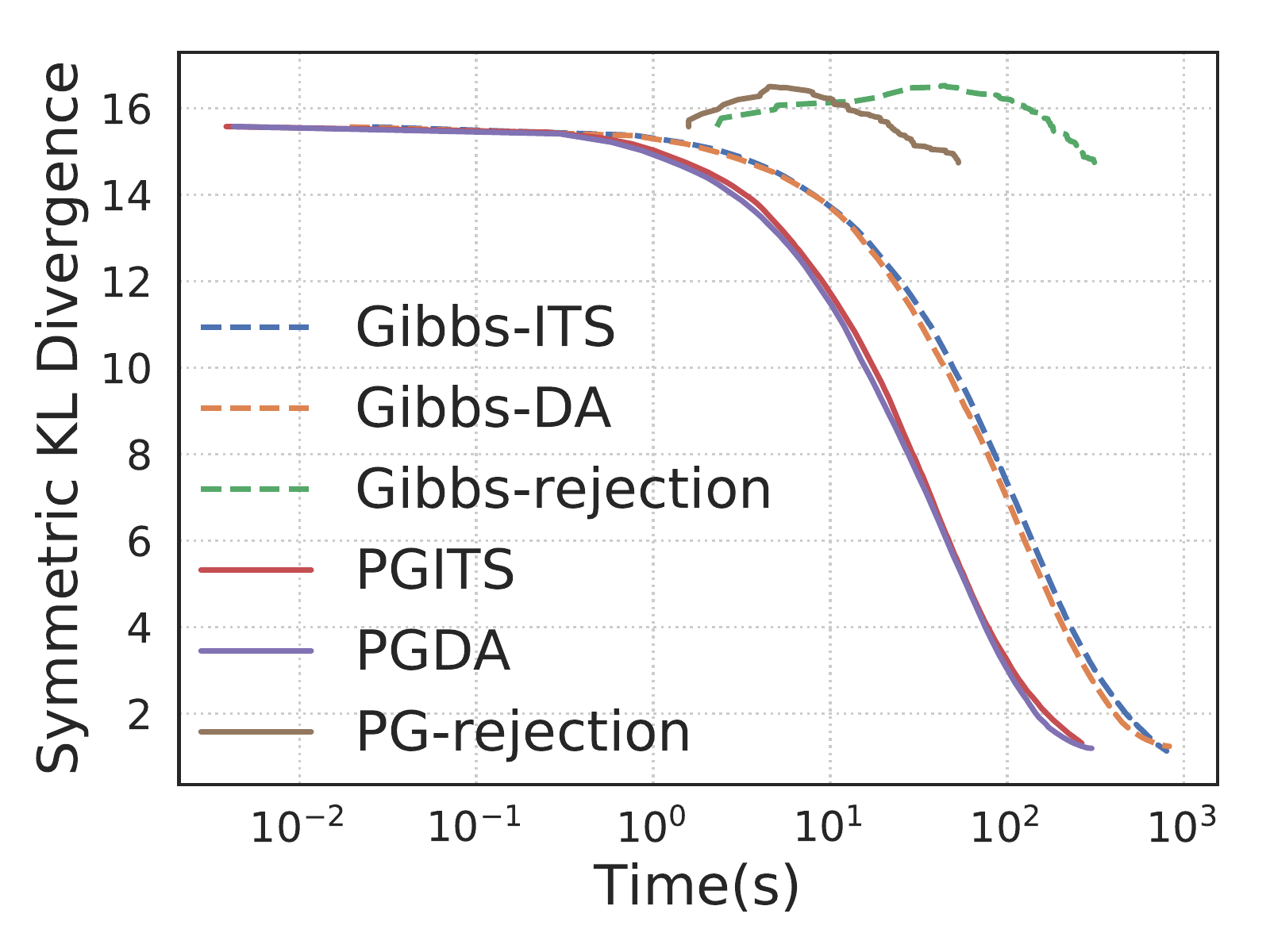} 
    	\\		
    	(a) &
    	(b) &
    	(c)
    	\hspace{-0mm}\\		
    \end{tabular}
    \caption{Runtime comparisons with the same experimental setting as in Figure \ref{fig:ising}.}
    \label{fig:ising-time}
\end{figure*}

We first test the performance of Poisson-minibatching Gibbs sampling on the Potts model \cite{potts1952some} as in \citet{de2018minibatch}. The Potts model is a generalization of the Ising model \cite{ising1925beitrag} with domain $\{1,\ldots, D\}$ over an $N\times N$ lattice. The energy of a configuration is the following:
\[
U(x) = \sum_{i=1}^n \sum_{j=1}^n\beta \cdot A_{ij} \cdot \delta\left(x(i), x(j)\right)
\]
where the $\delta$ function equals one only when $x(i) = x(j)$ and zero otherwise. $A_{ij}$ is the interaction between two sites $i$ and $j$ and $\beta$ is the inverse temperature. As was done in previous work, we set the model to be fully connected and the interaction $A_{ij}$ is determined by the distance between site $i$ and site $j$ based on a Gaussian kernel~\cite{de2018minibatch}. The graph has $n = N^2 = 400$ variables in total, $\beta = 4.6$ and $D=10$. On this model, $L = 5.09$.

We first compare our method with two other methods: plain Gibbs sampling and the most efficient MIN-Gibbs methods on this task, DoubleMIN-Gibbs. Note that, in comparison to our method, DoubleMIN-Gibbs needs an additional M-H correction step which requires a second minibatch to be sampled. 
We set $\lambda = 1\cdot L^2$ for all minibatch methods.
We tried two values for the second minibatch size in DoubleMIN-Gibbs $\lambda_2 = 1\cdot L^2$ and $10^4\cdot L^2$.
We compute run-average marginal distributions for each variable by collecting samples. By symmetry, the marginal for each variable in the stationary distribution is uniform, so the $\ell_2$-distance between the estimated marginals and the uniform distribution can be used to evaluate the convergence of Markov chain. We report this  marginal error averaged over three runs.

Figure~\ref{fig:ising}a shows the $\ell_2$-distance marginal error as a function of iterations. We observe that Poisson-Gibbs performs comparably with plain Gibbs and it outperforms DoubleMIN-Gibbs significantly especially when $\lambda_2$ is not large enough. The performance of DoubleMIN-Gibbs is highly influenced by the size of the second minibatch. We have to increase the second minibatch to $10^4 \cdot L^2$ in order to make it converge. This is because the variance of M-H correction will be very large when the second minibatch is not large enough. On the other hand, Poisson-Gibbs does not require an additional M-H correction which not only reduces the computational cost but also improves stability.
In Figure~\ref{fig:ising}b, we show the performance of our method with different values of $\lambda$. When we increase the minibatch size, the convergence speed of Poisson-Gibbs approaches plain Gibbs, which validates our theory. The number of factors being evaluated of Poisson-Gibbs varies each iteration, thus we report the average number which are 7, 28 and 132 respectively for $\lambda=0.1\cdot L^2$, $1\cdot L^2$ and $5\cdot L^2$. 

The runtime comparisons with the same setup are reported in Figure \ref{fig:ising-time}a and 2b to demonstrate the computational speed-up of Poisson-Gibbs empirically. We can see that the results align with our theoretical analysis: Poisson-Gibbs is significantly faster than plain Gibbs samping and faster than previous minibatched Gibbs sampling methods. Compared to plain Gibbs, Poisson-Gibbs speeds up the computation by evaluating only a subset of factors in each iteration. Compared to DoubleMIN-Gibbs, Poisson-Gibbs is faster because it removes the need of an additional M-H correction step. 

\subsection{Continuous Spin Models}
In this section, we study a more general setting of spin models where spins can take continuous values.
Continuous spin models are of interest in both the statistics and physics communities~\cite{michel2015event,bruce1985universality,dommers2017continuous}. 
This random graph model can also be used to describe complex networks such as social, information, and biological networks~\cite{newman2003structure}.
We consider the energy of a configuration as the following:
\[
U(x) = \sum_{i=1}^n \sum_{j=1}^n\beta \cdot A_{ij} \cdot \left(x(i) \cdot x(j) + 1\right)
\]
where $x(i)\in [0, 1]$ and $\beta=1$. Notice that the existing minibatched Gibbs sampling methods \cite{de2018minibatch} are not applicable on this task since they can be used only on discrete state spaces. We compare PGITS, PGDA with: (1) Gibbs sampling with FITS (Gibbs-ITS); (2) Gibbs sampling with Double Chebyshev approximation (Gibbs-DA); (3) Gibbs with rejection sampling (Gibbs-rejection); and (4) Poisson-Gibbs with rejection sampling (PG-rejection). We use symmetric KL divergence to quantitatively evaluate the convergence. On this model, $L=13.71$ and we set $\lambda = L^2$. The degree of polynomial is $m=3$ for PGITS and the first approximation in PGDA. The degree of polynomial is $k=10$ for the second approximation in PGDA.
In rejection sampling, we set the proposal distribution to be $w g$ where $g$ is the uniform distribution on $[0,1]$ and $w$ is a constant tuned for best performance. The ground truth stationary distribution is obtained by running Gibbs-ITS for $10^7$ iterations.

On this task, the average number of evaluated factors per iteration of Poisson-Gibbs is 190. Figure~\ref{fig:ising}c shows the symmetric KL divergence as a function of iterations, with results averaged over three runs. Observe that our methods achieve comparable performance to Gibbs sampling with only a fraction of factors. For rejection sampling, the average steps needed for a sample to be accepted is greater than 300 which means that the cost is much larger than that of PGITS and PGDA. Given the same time budget, it can only run for many fewer iterations (we run it for $10^4$ iterations). On the other hand, the two Chebyshebv approximation methods are much more efficient for both Poisson-Gibbs and plain Gibbs. The advantage of FITS over rejection sampling has also been discussed in previous work~\cite{olver2013fast}. Also notice that PGDA converges faster than PGITS given the same degree of polynomial. This empirical result validates our theoretical results that suggest PGDA is more efficient than PGITS.

We also report the symmetric KL divergence as a function of runtime in Figure \ref{fig:ising-time}c. Similar to the previous section, the two Poisson-Gibbs methods are faster than plain Gibbs sampling.   

\subsection{Truncated Gaussian Mixture}\label{sec:gmm}
We further demonstrate PGITS and PGDA on a truncated Gaussian mixture model. We consider the following Gaussian mixture with tied means as done in previous work \cite{welling2011bayesian,li2017mini}:
\[
 x_1 \sim \Ncal(0, \sigma_1^2), \; x_2 \sim \Ncal(0, \sigma_2^2), \;
y_i \sim \frac{1}{2} \Ncal( x_1, \sigma_y^2) + \frac{1}{2}\Ncal( x_1 +  x_2, \sigma_y^2).
\]
We used the same parameters as in \citet{welling2011bayesian}: $\sigma_1^2 = 10$, $\sigma_2^2 = 1$, $\sigma_y^2 = 2$, $ x_1 = 0$ and $ x_2 = 1$. This posterior has two modes at $(x_1, x_2) = (0,1)$ and $(x_1, x_2) = (1,-1)$. We truncate the posterior by bounding the variables $x_1$ and $x_2$ in $[-6, 6]$. The energy can be written as
\[
U(x) = \log p(x_1) + \log p(x_2) + \sum_{i=1}^N \log p(y_i|x_1, x_2)
\]
which can be regarded as a factor graph with $N$ factors. We add a positive constant to the energy to ensure each factor is non-negative: this will not change the underlying distribution.
As in \citet{li2017mini}, we set $N=10^6$. $L = 1581.14$ for this model and we set $\lambda = 500$, $m=20$ and $k=25$. We have also considered higher values of $\lambda$ and found that the results are very similar. We generate $10^6$ samples for all methods. A uniform distribution in $[-6, 6]$ is used as the proposal distribution in Gibbs with rejection sampling. We try varying values for $w$ but none of them results in reasonable density estimate which may be due to the inefficiency of rejection sampling \cite{olver2013fast}. We report the results when the average needed steps for a sample to be accepted is around 1000. The average number of factors being evaluated per iteration of Poisson-Gibbs is 1802.
Our results are reported in Figure~\ref{fig:mog}, where we observe visually that the density estimates of PGITS and PGDA are very accurate. In contrast, rejection sampling completely failed to estimate the density given the budget.

\begin{figure*}[t!]
\centering
	{\setlength{\tabcolsep}{0pt}\begin{tabular}{cccc}
	\includegraphics[width=3.5cm]{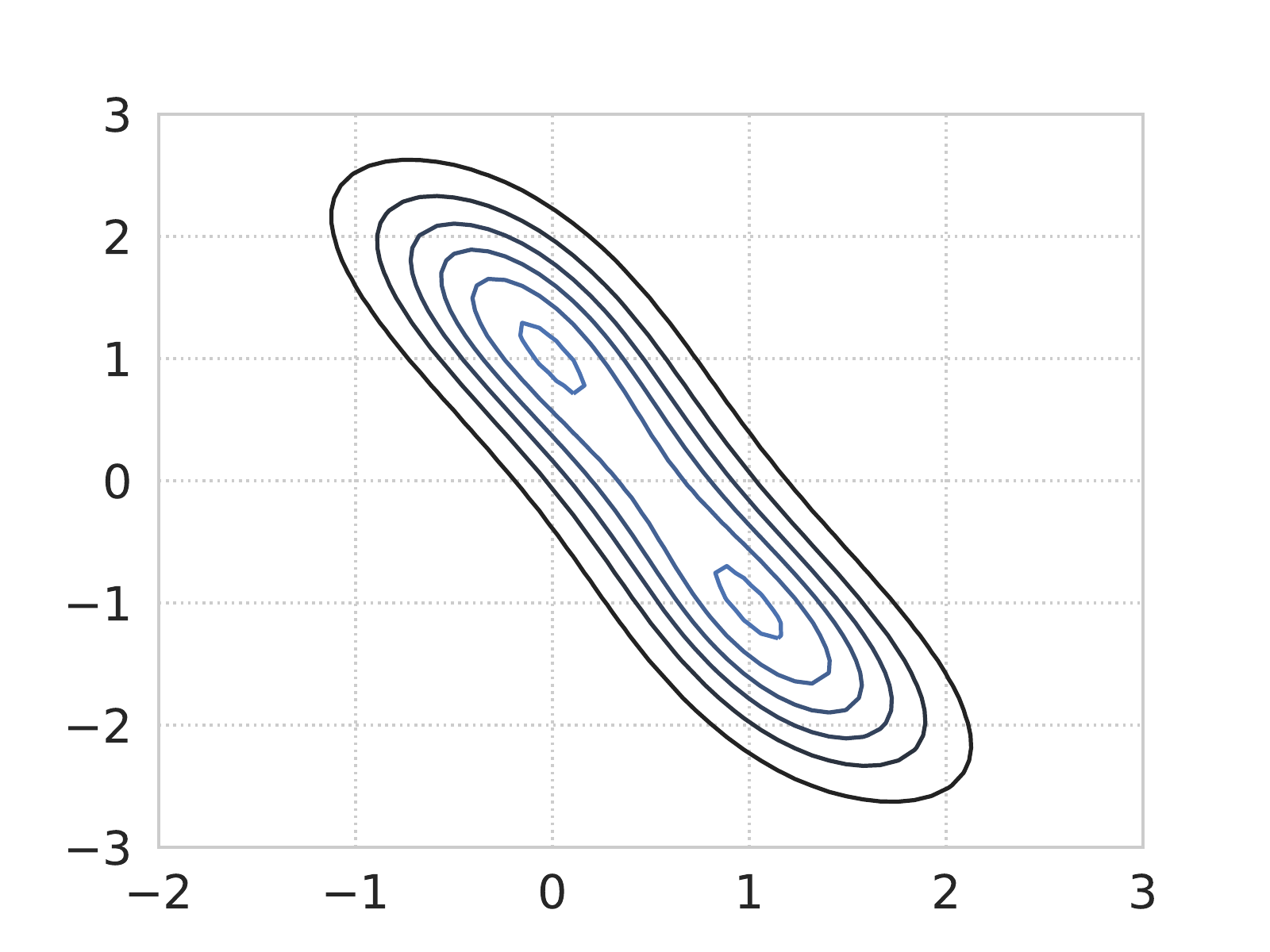}  &
        \includegraphics[width=3.5cm]{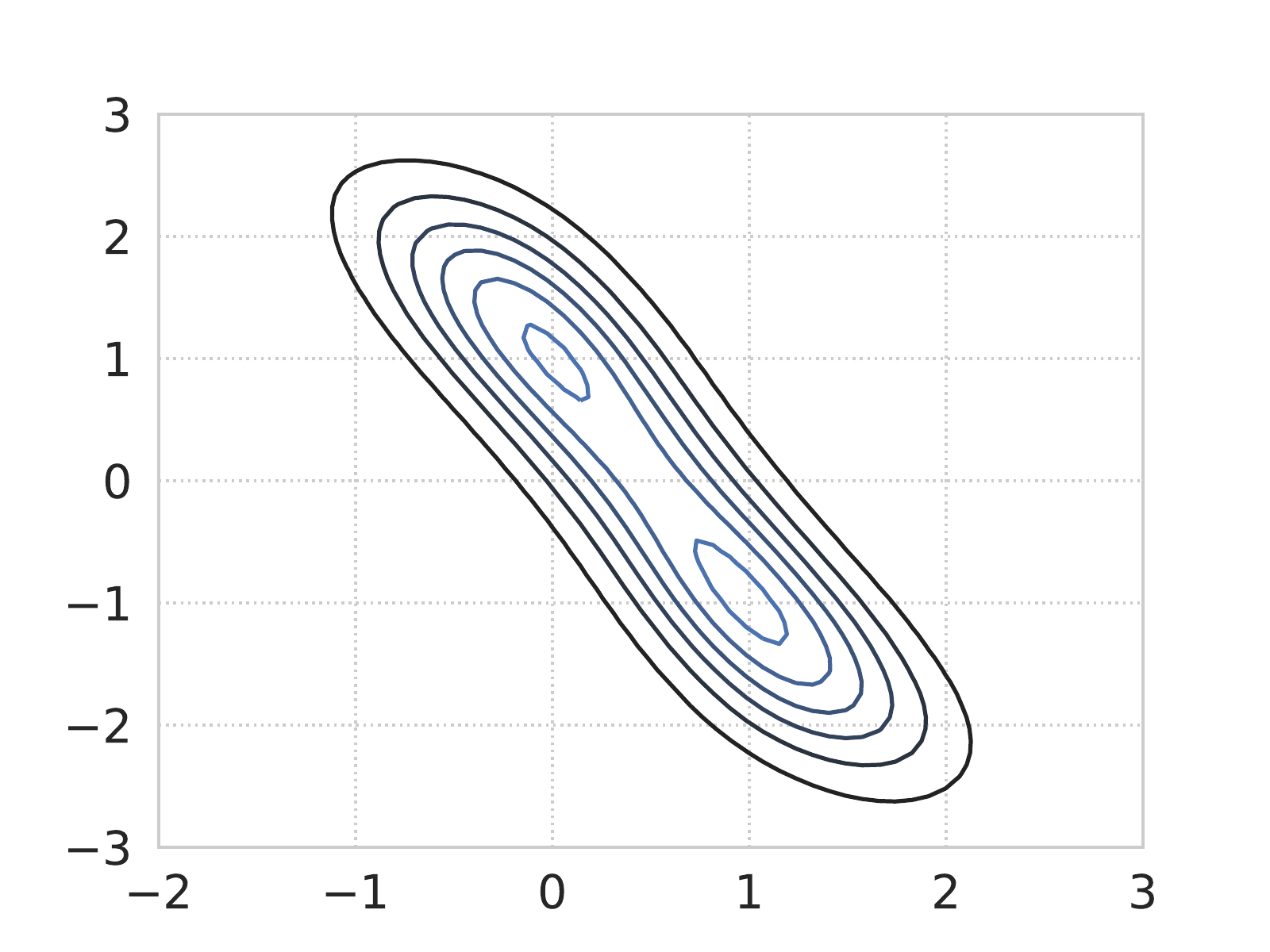}&
		\includegraphics[width=3.5cm]{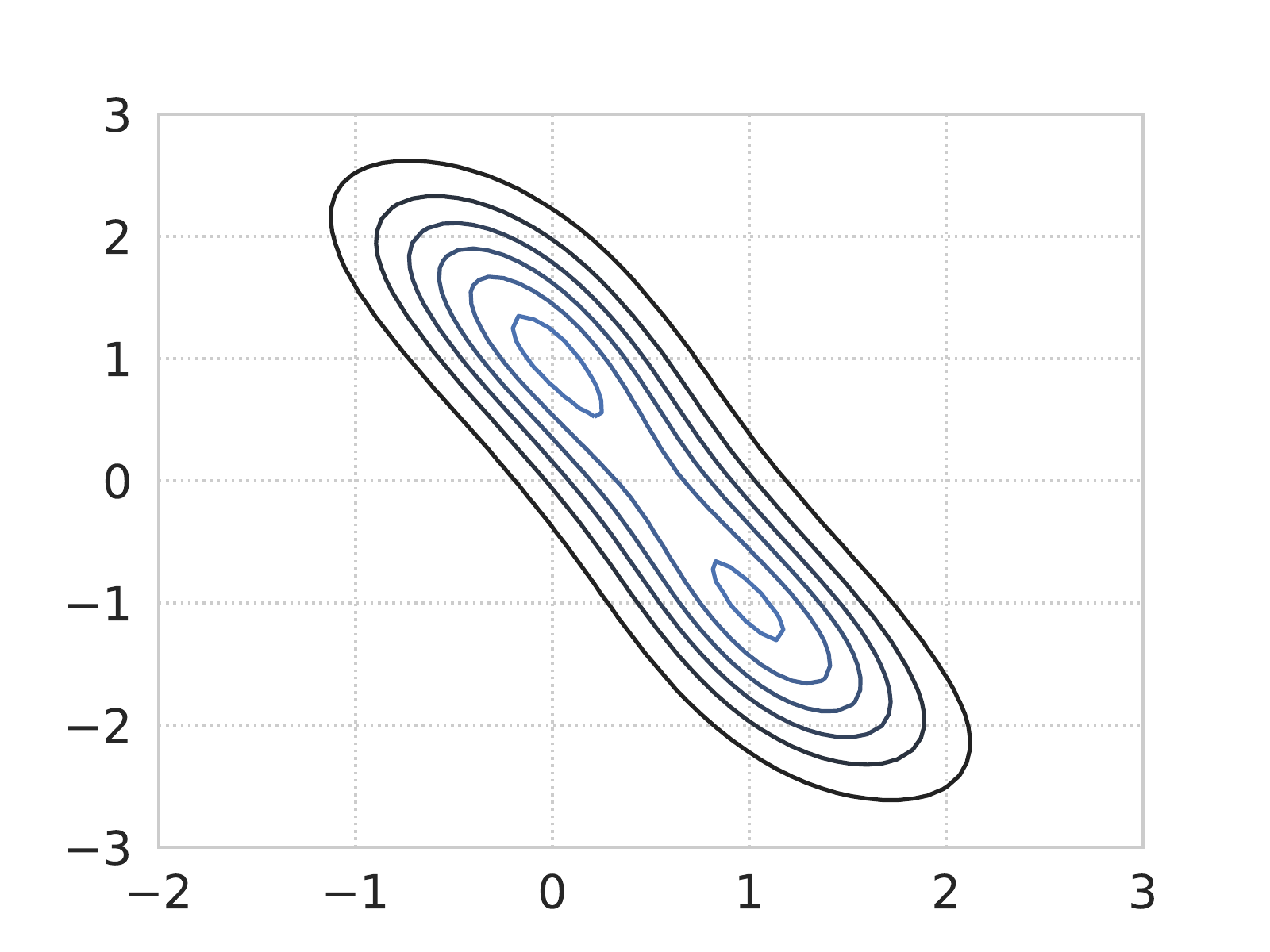}  &
		\includegraphics[width=3.5cm]{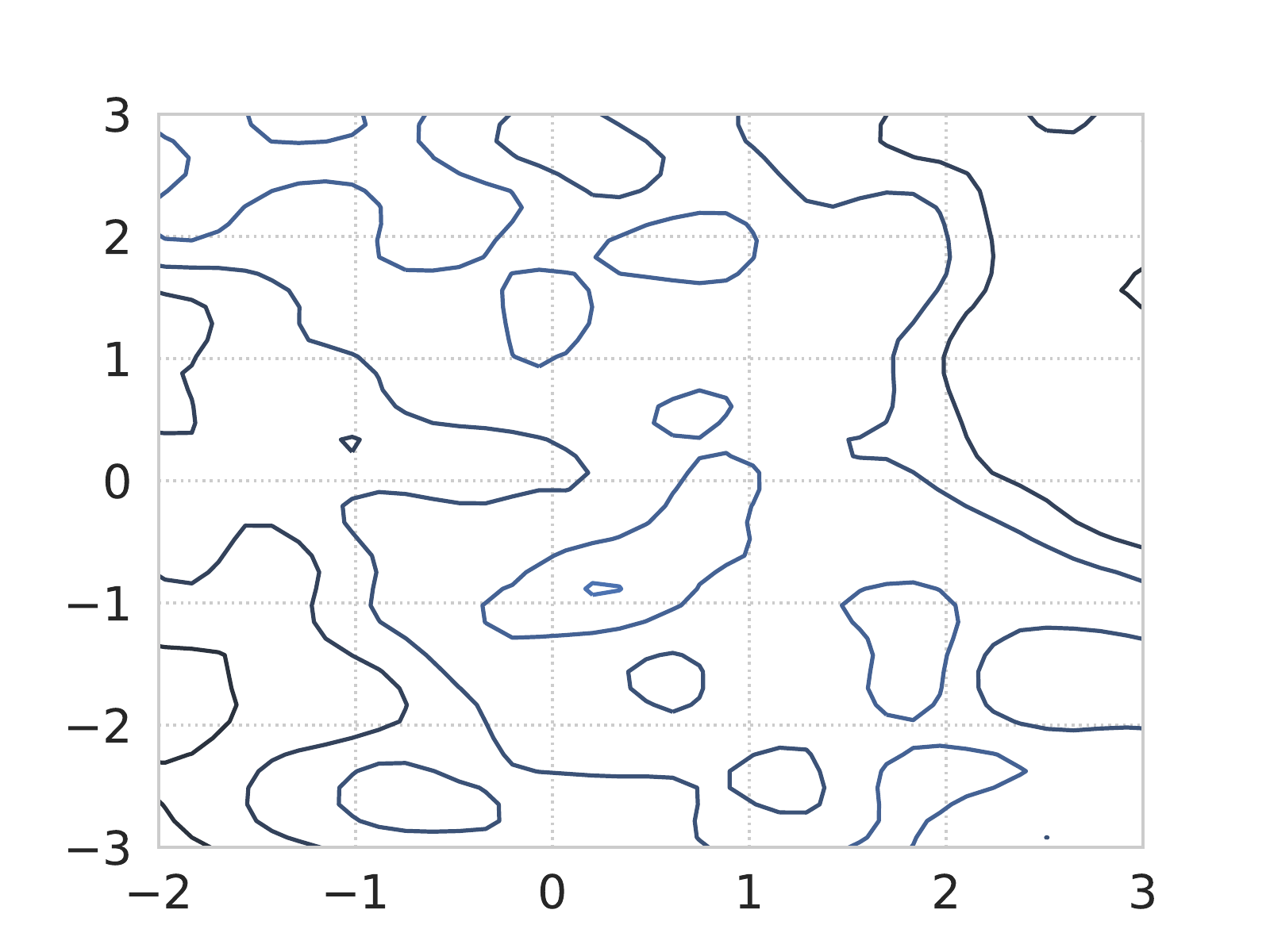}
		\\
		(a) True&
		(b) PGITS&
		(c) PGDA&
		(d) Gibbs-rejection
		\hspace{-0mm}\\		
	\end{tabular}}
	\caption{A visualization of the estimated density on a truncated Gaussian mixture model.}
	\label{fig:mog}
\end{figure*}

\section{Conclusion}
We propose Poisson-minibatching Gibbs sampling to generate unbiased samples with theoretical guarantees on the convergence rate.
Our method provably converges to the desired stationary distribution at a rate that is at most a constant factor slower than the full batch method, as measured by the spectral gap.
We provide guidance about how to set the hyperparameters of our method to make the convergence speed arbitrarily close to the full batch method.
On continuous state spaces, we propose two variants of Poisson-Gibbs based on fast inverse transform sampling and provide convergence analysis for both of them.
We hope that our work will help inspire more exploration into unbiased and guaranteed-fast stochastic MCMC methods.

\subsection*{Acknowledgements}
This work was supported by a gift from Huawei. We thank Wing Wong for the helpful discussion.

\bibliography{neurips_2019}
\bibliographystyle{plainnat}

\appendix
\newpage
\noindent\makebox[\linewidth]{\rule{\linewidth}{3.5pt}}
\begin{center}
	\bf{\Large Supplementary Material:
Poisson-Minibatching for Gibbs Sampling with Convergence Rate Guarantees}
\end{center}
\noindent\makebox[\linewidth]{\rule{\linewidth}{1pt}}

\section{Fast Sampling of the Auxiliary Variables}
In this section, we describe in detail the method used to sample the auxiliary variables $s_{\phi}$ and prove Statement~\ref{stmt:samplingphi}.
The method for doing so is described here in Algorithm~\ref{alg:samplingphi}.
\begin{algorithm}[H]
  \caption{Sample auxiliary variables $s_{\phi}$}
  \begin{algorithmic}
    \label{alg:samplingphi}
    \STATE $\triangleright$ pre-computation step; happens once
    \FOR{$i = 1$ \textbf{to} $n$}
        \STATE $\Lambda_i \leftarrow \sum_{\phi \in A[i]} \frac{\lambda M_{\phi}}{L} + M_{\phi}$
        \STATE \textbf{compute distribution} $\rho_i$ over $A[i]$ where
        \[
            \rho_i(\phi) \propto \frac{\lambda M_{\phi}}{L} + M_{\phi}.
        \]
        \STATE \textbf{process distribution} $\rho_i$ so that in future, it can be sampled from in constant time
    \ENDFOR
    \STATE
    \STATE $\triangleright$ to actually re-sample the auxiliary variables 
    \STATE \textbf{given:} current state $x \in \Omega$, variable $i$ to resample
    \STATE \textbf{initialize sparse vector} $s: A[i] \rightarrow \Z$
    \STATE \textbf{sample} $B \sim \operatorname{Poisson}(\Lambda_i)$
    \FOR{$b = 1$ \textbf{to} $B$}
        \STATE \textbf{sample} $\phi \sim \rho_i$
        \STATE \textbf{compute} $\phi(x)$
        \STATE \textbf{with probability} $\frac{ \frac{\lambda M_{\phi}}{L} + \phi(x) }{ \frac{\lambda M_{\phi}}{L} + M_{\phi} }$ \textbf{update} sparse vector $s_{\phi} \leftarrow s_{\phi} + 1$
    \ENDFOR
  \end{algorithmic}
\end{algorithm}

To see that this is valid, let $B = \sum_i^n s_i$ where $s_i$ are Poisson variables with parameters $\lambda_i$.
We know that $B$ is also Poisson distributed with parameter $\Lambda = \sum_i^n \lambda_i$.
Conditioned on the value of $B$, it is known that $s_i$ follows a multinomial distribution with event probabilities $\lambda_i/\Lambda$ and trial count $B$. Therefore, we can first sample $B \sim \text{Poisson}(\Lambda)$ and then sample
\[
    (s_1, \dots s_n) \sim \operatorname{Multinomial}\left(B, \left(\frac{\lambda_1}{\Lambda}, \ldots, \frac{\lambda_n}{\Lambda} \right) \right).
\]
Our Algorithm~\ref{alg:samplingphi} is only slightly more complicated than this process, in order to minimize the number of times that $\phi(x)$ is evaluated, but it can be seen to produce the valid distribution by the same reasoning.

The computational cost of Algorithm~\ref{alg:samplingphi} is clearly proportional to $B$, and since
\[
    \mathbf{E}[B] = \Lambda_i = \sum_{\phi \in A[i]} \frac{\lambda M_{\phi}}{L} + M_{\phi} \le \lambda + L,
\]
it follows that the overall average computational cost will also be $\lambda + L$.
This proves Statement~\ref{stmt:samplingphi}.

\section{Poisson-Gibbs with Exact Sampling from the Conditional Distribution}
\subsection{Derivation of the joint distribution}\label{app:joint-dist}
In this subsection, we derive the joint distribution (\ref{eq:pi}) by substituting the distributions of $x$ and $s$ into the conditional distribution of $s$ given $x$. By the expression of Poisson distribution for $s_{\phi}$ and the independence of $s_{\phi}$, we have
\begin{align*}
  \pi(x, s) 
  &= 
  \pi(x)\pi(s|x)\\
  &\propto
  \exp\left(\sum_{\phi\in\Phi} \phi(x)\right)\prod_{\phi\in\Phi}\pi(s_{\phi}|x)\\
  &=
  \exp\left(\sum_{\phi\in\Phi}\left( \phi(x)
  +
  \log\pi(s_{\phi}|x)\right)\right)\\
  &=
  \exp\left(\sum_{\phi\in\Phi}\left(\phi(x)
  +
  s_{\phi}\log\left(\frac{\lambda M_{\phi}}{L}+\phi(x)\right)
  -\log (s_{\phi}!)
  -\frac{\lambda M_{\phi}}{L}-\phi(x)\right)\right)\\
  &=
  \exp\left(\sum_{\phi\in\Phi}\left(
  s_{\phi}\log\left(\frac{\lambda M_{\phi}}{L}
  +
  \phi(x)\right)
  -\log (s_{\phi}!)-\frac{\lambda M_{\phi}}{L}\right)\right)\\
  &\propto
  \exp\left(\sum_{\phi\in\Phi}\left(
  s_{\phi}\log\left(\frac{\lambda M_{\phi}}{L}
  +
  \phi(x)\right)
  -\log (s_{\phi}!)\right)\right)\\
  &=
  \exp\left(
  \sum_{\phi\in\Phi} \left(s_{\phi} \log\left( 1 + \frac{L}{\lambda M_{\phi}}\phi(x) \right) + s_{\phi}\log\left(\frac{\lambda M_{\phi}}{L}\right) - \log\left( s_{\phi}! \right)\right)\right).
\end{align*}

\subsection{Proof of Theorem \ref{thm:discrete}}

In this section, we prove that Poisson-Gibbs converges, and derive a bound on its convergence rate.

\begin{proof}
First, we will derive an expression for the transition operator of Poisson-Gibbs chain, and show it is reversible.
Then we will bound the spectral gap.

If $x$ and $y$ are states which differ in only one variable $i$, the probability of transitioning from $x$ to $y$ will be the probability of choosing to sample variable $i$ times the expected value over the random choice of $s$ of the probability of sampling $y(i)$ from $\rho$.
That is,
\begin{align*}
  T(x, y)
  &= 
  \frac{1}{n}
  \cdot
  \Exv{
    \rho(y(i))
  }\\
  &=
  \frac{1}{n}
  \cdot
  \Exv{
    \frac{
      \exp(U_{y(i)})
    }{
      \int \exp(U_u) \; du
    }
  }\\
  &=
  \frac{1}{n}
  \cdot
  \sum_s
  \frac{
    \exp(U_{y(i)})
  }{
    \int \exp(U_u) \; du
  }
  \cdot
  \prod_{\phi \in A[i]}
  \frac{1}{s_{\phi}!}
  \left( \frac{\lambda M_{\phi}}{L} + \phi(x) \right)^{s_{\phi}}
  \exp\left( - \left( \frac{\lambda M_{\phi}}{L} + \phi(x) \right) \right)\\
  &=
  \frac{1}{n}
  \cdot
  \sum_s
  \frac{
    \exp\left( \sum_{\phi \in A[i]} s_{\phi} \log\left( \frac{\lambda M_{\phi}}{L} + \phi(y) \right) \right)
  }{
    \int
    \exp\left( \sum_{\phi \in A[i]} s_{\phi} \log\left( \frac{\lambda M_{\phi}}{L} + \phi(z_u) \right) \right) \; du
  }
  \\&\hspace{2em}\cdot
  \prod_{\phi \in A[i]}
  \left( 1 + \frac{L}{\lambda M_{\phi}} \phi(x) \right)^{s_{\phi}}
  \cdot\exp\left( - \phi(x)  \right)
  \\&\hspace{2em}\cdot
  \prod_{\phi \in A[i]}
  \frac{1}{s_{\phi}!}
  \left( \frac{\lambda M_{\phi}}{L}  \right)^{s_{\phi}}
  \cdot\exp\left( - \frac{\lambda M_{\phi}}{L}  \right)
\end{align*}
where $z_u$ denotes $x$ where $x(i)$ has been set equal to $u$.
Note that $s_{\phi}$ here are non-negative integers that a Poisson variable can take, not variables. So if we let $r_{\phi} \sim \text{Poisson}\left( \frac{\lambda M_{\phi}}{L} \right)$ and $r_{\phi}$ to be all independent, we can write this as
\begin{align*}
  T(x, y)
  &=
  \frac{1}{n}
  \cdot
  \textbf{E}_r \Bigg[
  \frac{
    \exp\left( \sum_{\phi \in A[i]} r_{\phi} \log\left( \frac{\lambda M_{\phi}}{L} + \phi(y) \right) \right)
  }{
    \int
    \exp\left( \sum_{\phi \in A[i]} r_{\phi} \log\left( \frac{\lambda M_{\phi}}{L} + \phi(z_u) \right) \right) \; du
  }
  \\&\hspace{2em}\cdot
  \prod_{\phi \in A[i]}
  \left( 1 + \frac{L}{\lambda M_{\phi}} \phi(x) \right)^{r_{\phi}}
  \cdot\exp\left( - \phi(x)  \right) \Bigg] \\
  &=
  \frac{1}{n}
  \cdot
  \textbf{E}_r \Bigg[
  \frac{
    \exp\left( \sum_{\phi \in A[i]} r_{\phi} \left( 
        \log\left( 1 + \frac{L}{\lambda M_{\phi}} \phi(y) \right) 
        + 
        \log\left( 1 + \frac{L}{\lambda M_{\phi}} \phi(x) \right)
    \right) \right)
  }{
    \int
    \exp\left( \sum_{\phi \in A[i]} r_{\phi} \log\left( 1 + \frac{L}{\lambda M_{\phi}} \phi(z_u) \right) \right) \; du
  }
  \\&\hspace{2em}
  \cdot\exp\left( - \sum_{\phi \in A[i]} \phi(x)  \right) \Bigg] \\
\end{align*}
Therefore, since
\[
  \pi(x) = \frac{1}{Z} \cdot \exp\left( \sum_{\phi\in\Phi} \phi(x) \right),
\]
it follows that
\begin{align*}
  &\hspace{-1em}\pi(x) T(x, y)\\
  &=
  \frac{1}{n Z}
  \cdot
  \textbf{E}_r \Bigg[
  \frac{
    \exp\left( \sum_{\phi \in A[i]} r_{\phi} \left( 
        \log\left( 1 + \frac{L}{\lambda M_{\phi}} \phi(y) \right) 
        + 
        \log\left( 1 + \frac{L}{\lambda M_{\phi}} \phi(x) \right)
    \right) \right)
  }{
    \int
    \exp\left( \sum_{\phi \in A[i]} r_{\phi} \log\left( 1 + \frac{L}{\lambda M_{\phi}} \phi(z_u) \right) \right) \; du
  }
  \\&\hspace{2em}
  \cdot\exp\left( \sum_{\phi \in \Phi} \phi(x) - \sum_{\phi \in A[i]} \phi(x)  \right) \Bigg] \\
  &=
  \frac{\exp(U_{\neg i}(x))}{n Z}
  \cdot
  \textbf{E}_r\left[
  \frac{
    \exp\left( \sum_{\phi \in A[i]} r_{\phi} \left( 
        \log\left( 1 + \frac{L}{\lambda M_{\phi}} \phi(y) \right) 
        + 
        \log\left( 1 + \frac{L}{\lambda M_{\phi}} \phi(x) \right)
    \right) \right)
  }{
    \int
    \exp\left( \sum_{\phi \in A[i]} r_{\phi} \log\left( 1 + \frac{L}{\lambda M_{\phi}} \phi(z_u) \right) \right) \; du
  } \right].
\end{align*}
where we define $U_{\neg i}(x) = \sum_{\phi\notin A[i]}\phi(x)$. This expression is symmetric in $x$ and $y$ (note that $U_{\neg i}(x)$ does not depend on variable $i$), so it follows that the Markov chain is reversible, and its stationary distribution is indeed $\pi$.

We can proceed to try to bound its spectral gap, using the technique of Dirichlet forms. We start by simplifying our expression by defining 
\[
  \bar \phi(x) = \frac{ L \phi(x) }{ \lambda M_{\phi} }.
\]
Using this, we get
\begin{align*}
  \pi(x) T(x, y)
  &=
  \frac{\exp(U_{\neg i}(x))}{n Z}
  \cdot
  \textbf{E}_r\left[
  \frac{
    \exp\left( \sum_{\phi \in A[i]} r_{\phi} \left( 
        \log\left( 1 + \bar \phi(y) \right) 
        + 
        \log\left( 1 + \bar \phi(x) \right)
    \right) \right)
  }{
    \int
    \exp\left( \sum_{\phi \in A[i]} r_{\phi} \log\left( 1 + \bar \phi(z_u) \right) \right) \; du
  } \right].
\end{align*}
We proceed by bringing the exponential on the top of this sum down to the bottom and inside the integral, which produces
\begin{align*}
  \pi(x)T(x, y)
  &=
  \frac{\exp\left(U_{\neg i}(x)\right)}{n Z}
  \cdot
  \mathbf{E}_r\Bigg[
    \Bigg(
      \int
      \exp\Bigg( 
        \sum_{\phi \in A[i]} r_{\phi} \Bigg(
          \log\left( 1 + \bar \phi(z_u) \right) 
          \\&\hspace{2em} -
          \log\left( 1 + \bar \phi(x) \right)
          -
          \log\left( 1 + \bar \phi(y) \right)
        \Bigg)
      \Bigg) \; du
    \Bigg)^{-1}
  \Bigg]\\
  &\ge
  \frac{\exp\left(U_{\neg i}(x)\right)}{n Z}
  \cdot
  \Bigg(\mathbf{E}_r\Bigg[
    \int
    \exp\Bigg( 
      \sum_{\phi \in A[i]} r_{\phi} \Bigg(
          \log\left( 1 + \bar \phi(z_u) \right) 
          \\&\hspace{2em} -
          \log\left( 1 + \bar \phi(x) \right)
          -
          \log\left( 1 + \bar \phi(y) \right)
      \Bigg)
    \Bigg) \; du
  \Bigg]\Bigg)^{-1}
\end{align*}
where this inequality follows from Jensen's inequality and the fact that $1/x$ is convex.
By converting the exp-of-sum to a product-of-exp, and recalling that the $r_{\phi}$ are independent, we can further reduce this to
\begin{align*}
  \pi(x) T(x, y)
  &\ge
  \frac{\exp\left(U_{\neg i}(x)\right)}{n Z}
  \Bigg(
    \int
    \mathbf{E}_r\Bigg[
      \prod_{\phi \in A[i]}
      \exp\Bigg( 
        r_{\phi} \Bigg(
          \log\left( 1 + \bar \phi(z_u) \right) 
          \\&\hspace{2em} -
          \log\left( 1 + \bar \phi(x) \right)
          -
          \log\left( 1 + \bar \phi(y) \right)
        \Bigg)
      \Bigg)
    \Bigg]du
  \Bigg)^{-1}\\
  &=
  \frac{\exp\left(U_{\neg i}(x)\right)}{n Z}
  \Bigg(
    \int
    \prod_{\phi \in A[i]}
    \mathbf{E}_r\Bigg[
      \exp\Bigg( 
        r_{\phi} \Bigg(
          \log\left( 1 + \bar \phi(z_u) \right) 
          \\&\hspace{2em} -
          \log\left( 1 + \bar \phi(x) \right)
          -
          \log\left( 1 + \bar \phi(y) \right)
        \Bigg)
      \Bigg)
    \Bigg]du
  \Bigg)^{-1}.
\end{align*}
This final expectation expression is just the moment generating function of the Poisson random variable $r_{\phi}$ evaluated at
\[
  t
  =
  \log\left( 1 + \bar \phi(z_u) \right) 
  -
  \log\left( 1 + \bar \phi(x) \right)
  -
  \log\left( 1 + \bar \phi(y) \right).
\]
Here, from the standard formula for that MGF, we get
\[
  \mathbf{E}_r [\exp(r_{\phi} t)]
  =
  \exp\left(
    \frac{\lambda M_{\phi}}{L} \left(
      \exp(t) - 1
    \right)
  \right)
\]
So
\begin{align*}
  &\exp(t) - 1\\
  &=
  \frac{
    1 + \bar \phi(z_u)
  }{
    (1 + \bar \phi(x))
    (1 + \bar \phi(y))
  }
  -
  1\\
  &=
  \frac{
    \bar \phi(z_u)
    -
    \bar \phi(x)
    -
    \bar \phi(y)
    -
    \bar \phi(x)
    \bar \phi(y)
  }{
    (1 + \bar \phi(x))
    (1 + \bar \phi(y))
  }\\
  &=
  \bar \phi(z_u)
  -
  \bar \phi(x)
  -
  \bar \phi(y)
  -
  \frac{
    \left(
      \bar \phi(z_u)
      -
      \bar \phi(x)
      -
      \bar \phi(y)
    \right)
    \left(
      \bar \phi(x)
      +
      \bar \phi(y)
      +
      \bar \phi(x)
      \bar \phi(y)
    \right)
    +
    \bar \phi(x)
    \bar \phi(y)
  }{
    (1 + \bar \phi(x))
    (1 + \bar \phi(y))
  }.
\end{align*}
Since
\[
  0 \le \bar \phi(x) = \frac{ L \phi(x) }{ \lambda M_{\phi} } \le \frac{L}{\lambda} \le \frac{1}{2}
\]
(where here we're using the condition in the theorem statement that $2 L \le \lambda$) we can bound this with
\begin{align*}
  &\exp(t) - 1\\
  &\le
  \bar \phi(z_u)
  -
  \bar \phi(x)
  -
  \bar \phi(y)
  -
  \frac{
    \left(
      -
      \bar \phi(x)
      -
      \bar \phi(y)
    \right)
    \left(
      \bar \phi(x)
      +
      \bar \phi(y)
    \right)
    +
    \left(
      1
      -
      \bar \phi(x)
      -
      \bar \phi(y)
    \right)
    \bar \phi(x)
    \bar \phi(y)
  }{
    (1 + \bar \phi(x))
    (1 + \bar \phi(y))
  }\\
  &\le
  \bar \phi(z_u)
  -
  \bar \phi(x)
  -
  \bar \phi(y)
  +
  \frac{
    \left(
      \bar \phi(x)
      +
      \bar \phi(y)
    \right)
    \left(
      \bar \phi(x)
      +
      \bar \phi(y)
    \right)
  }{
    (1 + \bar \phi(x))
    (1 + \bar \phi(y))
  }\\
  &\le
  \bar \phi(z_u)
  -
  \bar \phi(x)
  -
  \bar \phi(y)
  +
  \left(
    \bar \phi(x)
    +
    \bar \phi(y)
  \right)^2\\
  &\le
  \bar \phi(z_u)
  -
  \bar \phi(x)
  -
  \bar \phi(y)
  +
  \frac{4L^2}{\lambda^2}.
\end{align*}
So,
\begin{align*}
  \mathbf{E}{\exp(r_{\phi} t)}
  &=
  \exp\left(
    \frac{\lambda M_{\phi}}{L} \left(
      \exp(t) - 1
    \right)
  \right)\\
  &\le
  \exp\left(
    \frac{\lambda M_{\phi}}{L} \left(
      \bar \phi(z_u)
      -
      \bar \phi(x)
      -
      \bar \phi(y)
      +
      \frac{4L^2}{\lambda^2}
    \right)
  \right)\\
  &=
  \exp\left(
    \phi(z_u)
    -
    \phi(x)
    -
    \phi(y)
    +
    \frac{4 L M_{\phi}}{\lambda}
  \right).
\end{align*}
Substituting this into the original expression produces
\begin{align*}
  &\pi(x) T(x, y)\\
  &\ge
  \frac{\exp\left(U_{\neg i}(x)\right)}{n Z}
  \left(
    \int
    \prod_{\phi \in A[i]}
    \exp\left(
      \phi(z_u)
      -
      \phi(x)
      -
      \phi(y)
      +
      \frac{4L M_{\phi}}{\lambda}
    \right)du
  \right)^{-1}\\
  &=
  \frac{\exp\left(U_{\neg i}(x)\right)}{n Z}
  \left(
    \int
    \exp\left(
      \sum_{\phi \in A[i]}
      \phi(z_u)
      -
      \sum_{\phi \in A[i]}
      \phi(x)
      -
      \sum_{\phi \in A[i]}
      \phi(y)
      +
      \sum_{\phi \in A[i]}
      \frac{4L M_{\phi}}{\lambda}
    \right)du
  \right)^{-1}\\
  &\ge
  \frac{\exp\left(U_{\neg i}(x)\right)}{n Z}
  \left(
    \int
    \exp\left(
      \sum_{\phi \in A[i]}
      \phi(z_u)
      -
      \sum_{\phi \in A[i]}
      \phi(x)
      -
      \sum_{\phi \in A[i]}
      \phi(y)
      +
      \frac{4L^2}{\lambda}
    \right)du
  \right)^{-1}\\
  &=
  \exp\left( - \frac{4L^2}{\lambda} \right)
  \frac{\exp\left(U_{\neg i}(x)\right)}{n Z}
  \left(
    \int
    \exp\left(
      \bar U_u
      -
      \bar U_{x(i)}
      -
      \bar U_{y(i)}
    \right)du
  \right)^{-1}\\
  &=
  \exp\left( - \frac{4L^2}{\lambda} \right)
  \frac{\exp\left(U_{\neg i}(x)\right)}{n Z}
  \frac{
    \exp(\bar U_{x(i)}) \cdot \exp(\bar U_{y(i)})
  }{
    \int \exp( \bar U_u )du
  }\\
  &=
  \exp\left( - \frac{4L^2}{\lambda} \right)
  \frac{1}{n Z}
  \frac{
    \exp(U(x)) \cdot \exp(\bar U_{y(i)})
  }{
    \int \exp( \bar U_u )du
  }
\end{align*}
where $\bar U_v$ denotes the assignment of $U_v$ in the plain Gibbs sampling algorithm (Algorithm~\ref{alg:gibbs}),
\[
    \bar U_v = \sum_{\phi \in A[i]} \phi(z_v),
\]
Finally, if we let $G$ denote the transition probability operator of plain Gibbs sampling, we notice right away that
\begin{align*}
  \pi(x) T(x, y)
  &\ge
  \exp\left( - \frac{4L^2}{\lambda} \right)
  \frac{1}{n Z}
  \frac{
    \exp(U(x)) \cdot \exp(\bar U_{y(i)})
  }{
    \int \exp( \bar U_u )du
  }\\
  &=
  \exp\left( - \frac{4L^2}{\lambda} \right)
  \pi(x) G(x, y).
\end{align*}

We will use the Dirichlet form argument to finish the proof. A real function $f$ is square integrable with respect to probability measure $\pi$, if it satisfies
\[
\int f(x)^2\pi(dx)<\infty.
\]
Define $L^2(\pi)$ to be the Hilbert space of all such functions.

Let $L^2_0(\pi)\subset L^2(\pi)$ to be the Hilbert space that uses the same inner product but only contains functions such that
\[
\mathbf{E}_{\pi}[f] = \int f(x)\pi(dx) = 0.
\]

We also define the notation 
\begin{align*}
\langle f,g \rangle = \int f(x)g(x)\pi(dx).
\end{align*}
A special example is $\operatorname{Var}_{\pi}[f] = \langle f,f\rangle$. 

From here, the Dirichlet form of a Markov chain associated with transition operator $T$ is given by \cite{fukushima2010dirichlet}
\begin{align*}
\mathbf{E}(f) 
	= 
	\frac{1}{2}\int\int\left(f(x)-f(y)\right)^2 T(x,y)\pi(x)dxdy.
\end{align*}

And the spectral gap can be written as \cite{aida1998uniform}
\[
\gamma = \inf_{f\in L^2_0(\pi): \operatorname{Var}_{\pi}[f] = 1} \mathbf{E}(f).
\]
The spectral gap is related to other common measurement of the convergence of MCMC. For example, it has the following relationship with the mean squared error $e_{\pi}$ on a Markov chain $\{X_n\}_{n\in \mathbb{N}}$ \cite{rudolf2011explicit},
\[
e_{\pi}^2 \leq \frac{2}{n\gamma}\norm{f}_2^2.
\]

With the expression of the spectral gap, it follows that
\begin{align*}
\bar{\gamma} &= \inf_{f\in L^2_0(\pi): Var_{\pi}[f] = 1} \left[\frac{1}{2}\int\int\left(f(x)-f(y)\right)^2T(x,y)\pi(x) \; dx \; dy\right]\\
&\geq \exp\left( - \frac{4L^2}{\lambda} \right)\cdot \inf_{f\in L^2_0(\pi): Var_{\pi}[f] = 1} \left[\frac{1}{2}\int\int\left(f(x)-f(y)\right)^2G(x,y)\pi(x) \; dx \; dy\right]\\
&= \exp\left( - \frac{4L^2}{\lambda} \right) \cdot \gamma.
\end{align*}

This proves the theorem.
\end{proof}

\section{Poisson-Gibbs on Continuous State Spaces}
\subsection{Poisson-Gibbs with Fast Inverse Transform Sampling (PGITS)} 
In the main body of the paper, we mentioned the PGITS method, Poisson-Gibbs with Fast Inverse Transform Sampling.
This method is to approximate the PDF by Chebyshev polynomials and then use inverse transform sampling.
In this section, we will outline the algorithm and derive convergence rate results for it.
These results will illustrate why PGITS can be expected to perform worse than PGDA.

PGITS operates by approximating the PDF with a Chebyshev polynomial approximation and then sampling from that polynomial approximation using inverse transform sampling.
Specifically, if the PDF we want to sample from is $f(x)$, we can approximate $f$ by $\tilde{f}$ on $[a, b]$ using Chebyshev polynomials,
\begin{align}\label{eq:approx}
\tilde{f} = \sum_{k=0}^m\alpha_k T_k\left(\frac{2(x - a)}{b-a} - 1\right),\ \alpha_k\in \RR,\ x\in[a, b]
\end{align}
where $T_k (x) = \cos(k \cos^{-1} x)$ is the degree $k$ Chebyshev polynomial, and $\alpha_k$ are the Chebyshev coefficients of the function $f$~\cite{trefethen2013approximation}.
We do this by interpolating $f$ at its Chebyshev nodes, resulting in $\tilde f$ being the $m$th order \emph{Chebyshev interpolant}.
Once we have the polynomial approximation $\tilde{f}$ we can construct the corresponding CDF approximation $\tilde{F}$ by calculating the integral directly (since polynomials are straightforward to integrate).
With the approximation $\tilde{F}$, we are able to use inverse transform sampling to generate samples. We call this whole algorithm {\it PGITS} and it is listed as Algorithm~\ref{alg:PGITS}.

We show that PGITS is reversible and bound its spectral gap in the following theorem.
\begin{theorem}\label{thm:poly}
  PGITS (Algorithm~\ref{alg:PGITS}) is reversible and has a stationary distribution $\pi$.
  Let $\bar{\gamma}$ denote its spectral gap, and let $\gamma$ denote the spectral gap of plain Gibbs sampling. Assume $\rho > 1$ is some constant such that every factor function $\phi$, treated as a function of any single variable $x_i$, must
   be analytically continuable to the Bernstein ellipse with radius parameter $\rho$ shifted-and-scaled so that its foci are at $a$ and $b$, such that it satisfies $| \phi(z) | \le M_{\phi}$ anywhere in that ellipse. Then, if $\lambda \ge 2 L$ it will hold that
  \[
    \bar \gamma
    \ge
    \left(1 - \frac{8 \exp(L) \rho^{-m/2}}{\sqrt{\rho - 1}} \right)
    \cdot
    \exp\left(-\frac{4L^2}{\lambda}\right) \cdot \gamma.
  \]
\end{theorem}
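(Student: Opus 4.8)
The plan is to follow the template of the proof of Theorem~\ref{thm:discrete} closely, handling in turn the three features that distinguish PGITS from the exact discrete method: the continuous state space, the Metropolis--Hastings correction, and the Chebyshev approximation $\tilde f$ of the conditional density. As there, I would first write down the transition operator $T(x,y)$ for two states differing only in coordinate $i$ (with $y(i)=v$), establish reversibility and stationarity by exhibiting a manifestly symmetric expression for $\pi(x)T(x,y)$, and then bound the spectral gap through a transition-probability (Dirichlet-form) comparison $\pi(x)T(x,y)\ge c\,\pi(x)G(x,y)$ against plain Gibbs $G$. The constant $c$ should split as a product of the minibatching factor $\exp(-4L^2/\lambda)$ and a separate approximation factor.

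For reversibility I would condition on the resampled auxiliary variables $s$ and write $g_s(v)=\exp(U^{(s)}_v)$ for the (unnormalized) minibatched conditional density and $\tilde g_s$ for its degree-$m$ Chebyshev interpolant, folding the M--H acceptance $\min(1,\tfrac{g_s(v)\tilde g_s(x(i))}{g_s(x(i))\tilde g_s(v)})$ into the kernel to get
\[
\pi(x)\,T(x,y)=\frac{1}{n}\sum_s \frac{\pi(x)\,\pi(s\mid x)}{g_s(x(i))\,\int \tilde g_s}\,\min\!\bigl(g_s(x(i))\,\tilde g_s(v),\; g_s(v)\,\tilde g_s(x(i))\bigr).
\]
The key observation is that $\tilde g_s$ and $\int\tilde g_s$ depend only on $s$ and on the coordinates shared by $x$ and $y$, the $\min$ term is symmetric in $x(i)\leftrightarrow v$, and the prefactor $\pi(x)\pi(s\mid x)/g_s(x(i))$ is \emph{independent} of the resampled coordinate. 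This last cancellation is exactly the one used in Theorem~\ref{thm:discrete}, driven by $\tfrac{\lambda M_\phi}{L}+\phi(x)=\tfrac{\lambda M_\phi}{L}(1+\bar\phi(x))$ together with $g_s(x(i))=\prod_{\phi}(1+\bar\phi(x))^{s_\phi}$. Hence $\pi(x)T(x,y)$ is symmetric, giving reversibility with stationary distribution $\pi$; this part needs no hypothesis on $\lambda$.

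For the spectral gap I would lower-bound the M--H term by the exact-sampling term: if $\tilde g_s$ approximates $g_s$ with multiplicative error $|\tilde g_s/g_s-1|\le\eta_s$ on $[a,b]$, then both arguments of the $\min$ are at least $(1-\eta_s)g_s(x(i))g_s(v)$ while $\int\tilde g_s\le(1+\eta_s)\int g_s$, so the summand is at least $\tfrac{1-\eta_s}{1+\eta_s}\ge1-2\eta_s$ times the corresponding term of the \emph{exact} Poisson-Gibbs operator. Once this factor is pulled out, the residual sum is precisely the object bounded in Theorem~\ref{thm:discrete} (Jensen's inequality on $1/x$, conversion of the exp-of-sum to a product of independent Poisson moment generating functions, and the resulting $\exp(-4L^2/\lambda)$ estimate under $\lambda\ge2L$). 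To control $\eta_s$ I would invoke the analyticity hypothesis: since $\lambda\ge2L$ forces $|\bar\phi(z)|\le L/\lambda\le\tfrac12$ throughout the Bernstein ellipse, $1+\bar\phi(z)$ is bounded away from $0$, so $\log(1+\bar\phi)$ and hence $g_s=\exp(U^{(s)})$ are analytic there and Theorem~\ref{thm:cheby} applies; its ellipse magnitude bound $V$ yields the $\exp(L)$ factor and working in the reduced ellipse yields the $\rho^{-m/2}$ decay.

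The main obstacle is converting these per-$s$ estimates into the single uniform factor $1-\tfrac{8\exp(L)\rho^{-m/2}}{\sqrt{\rho-1}}$. The difficulty is that the approximated density $g_s=\exp(U^{(s)})$ is itself random through the minibatch $s$, its supremum over the ellipse is random and unbounded, and the multiplicative error $\eta_s$ is largest exactly where $g_s$ is small, so no deterministic uniform bound on $\eta_s$ is available. The real work is therefore to average the error over the Poisson variables: a second-moment / Cauchy--Schwarz estimate, using the Poisson moment generating function to tame $\mathbf{E}[\sup_{\text{ellipse}}|g_s|]$, is what produces both the square-root form of the error factor and the $\exp(L)$ constant, and this must be kept decoupled from the Jensen/MGF computation that supplies $\exp(-4L^2/\lambda)$. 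I would expect the remaining pieces --- the explicit Chebyshev constants, the reduced-ellipse bookkeeping, and assembling the inequality into $\bar\gamma\ge c\,\gamma$ via the variational characterization of the spectral gap --- to be routine given Theorems~\ref{thm:discrete} and~\ref{thm:cheby}.
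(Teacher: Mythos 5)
Most of your plan coincides with the paper's proof: the reversibility argument (folding the M--H acceptance into the kernel, cancelling $\pi(x)\pi(s\mid x)/g_s(x(i))$, and observing symmetry) is exactly right, as is the decomposition of the bound into a multiplicative-approximation factor $\tfrac{1-\eta_s}{1+\eta_s}\ge 1-2\eta_s$ times the exact Poisson-Gibbs term, with the latter yielding $\exp(-4L^2/\lambda)$ via Jensen and the Poisson/multinomial MGF. You also correctly identify the crux: the error $\eta_s$ (the paper's $C=\tfrac{4\rho^{-m}}{\rho-1}\exp(LB/\lambda)$, with $B=\sum_{\phi\in A[i]}r_\phi$) is random and unbounded, so the per-$s$ factor cannot simply be pulled out of the expectation. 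The gap is that the mechanism you propose for this decisive step is not one that delivers the stated constant, and you leave it unexecuted. The paper's mechanism is a \emph{truncation} argument, not a second-moment one: condition on the event $\{C\le\delta\}$, use $\mathbf{E}[X]\ge\mathbf{E}[X\mid A]\,\mathbf{P}(A)$ together with $\mathbf{E}[X\mid A]\le\mathbf{E}[X]/\mathbf{P}(A)$ to extract the factor $(1-2\delta)\,\mathbf{P}(C\le\delta)^2$ multiplying the unconditional Jensen/MGF bound, control $\mathbf{P}(C\ge\delta)$ by Markov's inequality via $\mathbf{E}[\exp(LB/\lambda)]\le\exp(\Lambda(\exp(L/\lambda)-1))\le\exp(2L)$ (this is where $\lambda\ge 2L$ is used), and finally choose $\delta=2\exp(L)\rho^{-m/2}/\sqrt{\rho-1}$ to balance $2\delta$ against $\tfrac{8\rho^{-m}\exp(2L)}{(\rho-1)\delta}$. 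Every square root in the theorem --- the $\rho^{-m/2}$, the $\sqrt{\rho-1}$, and the $\exp(L)$ in place of $\exp(2L)$ --- arises from this first-moment-plus-optimal-threshold balancing; in effect the factor is $1-4\sqrt{\smash[b]{\text{(first-moment bound on }C)}}$.

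Neither of your two suggested sources for these square roots works here. The reduced-ellipse device (the $\sigma=\sqrt{\rho}$ trick of Lemma~\ref{thm:chebyext}) is what the paper needs for PGDA (Theorem~\ref{thm:doupoly}), where the \emph{second} interpolation forces error control on an ellipse rather than on $[a,b]$; for PGITS it is unnecessary, and it would in any case produce a denominator $\sqrt{\rho}-1$, not the $\sqrt{\rho-1}$ in the statement, while leaving the randomness of $C$ untouched. A Cauchy--Schwarz/second-moment estimate of $\mathbf{E}[C\,Y]$ (with $Y$ the exact-chain term) runs into the structural problem that you must compare it against the only available \emph{lower} bound on $\mathbf{E}[Y]$, namely the Jensen bound, not against $\mathbf{E}[Y]$ itself; making the two comparable forces crude pointwise bounds such as $Y\le\exp(2LB/\lambda)/(b-a)$, which after the Poisson MGF gives a $\rho^{-m}$-decay term but with constants of order $\exp(cL)$ for $c$ far exceeding $1$ --- a theorem of the same flavor, but not the inequality $\bar\gamma\ge\bigl(1-\tfrac{8\exp(L)\rho^{-m/2}}{\sqrt{\rho-1}}\bigr)\exp(-4L^2/\lambda)\,\gamma$ you were asked to prove. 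So the missing idea is precisely the conditioning-plus-Markov-plus-optimized-$\delta$ argument.
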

We can set $m = \Theta(L)$ and $\lambda = \Theta(L^2)$ to make the ratio of the spectral gaps $O(1)$, which is independent of the size of the problem. If the parameters are set in this way, the total cost of PGITS is $O(m\cdot (\lambda + L)) = O(L\cdot L^2) = O(L^3)$. 

\begin{algorithm}[t]
  \caption{PGITS: Poisson-Gibbs Inverse Transform Sampling}
  \begin{algorithmic}
    \label{alg:PGITS}
    \STATE \textbf{given:} state $x \in \Omega$, degree $m$, domain $[a,b]$
    \LOOP
      \STATE \textbf{set} $i$, $s_{\phi}$, $S$, and $U$ as in Algorithm~\ref{alg:poisson-gibbs}.
      \STATE \textbf{construct} degree-$m$ Chebyshev polynomial approximation of polynomial PDF on $[a,b]$
      \[ \tilde{f}(v) \approx \exp(U_v) \]
      \STATE \textbf{compute} the CDF polynomial
      \[
        \tilde{F}(v) = \left( \int_a^b \tilde f(y) \; dy \right)^{-1} \int_a^v \tilde f(y) \; dy 
      \]
      \STATE \textbf{sample} $u \sim \operatorname{Unif}[0,1]$.
      \STATE \textbf{solve} root-finding problem for $v$: $\tilde{F}(v) = u$
       \STATE $\triangleright$ Metropolis-Hastings correction:
       \STATE \[ p \leftarrow \frac{\exp(U_{v}) \cdot \tilde{f}(x(i))}{\exp(U_{x(i)}) \cdot \tilde{f}(v)} \]
       \STATE \textbf{with probability} $\min(1,p)$, set $x(i) \leftarrow v$
      \STATE \textbf{output sample} $x$
    \ENDLOOP
  \end{algorithmic}
\end{algorithm}

\subsubsection{Proof of Theorem \ref{thm:poly}}\label{sec:proof-thm-poly}

\begin{proof}
Similar to the previous analysis of Poisson-Gibbs, we will show the PGITS is reversible by using the expression of the transition operator.
Then we will bound the spectral gap.

Let $T_{i,s}(x, y)$ denote the probability of transitioning from state $x$ to $y$ given that we have already chosen to sample variable $i$ with minibatch coefficients $s$. Then, the overall transition operator will be 
\[
T (x, y) = \Exv{T_{i,s}(x, y)}
\]
where the expectation is taken over $i$ and $s$.

Let the polynomial interpolant for $\exp(U_v)$ be $\tilde{f}(v)$ which is given in (\ref{eq:approx}).
Note that this interpolant is a function of the index $i$ and the minibatch coefficients $s$.
Then,
\begingroup
\allowdisplaybreaks
\begin{align*}
  T_{i,s}(x, y)
  &= 
    \rho(y(i))
    \cdot
    \min (1,a)\\
  &=
    \frac{
      \tilde{f}(y(i))
    }{
      \int \tilde{f}(u)du
    }
    \cdot
    \min\left(1, \frac{\exp(U_{y(i)})\tilde{f}(x(i))}{\exp(U_{x(i)})\tilde{f}(y(i))}\right)
\end{align*}  
Therefore,
\begin{align*}
  T(x, y)
  &=
  \frac{1}{n}\mathbf{E}{
    \frac{
      \tilde{f}(y(i))
    }{
      \int \tilde{f}(u)du
    }
    \cdot
    \min\left(1, \frac{\exp(U_{y(i)})\tilde{f}(x(i))}{\exp(U_{x(i)})\tilde{f}(y(i))}\right)}\\
  &=
  \frac{1}{n}\mathbf{E}{
    \frac{
      1
    }{
      \int \tilde{f}(u)du
    }
    \cdot
    \min\left(\tilde{f}(y(i)), \exp(U_{y(i)} - U_{x(i)})\tilde{f}(x(i))\right)}\\
     &=
  \frac{1}{n}\mathbf{E}{
    \frac{
      1
    }{
      \int \tilde{f}(u)du
    }
    \cdot
    \min\left(\tilde{f}(y(i)), \exp\left(\sum_{\phi\in A[i]}s_{\phi}\log \frac{1+\frac{L}{\lambda M_{\phi}}\phi(y)}{1+\frac{L}{\lambda M_{\phi}}\phi(x)}\right)\tilde{f}(x(i))\right)}\\
    &=
  \frac{1}{n}\sum_s
    \frac{
      1
    }{
      \int \tilde{f}(u)du
    }
    \cdot
    \min\left(\tilde{f}(y(i)), \exp\left(\sum_{\phi\in A[i]}s_{\phi}\log \frac{1+\frac{L}{\lambda M_{\phi}}\phi(y)}{1+\frac{L}{\lambda M_{\phi}}\phi(x)}\right)\tilde{f}(x(i))\right)
    \\&\hspace{2em}\cdot
    \exp\left(
    \sum_{\phi\in A[i]}
    s_{\phi} \log\left( \frac{\lambda M_{\phi}}{L} + \phi(x) \right)
    -
    \log\left( s_{\phi}! \right)
    -
    \left( \frac{\lambda M_{\phi}}{L} + \phi(x) \right)
  \right)\\
    &=
  \frac{1}{n}\sum_s
    \frac{
      1
    }{
      \int \tilde{f}(u)du
    }
    \cdot
    \min\Bigg(\tilde{f}(y(i))\exp\left(
    \sum_{\phi\in A[i]}
    s_{\phi} \log\left( 1 + \frac{L}{\lambda M_{\phi}} \phi(x) \right)\right), 
    \\&\hspace{2em}\exp\left(\sum_{\phi\in A[i]}s_{\phi}\log\left( 1+\frac{L}{\lambda M_{\phi}}\phi(y)\right)\right)\tilde{f}(x(i))\Bigg)
    \\&\hspace{2em}\cdot
    \exp\left(
    \sum_{\phi\in A[i]}\Bigg(
    s_{\phi} \log\left( \frac{\lambda M_{\phi}}{L} \right)
    -
    \log\left( s_{\phi}! \right)
    -
    \left( \frac{\lambda M_{\phi}}{L} + \phi(x) \right)
  \right)\Bigg)\\
    &=
  \frac{1}{n}\sum_s
    \frac{
      1
    }{
      \int \tilde{f}(u)du
    }
    \cdot
    \min\Bigg(\tilde{f}(y(i))\exp\left(
    \sum_{\phi\in A[i]}
    s_{\phi} \log\left( 1 + \frac{L}{\lambda M_{\phi}} \phi(x) \right)\right), 
    \\&\hspace{2em}\exp\left(\sum_{\phi\in A[i]}s_{\phi}\log\left( 1+\frac{L}{\lambda M_{\phi}}\phi(y)\right)\right)\tilde{f}(x(i))\Bigg)
    \\&\hspace{2em}\cdot
    \exp\left(
    \sum_{\phi\in A[i]}\Bigg(
    s_{\phi} \log\left( \frac{\lambda M_{\phi}}{L} \right)
    -
    \log\left( s_{\phi}! \right)
    -
    \left( \frac{\lambda M_{\phi}}{L} \right)\right)\Bigg)\exp(-U_{x(i)})
\end{align*}
\endgroup
Multiplying $\pi(x)$ on both sides,
\begin{align*}
  &\hspace{1em}\pi(x)T(x, y)\\
&=
  \frac{\exp(U_{\neg i}(x))}{nZ}\sum_s
    \frac{1}{\int \tilde{f}(u)du}\cdot
    \min\Bigg(\tilde{f}(y(i))\exp\left(
    \sum_{\phi\in A[i]}
    s_{\phi} \log\left( 1 + \frac{L}{\lambda M_{\phi}} \phi(x) \right)\right), 
    \\&\hspace{2em}\tilde{f}(x(i)) \exp\left(\sum_{\phi\in A[i]}s_{\phi}\log\left( 1+\frac{L}{\lambda M_{\phi}}\phi(y)\right)\right)\Bigg)
    \\&\hspace{2em}\cdot
    \exp\left(
    \sum_{\phi\in A[i]}\Bigg(
    s_{\phi} \log\left( \frac{\lambda M_{\phi}}{L} \right)
    -
    \log\left( s_{\phi}! \right)
    -
    \left( \frac{\lambda M_{\phi}}{L} \right)\right)\Bigg)
\end{align*}
This expression is symmetric in $x$ and $y$, so it follows that
\begin{align*}
  \pi(x) T(x, y)
  =
  \pi(y) T(y, x)
\end{align*}
Thus the Markov chain is reversible, and its stationary distribution is $\pi$.

We now bound its spectral gap, using the technique of Dirichlet forms. First, as before, we start by re-writing the chain in terms of an expectation of a new random variable $r_{\phi}$ where $r_{\phi} \sim \text{Poisson}\left( \frac{\lambda M_{\phi}}{L} \right)$ and the $r_{\phi}$ are all independent.
We also define $\bar \phi(x) = \frac{L \phi(x)}{\lambda M_{\phi}}$ as before.
This gives us
\begin{align*}
  \pi(x)T(x, y)
  &=
  \frac{\exp(U_{\neg i}(x))}{nZ} \; \mathbf{E}_r\Bigg[
    \frac{1}{\int \tilde{f}(u)du}\cdot
    \min\Bigg(\tilde{f}(y(i))\exp\left(
    \sum_{\phi\in A[i]}
    r_{\phi} \log\left( 1 + \bar \phi(x) \right)\right), 
    \\&\hspace{4em}\tilde{f}(x(i)) \exp\left(\sum_{\phi\in A[i]} r_{\phi}\log\left( 1+ \bar \phi(y)\right)\right)\Bigg) \Bigg]
  \\ &=
  \frac{\exp(U_{\neg i}(x))}{nZ} \; \mathbf{E}_r\left[
    \frac{1}{\int \tilde{f}(u)du}\cdot
    \min\left(\tilde{f}(y(i))\exp\left( U_{x(i)} \right), 
    \tilde{f}(x(i)) \exp\left(U_{y(i)}\right)\right) \right]
\end{align*}
where now the $\tilde f$ are considered to be a function of $r_\phi$ rather than $s_{\phi}$ as before.

To proceed further we will need to use the fact that $\tilde f$ is a Chebyshev interpolant to bound its error compared with $U$.
Recall that, here,
\[
    U_v = \sum_{\phi \in A[i]} r_{\phi} \log\left( 1 + \frac{L}{\lambda M_{\phi}} \phi(z_v) \right)
    = \sum_{\phi \in A[i]} r_{\phi} \log\left( 1 + \bar \phi(z_v) \right),
\]
and $\tilde f(v) \approx \exp(U_v)$ in the sense of being a degree-$m$ Chebyshev polynomial interpolant.
Recall that we assumed that the each function $\phi$, treated as a function in any single variable, must be analytic on a (shifted) Bernstein ellipse on the interval $[a,b]$ with parameter $\rho$ (i.e. a standard Bernstein ellipse on $[-1,1]$ with parameter $\rho$ shifted and scaled to have its foci at $a$ and $b$), and that its magnitude must be bounded by
\[
    \Abs{\phi(z)} \le M_{\phi}
\]
for any $z$ in this ellipse (keeping all the other parameters as usual within $[a,b]$.
It follows that the magnitude of the function $U_v$ is bounded by
\begin{align*}
    \Abs{ \exp(U_v) } 
    &= 
    \Abs{ \exp\left( \sum_{\phi \in A[i]} r_{\phi} \log\left( 1 + \frac{L}{\lambda M_{\phi}} \phi(z_v) \right) \right) } \\
    &= 
    \prod_{\phi \in A[i]} \Abs{ 1 + \frac{L}{\lambda M_{\phi}} \phi(z_v) }^{r_{\phi}} \\
    &\le
    \prod_{\phi \in A[i]} \left( 1 + \frac{L}{\lambda} \right)^{r_{\phi}}.
\end{align*}
Therefore, from Theorem \ref{thm:cheby}, we know that
\[
    \Abs{ \tilde f(v) - \exp(U_v) }
    \le
    \frac{4 \rho^{-m}}{\rho - 1} \cdot \prod_{\phi \in A[i]} \left( 1 + \frac{L}{\lambda} \right)^{r_{\phi}}
    =
    \frac{4 \rho^{-m}}{\rho - 1} \cdot \left( 1 + \frac{L}{\lambda} \right)^{\sum_{\phi \in A[i]} r_{\phi}}.
\]
Since we also assumed that $\phi(z)$ is always non-negative, $U_v$ must also be non-negative, and so in particular $\exp(-U_v) \le 1$, so
\[
    \Abs{ \frac{\tilde f(v)}{\exp(U_v)} - 1 } 
    \le
    \frac{4 \rho^{-m}}{\rho - 1} \cdot \left( 1 + \frac{L}{\lambda} \right)^{\sum_{\phi \in A[i]} r_{\phi}}
    \le
    \frac{4 \rho^{-m}}{\rho - 1} \cdot \exp\left( \frac{L}{\lambda} \sum_{\phi \in A[i]} r_{\phi} \right).
\]
If we now define
\[
    C = \frac{4 \rho^{-m}}{\rho - 1} \cdot \exp\left( \frac{L}{\lambda} \sum_{\phi \in A[i]} r_{\phi} \right),
\]
then
\[
    (1 - C) \cdot \exp(U_v) \le \tilde f(v) \le (1 + C) \cdot \exp(U_v).
\]
In particular, this means that
\[
    \min\left(\tilde{f}(y(i))\exp\left( U_{x(i)} \right), 
    \tilde{f}(x(i)) \exp\left(U_{y(i)}\right)\right)
    \ge
    (1 - C) \cdot \exp\left( U_{x(i)} + U_{y(i)} \right),
\]
and
\[
    \frac{1}{\int \tilde{f}(u) \; du}
    \ge
    \frac{1}{1 + C} \cdot \frac{1}{\int \exp(U_u) \; du}.
\]
Substituting this into our bound above gives
\begin{align*}
  \pi(x)T(x, y)
  &\ge
  \frac{\exp(U_{\neg i}(x))}{nZ} \; \mathbf{E}_r\left[ \frac{1 - C}{1 + C} \cdot
    \frac{\exp\left( U_{x(i)} + U_{y(i)} \right)}{\int \exp(U_u) \; du} \right].
\end{align*}
Now, recall that we set this up by sampling $r_{\phi}$ independently from a Poisson random variable $r_{\phi} \sim \text{Poisson}\left( \frac{\lambda M_{\phi}}{L} \right)$.
This distribution is equivalent to assigning
\[
    \Lambda =  \sum_{\phi \in A[i]} \frac{\lambda M_{\phi}}{L},
\]
sampling the random variable $B \sim \text{Poisson}\left( \Lambda \right)$, and then sampling $r_{\phi} \sim \operatorname{Multinomial}\left(B, \frac{\lambda M_{\phi}}{\Lambda L} \right)$.
If we re-think our distribution as coming from this process, then by the Law of Total Expectation,
\begin{align*}
  \pi(x)T(x, y)
  &\ge
  \frac{\exp(U_{\neg i}(x))}{nZ} \; \mathbf{E}_B\left[ \frac{1 - C}{1 + C}  \cdot \mathbf{E}_r\left[
    \frac{\exp\left( U_{x(i)} + U_{y(i)} \right)}{\int \exp(U_u) \; du} \middle| B \right] \right],
\end{align*}
where we can pull out the terms in $C$ because we can write $C$ to depend only on $B$ as
\[
    C 
    =
    \frac{4 \rho^{-m}}{\rho - 1} \cdot \exp\left( \frac{L}{\lambda} \sum_{\phi \in A[i]} r_{\phi} \right)
    =
    \frac{4 \rho^{-m}}{\rho - 1} \cdot \exp\left( \frac{L B}{\lambda} \right).
\]
Next, we can bound this inner expectation with
\begin{align*}
    &\mathbf{E}_r\left[
    \frac{\exp\left( U_{x(i)} + U_{y(i)} \right)}{\int \exp(U_u) \; du} \middle| B \right] \\
    &=
    \mathbf{E}_r\left[
    \frac{1}{\int \exp(U_u - U_{x(i)} - U_{y(i)}) \; du} \middle| B \right] \\
    &\ge
    \mathbf{E}_r\left[
    \int \exp(U_u - U_{x(i)} - U_{y(i)}) \; du \middle| B \right]^{-1} \\
    &=
    \mathbf{E}_r\left[
    \int 
        \exp\left(\sum_{\phi \in A[i]} r_{\phi} \left(
            \log\left( 1 + \bar \phi(z_u) \right)
            -
            \log\left( 1 + \bar \phi(x) \right)
            -
            \log\left( 1 + \bar \phi(y) \right)
        \right) \right)
    \; du \middle| B \right]^{-1} \\
    &=
    \mathbf{E}_r\left[
    \int 
        \exp\left(\sum_{\phi \in A[i]} r_{\phi} t_{\phi} \right)
    \; du \middle| B \right]^{-1} \\
    &=
    \left( \int \mathbf{E}_r\left[
        \exp\left(\sum_{\phi \in A[i]} r_{\phi} t_{\phi} \right)
    \middle| B \right]  \; du \right)^{-1},
\end{align*}
where we define
\[
    t_{\phi} 
    = 
    \log\left( 1 + \bar \phi(z_u) \right)
    -
    \log\left( 1 + \bar \phi(x) \right)
    -
    \log\left( 1 + \bar \phi(y) \right).
\]
This inner expectation is now just the moment-generating function of the multinomial distribution.
Applying the standard formula for that MGF gives us
\[
    \mathbf{E}_r\left[
        \exp\left(\sum_{\phi \in A[i]} r_{\phi} t_{\phi} \right)
    \middle| B \right]
    =
    \left(
        \sum_{\phi \in A[i]} \frac{\lambda M_{\phi}}{\Lambda L} \cdot \exp(t_{\phi})
    \right)^B.
\]
Substituting this back into our original expression gives
\begin{align*}
  \pi(x)T(x, y)
  &\ge
  \frac{\exp(U_{\neg i}(x))}{nZ} \; \mathbf{E}_B\left[ \frac{1 - C}{1 + C}  \cdot \left( \int \left(
        \sum_{\phi \in A[i]} \frac{\lambda M_{\phi}}{\Lambda L} \cdot \exp(t_{\phi})
    \right)^B  \; du \right)^{-1} \right].
\end{align*}
Next, let $\delta > 0$ be a small constant, to be assigned later.
Recall that for any non-negative random variable $X$ and any event $A$, by the Law of Total Probability,
\[
    \Exv{X}
    =
    \Exv{X | A} \cdot \mathbf{P}(A) + \Exv{X | \neg A} \cdot \mathbf{P}(\neg A)
    \ge
    \Exv{X | A} \cdot \mathbf{P}(A).
\]
So, since the interior of this expectation is a non-negative number, it follows that
\begin{align*}
  \pi(x)T(x, y)
  &\ge
  \frac{\exp(U_{\neg i}(x))}{nZ} \; \mathbf{E}_B\left[ \frac{1 - C}{1 + C}  \cdot \left( \int \left(
        \sum_{\phi \in A[i]} \frac{\lambda M_{\phi}}{\Lambda L} \cdot \exp(t_{\phi})
    \right)^B  \; du \right)^{-1} \middle| C \le \delta \right]
    \\&\hspace{4em}\cdot \mathbf{P}_B(C \le \delta) \\
  &\ge
  \frac{\exp(U_{\neg i}(x))}{nZ} \cdot \frac{1 - \delta}{1 + \delta} \cdot \mathbf{E}_B\left[ \left( \int \left(
        \sum_{\phi \in A[i]} \frac{\lambda M_{\phi}}{\Lambda L} \cdot \exp(t_{\phi})
    \right)^B  \; du \right)^{-1} \middle| C \le \delta \right]
     \\&\hspace{4em}\cdot \mathbf{P}_B(C \le \delta).
\end{align*}
By Jensen's inequality again, we get
\begin{align*}
  \pi(x)T(x, y)
  &\ge
  \frac{\exp(U_{\neg i}(x))}{nZ} \; \mathbf{E}_B\left[ \frac{1 - C}{1 + C}  \cdot \left( \int \left(
        \sum_{\phi \in A[i]} \frac{\lambda M_{\phi}}{\Lambda L} \cdot \exp(t_{\phi})
    \right)^B  \; du \right)^{-1} \middle| C \le \delta \right]
    \\&\hspace{4em}\cdot \mathbf{P}_B(C \le \delta) \\
  &\ge
  \frac{\exp(U_{\neg i}(x))}{nZ} \cdot \frac{1 - \delta}{1 + \delta} \cdot  \left( \int \mathbf{E}_B\left[\left(
        \sum_{\phi \in A[i]} \frac{\lambda M_{\phi}}{\Lambda L} \cdot \exp(t_{\phi})
    \right)^B \middle| C \le \delta \right] \; du \right)^{-1}
     \\&\hspace{4em}\cdot \mathbf{P}_B(C \le \delta).
\end{align*}
Since this inner expectation is again non-negative, we can again apply our above inequality, but in the opposite direction, giving
\[
    \Exv{X | A} \le \frac{\Exv{X}}{\mathbf{P}(A)}.
\]
This produces
\begin{align*}
  \pi(x)T(x, y)
  &\ge
  \frac{\exp(U_{\neg i}(x))}{nZ} \cdot \frac{1 - \delta}{1 + \delta} \cdot  \left( \int \mathbf{E}_B\left[\left(
        \sum_{\phi \in A[i]} \frac{\lambda M_{\phi}}{\Lambda L} \cdot \exp(t_{\phi})
    \right)^B \right] \; du \right)^{-1}
     \\&\hspace{4em}\cdot \mathbf{P}_B(C \le \delta)^2.
\end{align*}
Now, we are just left with the MGF of a Poisson-distributed random variable.
This we already know to be
\begin{align*}
    \mathbf{E}_B\left[\left(
        \sum_{\phi \in A[i]} \frac{\lambda M_{\phi}}{\Lambda L} \cdot \exp(t_{\phi})
    \right)^B \right]
    &=
    \mathbf{E}_B\left[ \exp\left( B \log\left(
        \sum_{\phi \in A[i]} \frac{\lambda M_{\phi}}{\Lambda L} \cdot \exp(t_{\phi})
    \right) \right) \right] \\
    &=
    \exp\left( \Lambda 
    \left( \left( \sum_{\phi \in A[i]} \frac{\lambda M_{\phi}}{\Lambda L} \cdot \exp(t_{\phi}) \right) - 1 \right) \right) \\
    &=
    \exp\left( \sum_{\phi \in A[i]} \frac{\lambda M_{\phi}}{L} \cdot \left( \exp(t_{\phi}) - 1 \right) \right),
\end{align*}
where in the last line we can leverage the fact that
\[
    \sum_{\phi \in A[i]} \frac{\lambda M_{\phi}}{\Lambda L} = 1
\]
to justify pulling the $-1$ inside the sum.
From the analysis of Poisson-Gibbs, we had that
\[
    \exp(t_{\phi}) - 1
    \le
    \bar \phi(z_u) - \bar \phi(x) - \bar \phi(y) + \frac{4L^2}{\lambda^2}.
\]
So,
\begin{align*}
    \mathbf{E}_B\left[\left(
        \sum_{\phi \in A[i]} \frac{\lambda M_{\phi}}{\Lambda L} \cdot \exp(t_{\phi})
    \right)^B \right]
    &\le
    \exp\left( \sum_{\phi \in A[i]} \frac{\lambda M_{\phi}}{L} \cdot \left( 
        \bar \phi(z_u) - \bar \phi(x) - \bar \phi(y) + \frac{4L^2}{\lambda^2}
    \right) \right) \\
    &=
    \exp\left( \sum_{\phi \in A[i]} \left( 
        \phi(z_u) - \phi(x) - \phi(y) + \frac{4L M_{\phi}}{\lambda}
    \right) \right) \\
    &\le
    \exp\left( \bar U_u - \bar U_{x(i)} - \bar U_{y(i)} + \frac{4L^2}{\lambda} \right),
\end{align*}
where as in the analysis of Poisson-Gibbs, $\bar U_v$ denotes the assignment of $U_v$ in the plain Gibbs sampling algorithm (Algorithm~\ref{alg:gibbs}),
\[
    \bar U_v = \sum_{\phi \in A[i]} \phi(z_v).
\]
Substituting this expression in to our overall bound, we get
\begin{align*}
  \pi(x)T(x, y)
  &\ge
  \frac{\exp(U_{\neg i}(x))}{nZ} \cdot \frac{1 - \delta}{1 + \delta} \cdot  \left( \int  \exp\left( \bar U_u - \bar U_{x(i)} - \bar U_{y(i)} + \frac{4L^2}{\lambda} \right) \; du \right)^{-1}
     \\&\hspace{4em}\cdot \mathbf{P}_B(C \le \delta)^2 \\
  &=
  \frac{\exp(U(x))}{nZ} \cdot \frac{1 - \delta}{1 + \delta} \cdot \frac{\exp(\bar U_{y(i)})}{\int \exp\left( \bar U_u \right) \; du}
     \\&\hspace{4em}\cdot \exp\left(-\frac{4L^2}{\lambda}\right) \cdot \mathbf{P}_B(C \le \delta)^2.
\end{align*}
Finally, if we let $G$ denote the transition probability operator of plain Gibbs sampling, we notice right away that
\begin{align*}
  \pi(x)T(x, y)
  &\ge
  \frac{1 - \delta}{1 + \delta}
  \cdot \exp\left(-\frac{4L^2}{\lambda}\right) \cdot \mathbf{P}_B(C \le \delta)^2 \cdot \pi(x) G(x,y) \\
  &\ge
  (1 - 2\delta) \cdot \exp\left(-\frac{4L^2}{\lambda}\right) \cdot \mathbf{P}_B(C \le \delta)^2 \cdot \pi(x) G(x,y).
\end{align*}
To get a final bound, all we need to do is bound $\mathbf{P}_B(C \le \delta)$.
This is straightforward, since
\begin{align*}
    \mathbf{P}_B(C \le \delta)
    &=
    \mathbf{P}_B\left(\frac{4 \rho^{-m}}{\rho - 1} \cdot \exp\left( \frac{L B}{\lambda} \right) \le \delta \right) \\
    &=
    \mathbf{P}_B\left( \exp\left( \frac{L B}{\lambda} \right) \le \frac{\rho - 1}{4 \rho^{-m}} \cdot \delta \right).
\end{align*}
Notice that by the MGF formula for $B$,
\[
    \mathbf{E}_B\left[ \exp\left( \frac{L B}{\lambda} \right) \right] 
    \le
    \exp\left(\Lambda \left(\exp\left( \frac{L}{\lambda} \right) - 1 \right) \right).
\]
Since we chose a minibatch size parameter $\lambda \ge 2 L$, it follows that $L / \lambda \le 1/2$, and so
\[
    \exp\left( \frac{L}{\lambda} \right) - 1 \le \frac{2L}{\lambda},
\]
and so since also
\[
    \Lambda = \sum_{\phi \in A[i]} \frac{\lambda M_{\phi}}{L} \le \lambda.
\]
it follows that
\[
    \mathbf{E}_B\left[ \exp\left( \frac{L B}{\lambda} \right) \right] 
    \le
    \exp\left(\lambda \cdot \frac{2L}{\lambda} \right)
    =
    \exp(2 L).
\]
Therefore, by Markov's inequality,
\begin{align*}
    \mathbf{P}_B(C \ge \delta)
    &=
    \mathbf{P}_B\left( \exp\left( \frac{L B}{\lambda} \right) \ge \frac{\rho - 1}{4 \rho^{-m}} \cdot \delta \right) \\
    &\le
    \frac{\exp(2L)}{\frac{\rho - 1}{4 \rho^{-m}} \cdot \delta} \\
    &\le
    \frac{4 \rho^{-m}}{\rho - 1} \cdot \frac{\exp(2L)}{\delta}.
\end{align*}
Thus,
\begin{align*}
    \mathbf{P}_B(C \le \delta)
    &=
    1 - \mathbf{P}_B(C \ge \delta) \\
    &\ge
    1 - \frac{4 \rho^{-m}}{\rho - 1} \cdot \frac{\exp(2L)}{\delta},
\end{align*}
and in particular
\begin{align*}
    \mathbf{P}_B(C \le \delta)^2
    &=
    \left( 1 - \mathbf{P}_B(C \ge \delta) \right)^2 \\
    &\ge
    1 - 2 \mathbf{P}_B(C \ge \delta) \\
    &\ge
    1 - \frac{8 \rho^{-m}}{\rho - 1} \cdot \frac{\exp(2L)}{\delta}.
\end{align*}
Substituting this back into our overall bound gives us
\begin{align*}
  \pi(x)T(x, y)
  &\ge
  \frac{1 - \delta}{1 + \delta}
  \cdot \exp\left(-\frac{4L^2}{\lambda}\right) \cdot \mathbf{P}_B(C \le \delta)^2 \cdot \pi(x) G(x,y) \\
  &\ge
  (1 - 2 \delta)
  \cdot 
  \left(1 - \frac{8 \rho^{-m}}{\rho - 1} \cdot \frac{\exp(2L)}{\delta} \right)
  \cdot
  \exp\left(-\frac{4L^2}{\lambda}\right) \cdot \pi(x) G(x,y) \\
  &\ge
  \left(1 - 2 \delta - \frac{8 \rho^{-m}}{\rho - 1} \cdot \frac{\exp(2L)}{\delta} \right)
  \cdot
  \exp\left(-\frac{4L^2}{\lambda}\right) \cdot \pi(x) G(x,y).
\end{align*}
Finally, choosing the value of $\delta$ as
\[
    \delta = \frac{2 \exp(L)}{\rho^{m/2} \cdot \sqrt{\rho - 1}},
\]
we get
\begin{align*}
  \pi(x)T(x, y)
  &\ge
  \left(1 - \frac{8 \exp(L) \rho^{-m/2}}{\sqrt{\rho - 1}} \right)
  \cdot
  \exp\left(-\frac{4L^2}{\lambda}\right) \cdot \pi(x) G(x,y).
\end{align*}
Now applying the standard Dirichlet form argument, we get
\[
    \bar \gamma
    \ge
    \left(1 - \frac{8 \exp(L) \rho^{-m/2}}{\sqrt{\rho - 1}} \right)
    \cdot
    \exp\left(-\frac{4L^2}{\lambda}\right) \cdot \gamma,
\]
which was the desired expression.
\end{proof}

\subsection{Proof of Theorem \ref{thm:doupoly}}
\begin{proof}
The reversibility can be proved by the same procedure as in Section~\ref{sec:proof-thm-poly}.
By applying that same analysis, which did not depend on the manner in which the approximation $\tilde f$ was constructed, we can arrive at the expression
\begin{align*}
  \pi(x)T(x, y)
  &=
  \frac{\exp(U_{\neg i}(x))}{nZ} \; \mathbf{E}_r\left[
    \frac{1}{\int \tilde{f}(u)du}\cdot
    \min\left(\tilde{f}(y(i))\exp\left( U_{x(i)} \right), 
    \tilde{f}(x(i)) \exp\left(U_{y(i)}\right)\right) \right].
\end{align*}
By the assumption of $\phi(z)$, we have
\begin{align*}
\Abs{U_v} &=\Abs{ \sum_{\phi \in A[i]} r_{\phi} \log\left( 1 + \bar \phi(z_v) \right)}\\
&\le \sum_{\phi \in A[i]} r_{\phi} \Abs{\log\left( 1 + \frac{L}{\lambda M_{\phi}}\phi(x) \right)}\\
&\le \sum_{\phi \in A[i]} r_{\phi} \Abs{\frac{2L}{\lambda M_{\phi}}\phi(x)}\\
&\le \frac{2L}{\lambda} \sum_{\phi \in A[i]} r_{\phi}.
\end{align*}
where the second inequality holds because
\[
    \Abs{z}\le \frac{1}{2}
    \hspace{1em}\Rightarrow\hspace{1em}
    \Abs{\log(1 + z)}\le 2\Abs{z},
\]
using the assumptions $\lambda\ge 2L$ and $\Abs{\phi(x)}\le M_{\phi}$.
Now applying Lemma \ref{thm:chebyext} in Section~\ref{sec:chebyext}, assigning $\sigma=\sqrt{\rho}$ gives us,
\begin{align*}
\Abs{\tilde{U}_v - U_v} 
&\le \frac{8\rho^{-\frac{m}{2}}}{\sqrt{\rho} - 1}\cdot \frac{L}{\lambda} \sum_{\phi \in A[i]} r_{\phi},
\end{align*}
for any $v$ in the shifted-and-scaled Bernstein ellipse with parameter $\sqrt{\rho}$.

Next, since $\tilde{U}_v$ is a polynomial in $v$, $\exp(\tilde{U}_v)$ must be analytic everywhere in $\mathbb{C}$. In particular it must be analytic on the Bernstein ellipse on the interval $[a, b]$ with parameter $\sqrt{\rho}$.
On that interval, it is bounded by
\begin{align*}
\Abs{\exp(\tilde{U}_v)} 
&\le \exp\left(\Abs{\tilde{U}_v}\right)\\
&\le \exp\left(\Abs{U_v} + \Abs{\tilde U_v - U_v}\right)\\
&\le \exp\left(\frac{2L}{\lambda}\sum_{\phi \in A[i]} r_{\phi}\right) \cdot \exp\left(\frac{8\rho^{-\frac{m}{2}}}{\sqrt{\rho} - 1}\cdot \frac{L}{\lambda} \sum_{\phi \in A[i]} r_{\phi}\right)\\
&\le \exp\left(\frac{4\rho^{-\frac{m}{2}}+ \sqrt{\rho} - 1}{\sqrt{\rho} - 1}\cdot \frac{2L}{\lambda} \sum_{\phi \in A[i]} r_{\phi}\right).
\end{align*}
Now applying Theorem \ref{thm:cheby} using the Bernstein ellipse with parameter $\sqrt{\rho}$, we have, for any $v$ on the interval $[a,b]$,
\[
\Abs{\tilde{f}(v) - \exp(\tilde{U}_v)}\le \frac{4\rho^{-\frac{k}{2}}}{\sqrt{\rho} - 1}\cdot \exp\left(\frac{4\rho^{-\frac{m}{2}}+ \sqrt{\rho} - 1}{\sqrt{\rho} - 1}\cdot \frac{2L}{\lambda} \sum_{\phi \in A[i]} r_{\phi}\right)
\]
Therefore, it follows that
\begin{align*}
\Abs{\frac{\tilde{f}(v)}{\exp(U_v)} - 1}
&\le
\Abs{\frac{\tilde{f}(v) - \exp(\tilde U_v) + \exp(\tilde U_v)}{\exp(U_v)} - 1}  \\
&\le
\frac{\Abs{\tilde{f}(v) - \exp(\tilde U_v)}}{\exp(U_v)}
+
\Abs{\exp(\tilde U_v - U_v) - 1}   \\
&\le
\Abs{\tilde{f}(v) - \exp(\tilde U_v)}
+
\exp\left(\Abs{\tilde U_v - U_v}\right) - 1,
\end{align*}
where the last inequality is justified by the fact that $U_v$ is non-negative and for any $x$, $\Abs{\exp(x) - 1} \le \exp(\Abs{x}) - 1$.
Now substituting in our bounds from above gives us
\begin{align*}
&\hspace{-1em}\Abs{\frac{\tilde{f}(v)}{\exp(U_v)} - 1} \\
&\le \exp\left(\frac{8\rho^{-\frac{m}{2}}}{\sqrt{\rho} - 1}\cdot \frac{L}{\lambda} \sum_{\phi \in A[i]} r_{\phi}\right) + \frac{4\rho^{-\frac{k}{2}}}{\sqrt{\rho} - 1}\cdot \exp\left(\frac{4\rho^{-\frac{m}{2}}+ \sqrt{\rho} - 1}{\sqrt{\rho} - 1}\cdot \frac{2L}{\lambda} \sum_{\phi \in A[i]} r_{\phi}\right) - 1
\end{align*}
As before, we let $B = \sum_{\phi\in A[i]}r_{\phi}$ where $B\sim \text{Poisson}(\Lambda)$. Then 
\begin{align*}
\Abs{\frac{\tilde{f}(v)}{\exp(U_v)} - 1} 
&\le \exp\left(\frac{8\rho^{-\frac{m}{2}}}{\sqrt{\rho} - 1}\cdot \frac{LB}{\lambda} \right) + \frac{4\rho^{-\frac{k}{2}}}{\sqrt{\rho} - 1}\cdot \exp\left(\frac{4\rho^{-\frac{m}{2}}+ \sqrt{\rho} - 1}{\sqrt{\rho} - 1}\cdot \frac{2LB}{\lambda}\right) - 1
\end{align*}
We define
\[E = \exp\left(\frac{8\rho^{-\frac{m}{2}}}{\sqrt{\rho} - 1}\cdot \frac{LB}{\lambda} \right) + \frac{4\rho^{-\frac{k}{2}}}{\sqrt{\rho} - 1}\cdot \exp\left(\frac{4\rho^{-\frac{m}{2}}+ \sqrt{\rho} - 1}{\sqrt{\rho} - 1}\cdot \frac{2LB}{\lambda}\right) - 1,\]
and by following the same steps as used in Section~\ref{sec:proof-thm-poly}, with $E$ in place of the $C$ of that proof, we can get, for any constant $\delta > 0$,
\begin{align*}
  \pi(x)T(x, y)
  &\ge
  (1 - 2\delta) \cdot \exp\left(-\frac{4L^2}{\lambda}\right) \cdot \mathbf{P}_B(E \le \delta)^2 \cdot \pi(x) G(x,y).
\end{align*}
All that remains is to bound $\mathbf{P}_B(E \le \delta)$.
Using the MGF formula for $B$ twice, we get that 
\begin{align*}
  \mathbf{E}_B(E) &=  \frac{4\rho^{-\frac{k}{2}}}{\sqrt{\rho} - 1}\cdot \exp\left(\Lambda\left(\exp\left(\frac{4\rho^{-\frac{m}{2}}+ \sqrt{\rho} - 1}{\sqrt{\rho} - 1}\cdot \frac{2L}{\lambda}\right) - 1\right) \right)
  \\&\hspace{2em} + \exp\left(\Lambda\left(\exp\left(\frac{8\rho^{-\frac{m}{2}}}{\sqrt{\rho} - 1}\cdot\frac{L}{\lambda}\right) - 1\right) \right) - 1.
\end{align*}
If we require that $m$ is large enough that
\[
    4\rho^{-\frac{m}{2}} \le \sqrt{\rho} - 1,
\]
then
\begin{align*}
  \mathbf{E}_B(E) &\le  \frac{4\rho^{-\frac{k}{2}}}{\sqrt{\rho} - 1}\cdot \exp\left(\Lambda\left(\exp\left( \frac{4L}{\lambda}\right) - 1\right) \right)
  \\&\hspace{2em} + \exp\left(\Lambda\left(\exp\left(\frac{8\rho^{-\frac{m}{2}}}{\sqrt{\rho} - 1}\cdot\frac{L}{\lambda}\right) - 1\right) \right) - 1.
\end{align*}
By Taylor's theorem, for $x > 0$,
\[
    \exp(x) - 1 = \exp(x) - \exp(0) \le x \cdot \exp(x).
\]
So, since $\Lambda \le \lambda$, we can bound our expectation with
\begin{align*}
  \mathbf{E}_B(E) &\le \frac{4\rho^{-\frac{k}{2}}}{\sqrt{\rho} - 1}\cdot \exp\left(\Lambda \cdot \frac{4L}{\lambda} \cdot \exp\left( \frac{4L}{\lambda}\right) \right)
  \\&\hspace{2em} + \exp\left(\Lambda \cdot \frac{8\rho^{-\frac{m}{2}}}{\sqrt{\rho} - 1}\cdot\frac{L}{\lambda} \cdot \exp\left(\frac{8\rho^{-\frac{m}{2}}}{\sqrt{\rho} - 1}\cdot\frac{L}{\lambda}\right) \right) - 1
  \\&\le
  \frac{4\rho^{-\frac{k}{2}}}{\sqrt{\rho} - 1}\cdot \exp\left(4L \cdot \exp\left( \frac{4L}{\lambda}\right) \right)
  \\&\hspace{2em} + \exp\left( \frac{8\rho^{-\frac{m}{2}}}{\sqrt{\rho} - 1}\cdot L \cdot \exp\left(\frac{8\rho^{-\frac{m}{2}}}{\sqrt{\rho} - 1}\cdot\frac{L}{\lambda}\right) \right) - 1
  \\&\le
  \frac{4\rho^{-\frac{k}{2}}}{\sqrt{\rho} - 1}\cdot \exp\left(4L \cdot \exp\left( \frac{4L}{\lambda}\right) \right)
  \\&\hspace{2em} + \exp\left( \frac{8\rho^{-\frac{m}{2}}}{\sqrt{\rho} - 1}\cdot L \cdot \exp\left(\frac{4L}{\lambda}\right) \right) - 1.
\end{align*}
Since $\lambda \log(2) \ge 4 L$, we can bound $\exp(4L/\lambda) \le 2$, and so
\begin{align*}
  \mathbf{E}_B(E) &\le
  \frac{4\rho^{-\frac{k}{2}}}{\sqrt{\rho} - 1}\cdot \exp\left(8 L \right)
  + \exp\left( \frac{16 L \rho^{-\frac{m}{2}}}{\sqrt{\rho} - 1} \right) - 1.
\end{align*}
We now define
\[
    F
    =
    \frac{4\cdot \exp\left(8 L \right)\cdot\rho^{-\frac{k}{2}}}{\sqrt{\rho} - 1}
  + \exp\left( \frac{16 L \rho^{-\frac{m}{2}}}{\sqrt{\rho} - 1} \right) - 1.
\]
By Markov’s inequality,
\begin{align*}
  &\hspace{-0em}\mathbf{P}_B(E \ge \delta)\ge \frac{\mathbf{E}_B(E)}{\delta}\ge  F/\delta.
\end{align*}
It follows
\begin{align*}
  \mathbf{P}_B(E \le \delta)^2 = \left(1 - \mathbf{P}_B(E \ge \delta)\right)^2\ge 1 - 2\mathbf{P}_B(E \ge \delta)\ge 1 -  2F/\delta.
\end{align*}
Substituting it back into the overall bound,
\begin{align*}
  \pi(x)T(x, y)
  &\ge
  (1 - 2\delta) \cdot \exp\left(-\frac{4L^2}{\lambda}\right) \cdot \mathbf{P}_B(E \le \delta)^2 \cdot \pi(x) G(x,y)\\
  &\ge 
  \left(1 - 2\delta -  \frac{2F}{\delta}\right)\cdot\exp\left(-\frac{4L^2}{\lambda}\right)\cdot \pi(x) G(x,y)
\end{align*}
Let 
\[
\delta = \sqrt{F},
\]
it becomes
\begin{align*}
  \pi(x)T(x, y)
  &\ge
  \left(1 - 4\sqrt{F}\right)\cdot\exp\left(-\frac{4L^2}{\lambda}\right)\cdot \pi(x) G(x,y)
\end{align*}
Again, using the Dirichlet form we bound the spectral gap,
\begin{align*}
 \bar{\gamma}
&\geq
\left(1 - 4\sqrt{F}\right)\exp\left(\frac{-4L^2}{\lambda}\right)\cdot \gamma
\end{align*}
\end{proof}

\section{Poisson-MH}\label{sec:MH}
We apply our Poisson-minibatching method to Metropolis-Hasting sampling. In Poisson-minibatching M-H (Poisson-MH), we first generate a candidate $x^*$ from the proposal distribution $q(x^*|x)$. Then the M-H ratio will be calculated as following
\begin{align*}
p = \frac{\exp\left(\sum_{\phi \in S} s_{\phi} \log\left( 1 + \frac{L}{\lambda M_{\phi}}\phi(x^*) \right)\right)q(x^*|x)}{\exp\left(\sum_{\phi \in S} s_{\phi} \log\left( 1 + \frac{L}{\lambda M_{\phi}}\phi(x) \right)\right)q(x|x^*)}
\end{align*}

We accept $x^*$ with the probability $\min(1, p)$. After applying Poisson-minibatching, the M-H ratio no longer needs to use the whole dataset which will reduce the computational cost significantly. 

Theorem~\ref{thm:mh} is similar to the bounds of Poisson-Gibbs. As long as we set $\lambda = \Theta(L^2)$, the convergence is slowed down by at most a constant factor which is unrelated to the size of the  problem.
\subsection{Proof of Theorem~\ref{thm:mh}}
\begin{proof}
We begin with the transition probability from $x$ to $x^*$
\begin{align*}
  &T( x^*,  x)\\
  &=
  \mathbf{E}\left\{q( x^*| x)\min\left(1, \frac{q( x| x^*)\pi( x^*, s)}{q( x^*| x)\pi( x, s)}\right)\right\}\\
  &=
\mathbf{E}\left\{q( x^*| x)\min\left(1, \frac{q( x| x^*)\exp\left(  \sum_{\phi\in \Phi} \left[s_{\phi}\log\left( \frac{\lambda M_{\phi}}{L}+ \phi( x^*)\right) - \log s_{\phi}!\right]\right)}
{q( x^*| x)\exp\left(  \sum_{\phi\in \Phi} \left[s_{\phi}\log\left( \frac{\lambda M_{\phi}}{L}+ \phi( x)\right) - \log s_{\phi}!\right]\right)}\right)\right\}\\
&=
\mathbf{E}\left\{q( x^*| x)\min\left(1, \frac{q( x| x^*)\exp\left(  \sum_{\phi\in \Phi} \left[s_{\phi}\log\left( \frac{\lambda M_{\phi}}{L}+ \phi( x^*)\right)\right]\right)}
{q( x^*| x)\exp\left(  \sum_{\phi\in \Phi} \left[s_{\phi}\log\left( \frac{\lambda M_{\phi}}{L}+ \phi( x)\right) \right]\right)}\right)\right\}\\
&=
\sum_s\left\{q( x^*| x)\min\left(1, \frac{q( x| x^*)\exp\left(  \sum_{\phi\in \Phi} \left[s_{\phi}\log\left( \frac{\lambda M_{\phi}}{L}+ \phi( x^*)\right)\right]\right)}
{q( x^*| x)\exp\left(  \sum_{\phi\in \Phi} \left[s_{\phi}\log\left( \frac{\lambda M_{\phi}}{L}+ \phi( x)\right) \right]\right)}\right)\right\}\prod_{\phi \in \Phi} p(s_{\phi}| x)\\
&=
\sum_s\left\{q( x^*| x)\min\left(\exp\left(\sum_{\phi\in \Phi} \left[s_{\phi}\log\left( \frac{\lambda M_{\phi}}{L}+ \phi( x)\right) 
- \phi( x) -  \frac{\lambda M_{\phi}}{L}- \log s_{\phi}!\right] \right),\right.\right.\\
&\hspace{2em}\left.\left.\frac{q( x| x^*)\exp\left(  \sum_{\phi\in \Phi} \left[s_{\phi}\log\left( \frac{\lambda M_{\phi}}{L}+ \phi( x^*)\right)\right]\right)}
{q( x^*| x)\exp\left(  \sum_{\phi\in \Phi}\phi( x) +  \frac{\lambda M_{\phi}}{L}+ \log s_{\phi}! \right)}\right)\right\}\\
&=
\sum_s\left\{q( x^*| x)\min\left(\exp\left(\sum_{\phi\in \Phi} \left[s_{\phi}\log\left( \frac{\lambda M_{\phi}}{L}+ \phi( x)\right) 
- \phi( x) -  \frac{\lambda M_{\phi}}{L}- \log s_{\phi}!\right] \right),\right.\right.\\
&\hspace{2em}\left.\left.\frac{q( x| x^*) }
{q( x^*| x) }\exp\left(\sum_{\phi\in \Phi} \left[s_{\phi}\log\left( \frac{\lambda M_{\phi}}{L}+ \phi( x^*)\right) 
- \phi( x) -  \frac{\lambda M_{\phi}}{L}- \log s_{\phi}!\right] \right)\right)\right\}
\end{align*}

Multiplying $\pi(x)$ to both sides,
\begin{align*}
&\pi( x)T( x^*,  x)\\
  &=
\frac{1}{Z}\exp\left(\sum_{\phi\in \Phi}\phi( x) \right)T( x^*,  x)\\
  &=
\frac{1}{Z}\sum_s
\min\Bigg(q( x^*| x) \left(\exp\left(\sum_{\phi\in \Phi} \left[s_{\phi}\log\left( \frac{\lambda M_{\phi}}{L}+ \phi( x)\right)  -  \frac{\lambda M_{\phi}}{L}- \log s_{\phi}!\right] \right),\right.\\
&\hspace{2em}\left.q( x| x^*) 
\exp\left(\sum_{\phi\in \Phi} \left[s_{\phi}\log\left( \frac{\lambda M_{\phi}}{L}+ \phi( x^*)\right) -  \frac{\lambda M_{\phi}}{L}- \log s_{\phi}!\right] \right)\right)\Bigg)  
\end{align*}

This implies the Markov chain is reversible.

We can continue to reduce this to
\begin{align*}
  &\pi( x) T( x^*,  x)\\
  &=
\frac{1}{Z}\sum_s
\min\left(q( x^*| x) \exp\left(\sum_{\phi\in \Phi} s_{\phi}\left[\log\left( \frac{\lambda M_{\phi}}{L}+ \phi( x)  \right)- \log \frac{\lambda M_{\phi}}{L}\right] \right),\right.\\
&\hspace{2em}\left.q( x| x^*) 
\exp\left(\sum_{\phi\in \Phi} s_{\phi}\left[\log\left( \frac{\lambda M_{\phi}}{L}+ \phi( x^*) \right)- \log \frac{\lambda M_{\phi}}{L}\right] \right)\right)
\\&\hspace{2em}\cdot \prod_{\phi \in \Phi} \frac{1}{s_{\phi}!}\exp\left(- \frac{\lambda M_{\phi}}{L}\right)\left( \frac{\lambda M_{\phi}}{L}\right)^{s_{\phi}}\\
  &=
\frac{1}{Z}\sum_s
\min\left(q( x^*| x) \exp\left(\sum_{\phi\in \Phi} s_{\phi}\log\left( 1+ \frac{L}{\lambda M_{\phi}}\phi( x)  \right) \right),\right.\\
&\hspace{2em}\left.q( x| x^*) 
\exp\left(\sum_{\phi\in \Phi} s_{\phi}\log\left( 1+ \frac{L}{\lambda M_{\phi}}\phi( x^*) \right) \right)\right)
\cdot \prod_{\phi \in \Phi} \frac{1}{s_{\phi}!}\exp\left(- \frac{\lambda M_{\phi}}{L}\right)\left( \frac{\lambda M_{\phi}}{L}\right)^{s_{\phi}}  
\end{align*}

Similar to the previous proof, $s_{\phi}$ here are non-negative integers that a Poisson variable can take, not variables. So if we let $r_{\phi} \sim \text{Poisson}\left( \frac{\lambda M_{\phi}}{L} \right)$ and $r_{\phi}$ to be all independent, we can write this as
\begin{align*}
  \pi( x) T( x^*,  x)
  &=
\frac{1}{Z}\mathbf{E}
\min\left(q( x^*| x) \exp\left(\sum_{\phi\in \Phi} r_{\phi}\log\left( 1+ \frac{L}{\lambda M_{\phi}}\phi( x) \right) \right),\right.\\
&\left.q( x| x^*) 
\exp\left(\sum_{\phi\in \Phi} r_{\phi}\log\left( 1+ \frac{L}{\lambda M_{\phi}}\phi( x^*)\right)\right)\right)
\end{align*}

Assume $G( x^*,  x)$ is the transition operator of a plain MCMC. Consider the ratio,
\begin{align*}
\frac{\pi( x) T( x^*,  x)}
  {\pi( x) G( x^*,  x)}
&=
\frac{1}{Z}\mathbf{E}
\min\Bigg(q( x^*| x)\exp\left(\sum_{\phi\in \Phi} r_{\phi}\log\left( 1+ \frac{L}{\lambda M_{\phi}}\phi( x) \right)  \right),
\\&\hspace{2em}q( x| x^*)\exp\left(\sum_{\phi\in \Phi} r_{\phi}\log\left( 1+ \frac{L}{\lambda M_{\phi}}\phi( x^*)\right) \right)\Bigg)
\\&\hspace{2em}\cdot\Bigg[1\bigg/\Bigg(\frac{1}{Z}
\min\left(q( x^*| x)\exp\left(\sum_{\phi \in \Phi}\phi( x)\right), 
q( x| x^*)\exp\left(\sum_{\phi \in \Phi}\phi( x^*)\right) \right)\Bigg)\Bigg]
\end{align*}

We know that $\frac{\min(A, B)}{\min(C,D)} = \min\left(\frac{A}{\min(C,D)}, \frac{B}{\min(C,D)}\right) \geq \min\left(\frac{A}{C}, \frac{B}{D}\right)$. The last inequality is due to the fact that $\frac{1}{\min(C,D)}\geq \frac{1}{C}$ and $\frac{1}{\min(C,D)}\geq \frac{1}{D}$.

With this inequality, we can continue simplifying the ratio,
\begin{align*}
  \frac{\pi( x) T( x^*,  x)}
  {\pi( x) G( x^*,  x)}
&\geq
\mathbf{E}\Bigg[\min \Bigg(\frac{
\exp\left(\sum_{\phi\in \Phi} r_{\phi}\log\left( 1+ \frac{L}{\lambda M_{\phi}}\phi( x) \right)  \right)}
{
\exp\left(\sum_{\phi \in \Phi}\phi( x)\right)},
\\&\hspace{2em}\frac{
\exp\left(\sum_{\phi\in \Phi} r_{\phi}\log\left( 1+ \frac{L}{\lambda M_{\phi}}\phi( x^*)\right)  \right)}
{
\exp\left(\sum_{\phi \in \Phi}\phi( x^*)\right)}
\Bigg)\Bigg]\\
&=
\mathbf{E}\Bigg[\min \Bigg(
\exp\left(\sum_{\phi\in \Phi}\Bigg( r_{\phi}\log\left( 1+ \frac{L}{\lambda M_{\phi}}\phi( x) \right) - \phi( x) \Bigg)\right),
\\&\hspace{2em}\exp\left(\sum_{\phi\in \Phi}\Bigg( r_{\phi}\log\left( 1+ \frac{L}{\lambda M_{\phi}}\phi( x^*)\right) - \phi( x^*) \Bigg)\right)
\Bigg)\Bigg]\\
&=
\mathbf{E}\Bigg[\max \Bigg(
\exp\left(\sum_{\phi\in \Phi} \Bigg(\phi( x) - r_{\phi}\log\left( 1+ \frac{L}{\lambda M_{\phi}}\phi( x) \right)\Bigg)\right),
\\&\hspace{2em}\exp\left(\sum_{\phi\in \Phi} \Bigg(\phi( x^*) - r_{\phi}\log\left( 1+ \frac{L}{\lambda M_{\phi}}\phi( x^*)\right)\Bigg) \right)
\Bigg)^{-1}\Bigg]
\end{align*}

Because $f(x) = \frac{1}{x}$ is a convex function, by Jensen's inequality it follows
\begin{align*}
  \frac{\pi( x) T( x^*,  x)}
  {\pi( x) G( x^*,  x)}
&\geq
\mathbf{E}\Bigg[\max \Bigg(
\exp\left(\sum_{\phi\in \Phi} \Bigg(\phi( x) - r_{\phi}\log\left( 1+ \frac{L}{\lambda M_{\phi}} \phi( x) \right)\Bigg) \right),
\\&\hspace{2em}\exp\left(\sum_{\phi\in \Phi}\Bigg( \phi( x^*) - r_{\phi}\log\left(1+ \frac{L}{\lambda M_{\phi}}\phi( x^*)\right) \Bigg)\right)
\Bigg)\Bigg]^{-1}
\end{align*}
We have that the maximum of the product is less than the product of maximum, therefore
\begin{align*}
  \frac{\pi( x) T( x^*,  x)}
  {\pi( x) G( x^*,  x)}
&\geq
\prod_{\phi\in\Phi}\mathbf{E}\Bigg[\max \Bigg(
\exp \Bigg(\phi( x) - r_{\phi}\log\left( 1+ \frac{L}{\lambda M_{\phi}} \phi( x) \right)\Bigg) ,
\\&\hspace{2em}\exp\Bigg( \phi( x^*) - r_{\phi}\log\left(1+ \frac{L}{\lambda M_{\phi}}\phi( x^*)\right) \Bigg)
\Bigg)\Bigg]^{-1}
\end{align*}
Since $\max(A,B)\leq A + B$ when $A$ and $B$ are positive, it follows
\begin{align*}
  \frac{\pi( x) T( x^*,  x)}
  {\pi( x) G( x^*,  x)}
&\geq
\prod_{\phi\in\Phi}\mathbf{E}\Bigg[
\exp \Bigg(\phi( x) - r_{\phi}\log\left( 1+ \frac{L}{\lambda M_{\phi}} \phi( x) \right)\Bigg) +
\\&\hspace{2em}\exp\Bigg( \phi( x^*) - r_{\phi}\log\left(1+ \frac{L}{\lambda M_{\phi}}\phi( x^*)\right)  \Bigg)\Bigg]^{-1}
\end{align*}
$\mathbf{E}\Bigg[
\exp \Bigg( - r_{\phi}\log\left( 1+ \frac{L}{\lambda M_{\phi}} \phi( x) \right)\Bigg)\Bigg]$ is the moment generating function of the Poisson random variable $r_{\phi}$ evaluated at
\[
t = -\log\left( 1+ \frac{L}{\lambda M_{\phi}} \phi( x) \right)
\]
We know that
\begin{align*}
\mathbf{E}\exp(r_{\phi}t) 
&= 
\exp\left( \frac{\lambda M_{\phi}}{L}\left(\exp(t) - 1\right)\right)\\
\end{align*}
Therefore,
\begin{align*}
\mathbf{E}\Bigg[
\exp \Bigg( - r_{\phi}\log\left( 1+ \frac{L}{\lambda M_{\phi}} \phi( x) \right)\Bigg)\Bigg]
&= 
\exp\left(-\frac{\phi(x)}{1+\frac{L}{\lambda M_{\phi}}\phi(x)}\right)
\end{align*}
Substituting this into the original expression produces
\begin{align*}
  \frac{\pi( x) T( x^*,  x)}
  {\pi( x) G( x^*,  x)}
&\geq
\left[2\prod_{\phi\in\Phi}\exp \Bigg(-\frac{\phi(x)}{1+\frac{L}{\lambda M_{\phi}}\phi(x)} + \phi(x)\Bigg)\right]^{-1}\\
&\geq
\left[2\prod_{\phi\in\Phi}\exp(M_{\phi})\exp \Bigg(-\frac{1}{1+\frac{L}{\lambda }} + 1\Bigg)\right]^{-1}\\
&=
\left[2\prod_{\phi\in\Phi}\exp(M_{\phi})\exp \Bigg(\frac{L}{\lambda + L} \Bigg)\right]^{-1}\\
&=
\left[2\exp \Bigg(\frac{L^2}{\lambda + L} \Bigg)\right]^{-1}\\
&=
\frac{1}{2}\exp \Bigg(-\frac{L^2}{\lambda + L} \Bigg)
\end{align*}

From Dirichlet form argument, we get
  \[
    \bar{\gamma}
    \ge
    \frac{1}{2}\exp\left( - \frac{L^2}{\lambda + L}\right)\cdot\gamma.
  \]
\end{proof}

\subsection{Additional Experiment: Poisson-MH on Truncated Gaussian Mixture}
We test Poisson-MH on the truncated Gaussian mixture as in Section \ref{sec:gmm}. The proposal is $q(x^*|x)=\mathcal{N}(x, 0.45^2 I)$. We set $\lambda=500$. The estimated density is in Figure \ref{fig:mog-pmh} which is very close to the true density. This demonstrates the effectiveness of Poisson-MH and the general applicability of Poisson-minibatching method.

\begin{figure*}[h]
\centering
\includegraphics[width=6.5cm]{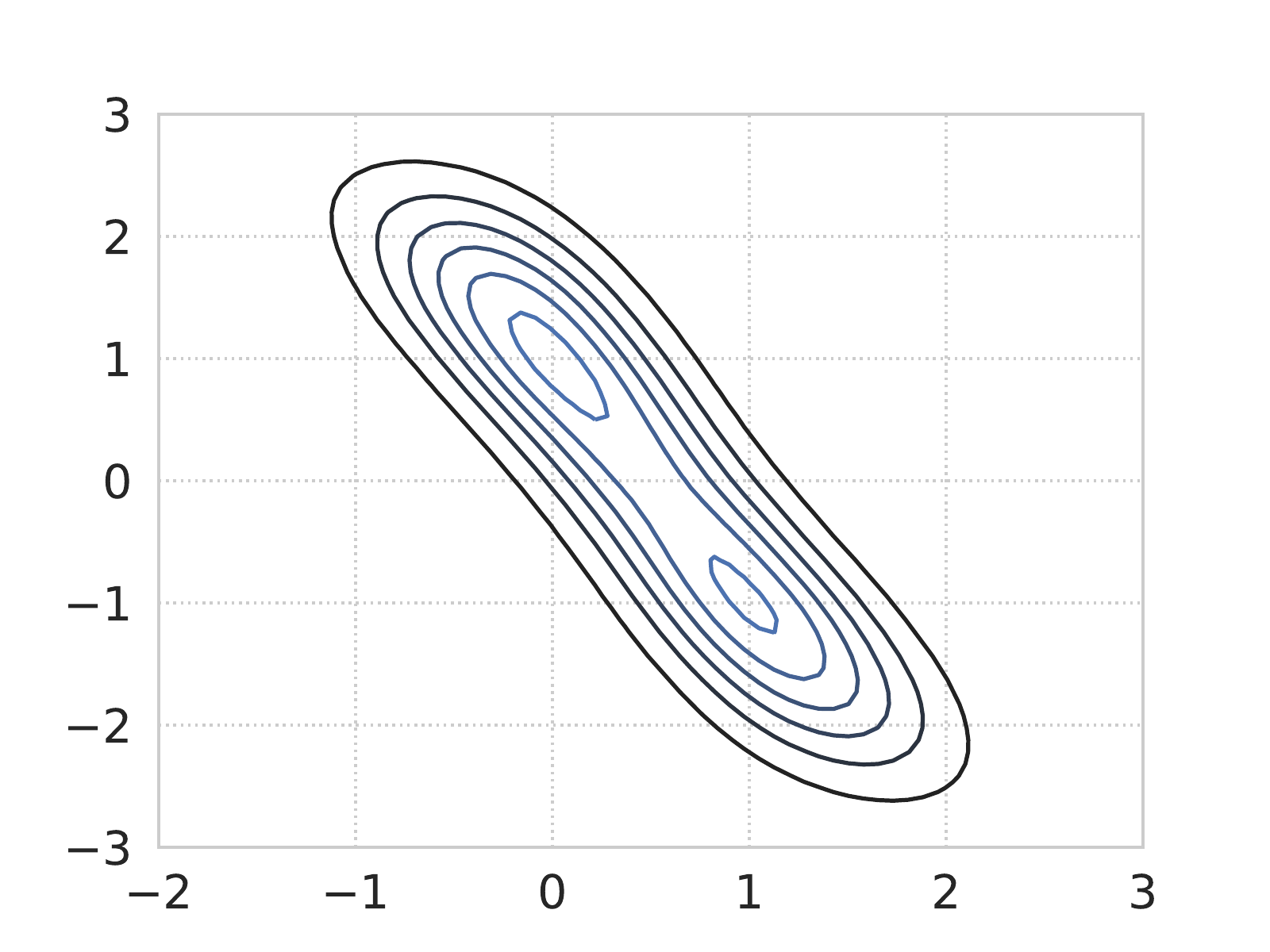}
\caption{The estimated density of Poisson-MH on a truncated Gaussian mixture model.}\label{fig:mog-pmh}
\end{figure*}

\section{Extended Results about Chebyshev Interpolants}\label{sec:chebyext}

In \citet{trefethen2013approximation}, Theorem 8.2 proves bounds on the error of a Chebyshev interpolant on the interval $[-1,1]$.
However, in order to apply this theorem to a second Chebyshev interpolant that is a function of the first, we would need to bound the magnitude of that function \emph{on a Bernstein ellipse}.
To do this, we need the following extended version of Theorem 8.2, which bounds the error not only on the interval $[-1,1]$ but more generally on a Bernstein ellipse.

\begin{lemma}
\label{thm:chebyext}
Assume $U: \mathbb{C} \rightarrow \mathbb{C}$ is analytic in the
open Bernstein ellipse $B([-1, 1], \rho)$, where the Bernstein ellipse is a region in the complex plane bounded by an ellipse with foci at $\pm 1$ and semimajor-plus-semiminor axis length $\rho > 1$. If for all $x \in B([-1, 1], \rho)$, $|U(x)| \leq V$ for some
constant $V > 0$, then for any constant $1 < \sigma < \rho$, the error of the Chebyshev interpolant on the smaller Bernstein ellipse $B([-1,1], \sigma)$ is bounded by
\begin{align*}
| \tilde{U}(x) - U(x) | \leq \frac{4V}{\rho / \sigma - 1} \cdot \left( \frac{\rho}{\sigma} \right)^{-m}.
\end{align*}
\end{lemma}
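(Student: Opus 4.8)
The plan is to mirror the proof of Theorem~\ref{thm:cheby} (Theorem 8.2 of \citet{trefethen2013approximation}), but to carry out the final bounding step on the ellipse $B([-1,1],\sigma)$ rather than on the real interval $[-1,1]$. First I would expand $U$ in its Chebyshev series $U = \sum_{k=0}^\infty a_k T_k$, which converges since $U$ is analytic on $B([-1,1],\rho)$, and recall the standard coefficient bound obtained from the hypothesis $|U| \le V$ there, namely $|a_k| \le 2V\rho^{-k}$ for every $k \ge 1$. I would then invoke the aliasing description of the degree-$m$ Chebyshev interpolant, which writes the interpolation error as $U(x) - \tilde U(x) = \sum_{k=m+1}^\infty a_k\bigl(T_k(x) - \tilde T_k(x)\bigr)$, where $\tilde T_k$ is the degree-$\le m$ polynomial interpolating $T_k$ at the nodes; by the aliasing identity one has $\tilde T_k = \pm T_{k'}$ for some index $0 \le k' \le m$.

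The new ingredient, relative to Trefethen's argument, is to bound the two Chebyshev polynomials appearing in the error \emph{on the smaller ellipse}. Using the parametrization $z = \tfrac12(w + w^{-1})$ with $|w| = \sigma$ on the boundary of $B([-1,1],\sigma)$, one has $T_j(z) = \tfrac12(w^j + w^{-j})$, so $|T_j(z)| \le \tfrac12(\sigma^j + \sigma^{-j}) \le \sigma^j$ there, and by the maximum-modulus principle the same bound holds throughout the closed ellipse. Applying this to $T_k$ (giving $\le \sigma^k$) and to $\tilde T_k = \pm T_{k'}$ (giving $\le \sigma^{k'} \le \sigma^m \le \sigma^k$, since $k' \le m < k$), I obtain $|T_k(z) - \tilde T_k(z)| \le 2\sigma^k$ for all $z \in B([-1,1],\sigma)$ and all $k > m$.

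Combining these pieces, the error on $B([-1,1],\sigma)$ reduces to a single geometric series,
\[
|\tilde U(z) - U(z)| \le \sum_{k=m+1}^\infty |a_k|\cdot 2\sigma^k \le 4V \sum_{k=m+1}^\infty (\sigma/\rho)^k = 4V\,\frac{(\sigma/\rho)^{m+1}}{1 - \sigma/\rho},
\]
and a short rearrangement rewrites the right-hand side as $\tfrac{4V}{\rho/\sigma - 1}(\rho/\sigma)^{-m}$, which is exactly the claimed bound. (Setting $\sigma = 1$ recovers Theorem~\ref{thm:cheby}.)

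I expect the main obstacle to be justifying the aliasing representation cleanly, that is, making precise that $\tilde T_k = \pm T_{k'}$ with $k' \le m$ and that the error series converges on the ellipse, since this is the point at which we depart from quoting Theorem~\ref{thm:cheby} as a black box. Once that representation is in hand, the ellipse bound on $|T_k|$ via maximum modulus and the geometric summation are routine. A secondary point to verify is that the coefficient bound $|a_k| \le 2V\rho^{-k}$ and the tail series converge uniformly on $\overline{B([-1,1],\sigma)}$, which holds precisely because $\sigma/\rho < 1$.
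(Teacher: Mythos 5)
Your proposal is correct and follows essentially the same route as the paper's proof: the coefficient bound $|a_k| \le 2V\rho^{-k}$ from Trefethen's Theorem 8.1, the aliasing representation of the interpolation error, the Joukowski-parametrization bound $|T_j(z)| \le \sigma^j$ on the smaller ellipse, and the concluding geometric series. The only cosmetic difference is that you invoke the maximum-modulus principle to pass from the boundary of $B([-1,1],\sigma)$ to its interior, whereas the paper parametrizes interior points directly with $\sigma^{-1} \le |z| \le \sigma$; both are fine.
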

\begin{proof}
This proof is essentially identical to that of Theorem 8.2 in \citet{trefethen2013approximation}, except that the error is bounded in a Bernstein ellipse rather than over only the real interval $[-1,1]$.

First, note that one parameterization of the boundary of the Bernstein ellipse with parameter $\rho$ is
\[
    \left\{ \frac{z + z^{-1}}{2} \middle| z \in \mathbb{C}, \; \Abs{z} = \rho \right\},
\]
and the open ellipse itself can be written as
\[
    B([-1, 1], \rho) = \left\{ \frac{z + z^{-1}}{2} \middle| z \in \mathbb{C}, \; \rho^{-1} \le \Abs{z} \le \rho \right\}.
\]
Now, Theorem 8.1 from \citet{trefethen2013approximation} states that the Chebyshev coefficients of a function that satisfies the conditions of this theorem (boundedness and analyticity in a Bernstein ellipse) are bounded by $\Abs{a_0} \le V$ and
\[
    \Abs{ a_k } \le 2 V \rho^{-k}, \; k \ge 1.
\]
That is, for $a_k$ bounded in this way,
\[
    U(x) = \sum_{k=0}^{\infty} a_k T_k(x)
\]
at least for all $x$ in the $\rho$-Bernstein ellipse on which $f$ is analytic.
(While \citet{trefethen2013approximation} only states explicitly that this holds for $x \in [-1,1]$, the fact that it also holds on the rest of the Bernstein ellipse follows directly from the fact that both sides of the equation are analytic over that region, using the identity theory for holomorphic functions.)
Formula (4.9) from \citet{trefethen2013approximation} states that
\[
    U(x) - \tilde U_m(x) 
    =
    \sum_{k=m+1}^{\infty} a_k \left( T_k(x) - T_{l(k,m)}(x) \right)
\]
where $\tilde U_m$ denotes the degree-$m$ Chebyshev interpolant, and
\[
    l(k,m) = \Abs{ \left( (k + m - 1) \bmod 2m \right) - (m - 1) }.
\]
Notice in particular that it always holds that $l(k,m) \le m+1$.
Now, for $x$ inside the Bernstein ellipse $B([-1, 1], \sigma)$, there will always exist a $z \in \mathbb{C}$ such that $\sigma^{-1} \le \Abs{z} \le \sigma$ and
\[
    x = \frac{z + z^{-1}}{2}.
\]
For such an $x$, and for any $k$,
\[
   \Abs{T_k(x)} 
   = 
   \Abs{ T_k\left( \frac{z + z^{-1}}{2} \right) }
   = 
   \Abs{ \frac{z^k + z^{-k}}{2} }
   = 
   \frac{\Abs{z}^k + \Abs{z}^{-k}}{2}
   \le
   \sigma^k,
\]
where the second equality is a well-known property of the Chebyshev polynomials.
It follows that, for any $x$ in this Bernstein ellipse,
\begin{align*}
    \Abs{ U(x) - \tilde U_m(x) }
    &=
    \Abs{ \sum_{k=m+1}^{\infty} a_k \left( T_k(x) - T_{l(k,m)}(x) \right) }
    \\ &\le
    \sum_{k=m+1}^{\infty} \Abs{a_k} \cdot \Abs{ T_k(x) - T_{l(k,m)}(x) }
    \\ &\le
    \sum_{k=m+1}^{\infty} 2 V \rho^{-k} \cdot \left( \sigma^k + \sigma^{l(k,m)} \right)
    \\ &\le
    4 V \sum_{k=m+1}^{\infty} \rho^{-k} \sigma^k
    \\ &\le
    4 V \left(\frac{\sigma}{\rho} \right)^{m+1} \sum_{k=0}^{\infty} \left(\frac{\sigma}{\rho} \right)^k
    \\ &\le
    4 V \left(\frac{\sigma}{\rho} \right)^{m+1}  \frac{1}{1 - \frac{\sigma}{\rho}}
    \\ &\le
    4 V \left(\frac{\sigma}{\rho} \right)^{m}  \frac{1}{\rho / \sigma - 1}.
\end{align*}
This is the desired result.
\end{proof}

\end{document}